\newcommand{\describe}[3][0pt]{\hspace*{.6em}\underbracket[0.5pt][0.5pt]{#2\hspace*{#1}}_\text{#3}}
\definecolor{tablelightblue}{HTML}{E1E8FF}
\newtheorem{theorem}{Theorem}[section]
\newtheorem{lemma}[theorem]{Lemma}
\newtheorem{corollary}[theorem]{Corollary}
\newtheorem{definition}[theorem]{Definition}
\newtheorem{assumption}[theorem]{Assumption}
\newtheorem{example}[theorem]{Example}
\title{RoPINN: Region Optimized \\
Physics-Informed Neural Networks}
\author{
  Haixu Wu,
  Huakun Luo,
  Yuezhou Ma,
  Jianmin Wang,
  Mingsheng Long\textsuperscript{\Letter} \\
  School of Software, BNRist, Tsinghua University, China \\
  {\small \texttt{\{wuhx23,luohk19,mayz20\}@mails.tsinghua.edu.cn}, \texttt{\{jimwang,mingsheng\}@tsinghua.edu.cn}}
}
\begin{document}

\maketitle

\begin{abstract}
Physics-informed neural networks (PINNs) have been widely applied to solve partial differential equations (PDEs) by enforcing outputs and gradients of deep models to satisfy target equations. Due to the limitation of numerical computation, PINNs are conventionally optimized on finite selected points. However, since PDEs are usually defined on continuous domains, solely optimizing models on scattered points may be insufficient to obtain an accurate solution for the whole domain. To mitigate this inherent deficiency of the default scatter-point optimization, this paper proposes and theoretically studies a new training paradigm as \emph{region optimization}. Concretely, we propose to extend the optimization process of PINNs from isolated points to their continuous neighborhood regions, which can theoretically decrease the generalization error, especially for hidden high-order constraints of PDEs. A practical training algorithm, \textbf{R}egion \textbf{O}ptimized \textbf{PINN} (RoPINN), is seamlessly derived from this new paradigm, which is implemented by a straightforward but effective Monte Carlo sampling {method}. By calibrating the sampling process into trust regions, RoPINN {finely} balances {optimization} and generalization error. Experimentally, RoPINN consistently boosts the performance of diverse PINNs on a wide range of PDEs without extra backpropagation or gradient calculation. {Code is available at this repository: \url{https://github.com/thuml/RoPINN}.}
\end{abstract}

\section{Introduction}

Solving partial differential equations (PDEs) is the key problem in extensive areas, covering both engineering and scientific research \citep{roubivcek2013nonlinear,solin2005partial,Wazwaz2002PartialDE}. Due to the inherent complexity of PDEs, they usually cannot be solved analytically~\cite{grossmann2007numerical}. Thus, a series of numerical methods have been widely explored, such as spectral methods \cite{kopriva2009implementing,solin2005partial} or finite element methods \cite{dhatt2012finite,fornberg1998practical}. However, these numerical methods usually suffer from huge computational costs and can only obtain an approximate solution on discretized meshes \cite{li2021fourier,umetani2018learning}. Given the impressive nonlinear modeling capability of deep models~\cite{Devlin2019BERTPO,He2016DeepRL}, they have also been applied to solve PDEs, where physics-informed neural networks (PINNs) are proposed and have emerged as a promising and effective surrogate tool for numerical methods~\cite{wang2023scientific,Raissi2019PhysicsinformedNN,raissi2018deep}. By formalizing PDE constraints (i.e.~equations, initial and boundary conditions) as objective functions, the outputs and gradients of PINNs will be optimized to satisfy a certain PDE during training \cite{Raissi2019PhysicsinformedNN}, which successfully instantiates the PDE solution as a deep model.

Although deep models have been proven {to enjoy the} universal approximation capability, the actual optimization process of PINNs still faces thorny challenges \cite{fuks2020limitations,krishnapriyan2021characterizing,raissi2018deep}. As a basic topic of PINNs, the optimization problem has been widely explored from various aspects \cite{karniadakis2021physics,wang2023scientific}. Previous methods attempt to mitigate this problem by using novel architectures to enhance model capacity \cite{zhao2023pinnsformer,bu2021quadratic,wong2022learning}, reweighting multiple loss functions for more balanced convergence \cite{Wang2020WhenAW}, resampling data to improve important areas \cite{wu2023comprehensive} or developing new optimizers to tackle the rough loss landscape \cite{yu2022gradient,rathore2024challenges}, etc. 

\begin{figure*}[h]
\begin{center}
\centerline{\includegraphics[width=\textwidth]{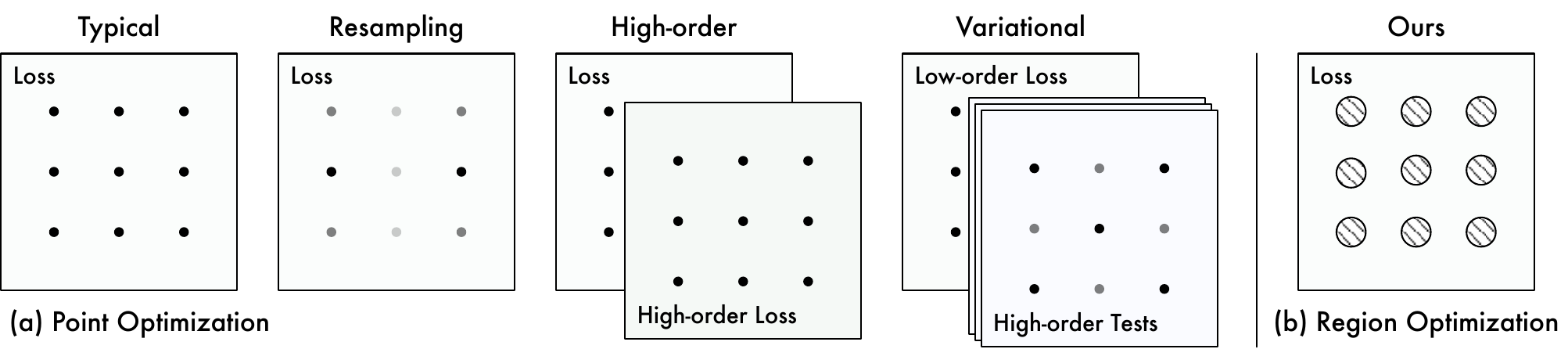}}
  \caption{Comparison between previous methods and ours. Previous point optimization methods train {PINNs} via the loss on selected points, which is different from our region optimization {paradigm}.}
  \label{fig:intro}
\end{center}
\vspace{-20pt}
\end{figure*}

Orthogonal to the above-mentioned methods, this paper focuses on a foundational problem, which is the objective function of PINNs. We notice that, due to the limitation of numerical calculation, it is almost impossible to optimize the loss function in the complete continuous domain. Thus, the conventional PINN loss is only defined on a series of selected points \cite{Raissi2019PhysicsinformedNN} (Figure \ref{fig:intro}). However, the scatter-point loss function obviously mismatches the PDE-solving objective, which is approximating the solution on a continuous domain. This mismatch may fundamentally limit the performance of PINNs. Several prior works also try to improve the canonical PINN loss function, which can be roughly categorized into the following two paradigms. One paradigm enhances the optimization by adding high-order derivatives of PDEs as a regularization term to the loss function~\cite{yu2022gradient}. However, calculating high-order gradients is numerically unstable and time-consuming, even with automatic differentiation in well-{established} deep learning frameworks~\cite{jax2018github,Paszke2019PyTorchAI}. The other paradigm attempts to bypass the high-order derivative calculation in the PINN loss function with variational formulations~\cite{kharazmi2019variational,kharazmi2021hp,khodayi2020varnet}. Nevertheless, these variational methods still face difficulties in calculating the integral of deep models and will bring extra computations, thereby mainly limited to very shallow models or relying on massive sampled quadrature points and elaborative test functions \cite{Hao2022PhysicsInformedML,zang2020weak}.

This paper proposes and studies a new training paradigm for PINNs as \emph{region optimization}. As shown in Figure \ref{fig:intro}, we extend the optimization process from selected scatter points into their neighborhood regions, which can theoretically decrease the generalization error on the whole domain, especially for hidden high-order constraints of PDEs. In practice, we seamlessly transform this paradigm into a practical training algorithm, named \textbf{R}egion \textbf{O}ptimized \textbf{PINN} (RoPINN), which is implemented through simple but effective Monte Carlo sampling. In addition, to control the estimation error, we adaptively adjust the sampling region size according to the gradient variance among successive training iterations, which can constrain the {sampling-based optimization} into a neighborhood with low-variance loss gradients, {namely} \emph{trust region}. In experiments, RoPINN demonstrates consistent {and sharp} improvement for diverse PINN backbones on extensive PDEs (19 different tasks) without any extra gradient calculation. Our contributions are summarized as follows:
\begin{itemize}
    \item To mitigate the inherent deficiency of conventional PINN optimization, we propose the \emph{region optimization} paradigm, which extends the scatter-point optimization to neighborhood regions {that} theoretically benefits both generalization {and} high-order constraints {satisfaction}.
    \item We present RoPINN for PINN training based on Monte Carlo sampling, which can effectively accomplish the region optimization. A trust region calibration strategy is {proposed} to reduce the gradient estimation error caused by sampling for more trustworthy optimization.
    \item RoPINN can consistently improve the performance of various PINN backbones (i.e.~canonical and Transformer-based) on a wide range of PDEs without extra gradient calculation.
\end{itemize}

\section{Preliminaries}\label{sec:Preliminaries}
A PDE with equation constraints, initial (ICs) and boundary conditions (BCs) can be formalized as
\begin{equation}\label{equ:pde}
\mathcal{F}(u)(\boldsymbol{x})=0, \boldsymbol{x}\in\Omega;\ \mathcal{I}(u)(\boldsymbol{x})=0, \boldsymbol{x}\in\Omega_{0};\ \mathcal{B}(u)(\boldsymbol{x})=0, \boldsymbol{x}\in\partial\Omega,
\end{equation}
where $\mathcal{F},\mathcal{I},\mathcal{B}$ denote the PDE equations, ICs and BCs respectively \cite{evans2010partial}. $u:\mathbb{R}^{d+1}\to\mathbb{R}^{m}$ is the target PDE solution. $\boldsymbol{x}\in\Omega\subseteq\mathbb{R}^{d+1}$ represents the input coordinate, which is usually a composition of spatial and temporal positions, namely $\boldsymbol{x}=(x_1, \cdots, x_d, t)$. $\Omega_{0}$ corresponds to the $t=0$ situation.

Correspondingly, the PINN loss function (\emph{point optimization}) is typically defined as follows \cite{karniadakis2021physics,Raissi2019PhysicsinformedNN}:
\begin{equation}\label{equ:pinn_loss}
\begin{split}
    \mathcal{L}(u_{\theta})=&\frac{\lambda_{\Omega}}{N_{\Omega}}\sum_{{i=1}}^{N_{\Omega}}\|\mathcal{F}(u_{\theta})(\boldsymbol{x}_{{i}})\|^2+\frac{\lambda_{\Omega_0}}{N_{\Omega_0}}\sum_{{i=1}}^{N_{\Omega_0}}\|\mathcal{I}(u_{\theta})(\boldsymbol{x}_{{i}})\|^2+\frac{\lambda_{\partial\Omega}}{N_{\partial\Omega}}\sum_{{i=1}}^{N_{\partial\Omega}}\|\mathcal{B}(u_{\theta})(\boldsymbol{x}_{{i}})\|^2,
\end{split}
\end{equation}
where $u_{\theta}$ represents the neural network {parameterized by} $\theta$. $N_{\Omega},N_{\Omega_0},N_{\partial\Omega}$ are the numbers of sampled points in $\Omega,\Omega_{0},\partial\Omega$ respectively. $\lambda_{\ast}$ is the corresponding loss weight. Note that there is an additional data loss term in Eq.~\eqref{equ:pinn_loss} when we can {access} the ground truth of some points \cite{Raissi2019PhysicsinformedNN}. Since we mainly focus on PDE constraints throughout this paper, we omit the data loss term in the above formalization, which is still maintained in our experiments. In this paper, we try to {improve PINN solving by defining} a new surrogate loss in place of the canonical definition of PINN loss in Eq.~\eqref{equ:pinn_loss}. {In contrast,} the {relevant} literature mainly improves the objective function in two {different} directions {as follows}. Appendix \ref{appdix:relative_work} provides a more comprehensive discussion on other relative topics.

\paragraph{High-order regularization} The first direction is to add the high-order constraints of PDEs as regularization terms to the loss function \cite{yu2022gradient}. Specifically, since PDEs are sets of identical relations, suppose that the solution $u$ is a $K$-order differential function, Eq.~\eqref{equ:pde} can naturally derive a branch of high-order equations, where the $k$-th derivative for the $j$-th dimension is $\frac{\partial^k}{\partial x_{j}^k}\mathcal{F}(u)(\boldsymbol{x})=0,$ $\boldsymbol{x}\in\Omega, 1\leq j\leq (d+1), 1\leq k \leq K$, corresponding to the following regularization:
\begin{equation}\label{equ:regularization_loss}
\begin{split}
\mathcal{L}^{\mathrm{reg}}_{k,j}(u_{\theta})=\frac{\lambda_{k,j}}{N_{k,j}}\sum_{{i=1}}^{N_{k,j}}\left\|\frac{\partial^k}{\partial x_{j}^k}\mathcal{F}(u_{\theta})(\boldsymbol{x}_{{i}})\right\|^2,
\end{split}
\end{equation}
where $N_{k,j}$ denotes the number of sampled points with weight $\lambda_{k,j}$. Although this design can explicitly enhance the model performance in satisfying high-order constraints, the calculation of high-order derivatives can be extremely time-consuming and unstable \cite{sirignano2018dgm}. Thus, in practice, the previous methods \cite{yu2022gradient,maddu2022inverse} only consider a small value of $K$. In the next sections, we will prove that RoPINN can naturally incorporate high-order constraints. Besides, as presented in Eq.~\ref{equ:regularization_loss}, this paradigm still optimizes PINNs on scattered points, while this paper extends optimization to neighborhood regions.

\paragraph{Variational formulation} As a classical tool in traditional PDE solvers, the variational formulation is widely used to reduce the smoothness requirements of the {approximate} solution \cite{tonti1984variational}. Concretely, the target PDEs are multiplied with a set of predefined test functions $\{v_{1},\cdots,v_{M}\}$ and then the PDE equation term of the loss function is transformed as follows
\cite{kharazmi2019variational,kharazmi2021hp,khodayi2020varnet}:
\begin{equation}\label{equ:variantional_loss}
\begin{split}
\mathcal{L}^{\mathrm{equ}}(u_{\theta}) &=
\frac{1}{M}\sum_{k=1}^{M}\Big\|\Big\langle\mathcal{F}^{(x_{j})}(u_{\theta})(\boldsymbol{x}), v_{k}(\boldsymbol{x})\Big\rangle\bigg|_{\partial_{{(x_{j})}}\Omega}-\int_{\Omega}\Big\langle\mathcal{F}^{(x_{j})}(u_{\theta})(\boldsymbol{x}), \frac{\partial}{\partial x_{j}}v_{k}(\boldsymbol{x})\Big\rangle\mathrm{d}\boldsymbol{x}\Big\|^2,
\end{split}
\end{equation}
where $\mathcal{F}^{(x_{j})}$ defines the antiderivative of $\mathcal{F}$ on the $j$-th dimension.
Using integrals by parts, the derivative operation in $\mathcal{F}$ is transferred to test functions $\{v_{k}\}_{k=1}^{M}$, thereby able to bypass high-order derivatives. However, the integral on $\Omega$ is still hard to compute, which requires massive quadrature points for approximation \cite{kharazmi2019variational}. Besides, test function selection requires extra manual effort and will bring $M$ times computation costs \cite{zang2020weak}. In contrast, RoPINN does not require test functions and will not bring extra gradient calculations. Also, RoPINN employs a trust region calibration strategy to limit the optimization in low-variance regions, which can control the estimation error of sampling.

\section{Method}\label{sec:method}

As aforementioned, we propose the region optimization paradigm to extend the optimization from scatter points to a series of corresponding neighborhood regions. This section will first present the region optimization and its theoretical benefits in both reducing generalization error and satisfying high-order PDE constraints. Then, we implement RoPINN in a simple but effective sampling-based way, along with a trust region calibration strategy to control the sampling estimation error.

\subsection{Region Optimization}\label{sec:region_optimize}

For clarity, we record the point optimization loss defined in Eq.~\eqref{equ:pinn_loss} at $\boldsymbol{x}$ as $\mathcal{L}(u_\theta, \boldsymbol{x})$, where $\boldsymbol{x}\in\Omega\cup{\Omega_0\cup}\partial\Omega$ denotes the point selected from inner domain, initial state or boundaries. We adopt $\mathcal{S}$ to denote the finite set of selected points. Then Eq.~\eqref{equ:pinn_loss} can be simplified as $\mathcal{L}(u_{\theta},\mathcal{S})=\frac{1}{|\mathcal{S}|}\sum_{\boldsymbol{x}\in\mathcal{S}}\mathcal{L}(u_\theta, \boldsymbol{x})$.

{Correspondingly}, we define the objective function of our \emph{region optimization} {innovatively as}
\begin{equation}\label{equ:region_opt}
\begin{split}
\mathcal{L}_{{r}}^{\mathrm{region}}(u_{\theta}, \mathcal{S})=\frac{1}{|\mathcal{S}|}\sum_{\boldsymbol{x}\in\mathcal{S}}\mathcal{L}_{{r}}^{\mathrm{region}}(u_{\theta},\boldsymbol{x})=\frac{1}{|\Omega_{r}|\times |\mathcal{S}|}\sum_{\boldsymbol{x}\in\mathcal{S}}\int_{\Omega_{r}}\mathcal{L}(u_{\theta},\boldsymbol{x}+\boldsymbol{\xi})\mathrm{d}\boldsymbol{\xi},
\end{split}
\end{equation}
where $\Omega_{r}=[0, r]^{(d+1)}$ represents the extended neighborhood region with hyperparameter $r$. Although this definition seems to require more sampling points than point optimization, we can develop an efficient algorithm to implement it without adding sampling points (see next section). Besides, this formalization also provides us with a convenient theoretical analysis framework. Next, we will discuss the theoretical properties of {the} two optimization paradigms. All proofs are in Appendix \ref{appdix:proof}.

\paragraph{Generalization bound} Here we discuss the \emph{generalization error in expectation} \cite{hardt2016train}, which is independent of the point selection, thereby quantifying the error of PINN optimization more rigorously.

\begin{definition}\label{definition:gen} The generalization error in expectation of model trained {on} dataset $\mathcal{S}$ is defined as
\begin{equation}\label{equ:gen_error}
    \begin{split}
\mathcal{E}_{\mathrm{gen}}=\big|\mathbb{E}_{\mathcal{S}, \mathcal{A}}\left[{\mathcal{L}\left(u_{\mathcal{A}(\mathcal{S})}, \Omega\right)} - \mathcal{L}\left(u_{\mathcal{A}(\mathcal{S})}, \mathcal{S}\right)\right]\big|,
    \end{split}
\end{equation}
where $\mathcal{A}$ {denotes} the training {algorithm} and $\mathcal{A}(\mathcal{S})$ represents the optimized model {parameters}.
\end{definition}

\begin{assumption}\label{assump:lipschitz} The loss function $\mathcal{L}$ is $L$-Lipschitz and $\beta$-smooth with respect to model parameters, which means that $\forall \boldsymbol{x}\in\Omega$ the following {inequalities} hold:
\begin{equation}\label{equ:assumption}
\begin{split}
\|\mathcal{L}(u_{\theta_1}, \boldsymbol{x})-\mathcal{L}(u_{\theta_2}, \boldsymbol{x})\|&\leq L\|\theta_1-\theta_2\|,\ \|\nabla_{\theta}\mathcal{L}(u_{\theta_1}, \boldsymbol{x})-\nabla_{\theta}\mathcal{L}(u_{\theta_2}, \boldsymbol{x})\|\leq\beta \|\theta_1-\theta_2\|.
\end{split}
\end{equation}
\end{assumption}

\begin{theorem}[\textbf{Point optimization}]\label{theorem:point_opt} Suppose that the loss function $\mathcal{L}$ is $L$-Lipschitz-$\beta$-smooth for $\theta$. If we run stochastic gradient {descent} with step size $\alpha_{t}$ at the $t$-th step for $T$ iterations, we have that:

(1) If $\mathcal{L}$ is convex for $\theta$ and $\alpha_{t}\leq\frac{2}{\beta}$, then $\mathcal{E}_{\mathrm{gen}}\leq \frac{2L^2}{|\mathcal{S}|}\sum_{t=1}^{T}\alpha_t$ (proved by \cite{hardt2016train,xiao2022stability}).

(2) If $\mathcal{L}$ is bounded by a constant $C$ for all $\theta, \boldsymbol{x}$ and is non-convex for $\theta$ with monotonically non-increasing step sizes $\alpha_{t}\leq \frac{1}{\beta t}$, then $\mathcal{E}_{\mathrm{gen}}\leq \frac{C}{|\mathcal{S}|}+\frac{2L^2(T-1)}{\beta(|\mathcal{S}|-1)}$ (tighter bound than \cite{hardt2016train,xiao2022stability}). 
\end{theorem}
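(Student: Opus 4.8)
The plan is to control $\mathcal{E}_{\mathrm{gen}}$ through \emph{uniform algorithmic stability}, following \cite{hardt2016train,xiao2022stability}. Recall that if the training algorithm $\mathcal{A}$ (here stochastic gradient descent) is $\epsilon$-uniformly stable --- i.e.\ for any two point sets $\mathcal{S},\mathcal{S}'$ differing in a single point, $\sup_{\boldsymbol{x}}\mathbb{E}_{\mathcal{A}}\big[\mathcal{L}(u_{\mathcal{A}(\mathcal{S})},\boldsymbol{x})-\mathcal{L}(u_{\mathcal{A}(\mathcal{S}')},\boldsymbol{x})\big]\le\epsilon$ --- then $\mathcal{E}_{\mathrm{gen}}\le\epsilon$, so it suffices to bound the stability of $T$ SGD steps. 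I would couple the two runs by giving them the same initialization and the same sequence of sampled indices, and track the parameter divergence $\Delta_t=\|\theta_t-\theta_t'\|$, starting from $\Delta_1=0$. At each iteration, with probability $1-1/|\mathcal{S}|$ the sampled point lies in $\mathcal{S}\cap\mathcal{S}'$ and the two updates use the same loss function, while with probability $1/|\mathcal{S}|$ they use the differing point, in which case $\Delta$ can grow by at most $\alpha_t$ times the sum of two gradient norms, each bounded by $L$ via Assumption~\ref{assump:lipschitz}, i.e.\ by at most $2\alpha_tL$.

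For part (1), convexity together with $\beta$-smoothness and $\alpha_t\le 2/\beta$ makes the gradient map $\theta\mapsto\theta-\alpha_t\nabla_{\theta}\mathcal{L}(u_\theta,\boldsymbol{x})$ \emph{non-expansive}, so on the ``same point'' event $\Delta$ does not increase. Taking expectations gives $\mathbb{E}[\Delta_{t+1}]\le\mathbb{E}[\Delta_t]+2\alpha_tL/|\mathcal{S}|$, and telescoping yields $\mathbb{E}[\Delta_{T}]\le\frac{2L}{|\mathcal{S}|}\sum_{t=1}^{T}\alpha_t$; converting divergence to loss gap by $L$-Lipschitzness gives $\epsilon\le\frac{2L^2}{|\mathcal{S}|}\sum_t\alpha_t$, which is exactly the stated bound and reproduces \cite{hardt2016train,xiao2022stability}.

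For part (2) convexity is gone, so the gradient map is only $(1+\alpha_t\beta)$-expansive and $\Delta$ grows multiplicatively. The key point is that the schedule $\alpha_t\le\frac{1}{\beta t}$ is precisely what keeps this tame: $\prod_{s\le t}(1+\alpha_s\beta)\le\prod_{s\le t}\frac{s+1}{s}=t+1$ grows only linearly. I would combine this with a short burn-in: on the low-probability event (at most $O(1/|\mathcal{S}|)$ by a union bound over the early steps) that the differing point is drawn very early, bound the loss gap crudely by the constant $C$ --- this produces the $\frac{C}{|\mathcal{S}|}$ term; on the complementary event $\Delta$ stays $0$ until the divergence occurs, after which the recursion $\mathbb{E}[\Delta_{t+1}]\le(1+\alpha_t\beta)\,\mathbb{E}[\Delta_t]+2\alpha_tL/|\mathcal{S}|$ applies. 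Dividing by the telescoping product $t+1$ and summing $\sum_t\frac{1}{t(t+1)}=1-\frac1T$ gives $\mathbb{E}[\Delta_T]=O\!\big(\tfrac{L(T-1)}{\beta|\mathcal{S}|}\big)$, and the standard replace-one accounting sharpens $|\mathcal{S}|$ to $|\mathcal{S}|-1$; Lipschitz conversion then yields $\mathcal{E}_{\mathrm{gen}}\le\frac{C}{|\mathcal{S}|}+\frac{2L^2(T-1)}{\beta(|\mathcal{S}|-1)}$. The improvement over \cite{hardt2016train,xiao2022stability} comes from retaining the full $1-\frac1T$ in the harmonic-type sum and from exploiting $\alpha_t\le\frac{1}{\beta t}$ directly rather than the looser $c/t$ schedule that forces an $O(T^{\beta c/(\beta c+1)})$ term.

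The main obstacle is the non-convex recursion: naively the product of expansion factors blows up, and the whole argument hinges on pairing the $(1+\alpha_t\beta)$-expansiveness with exactly the $\frac{1}{\beta t}$ step size, then doing the burn-in bookkeeping carefully enough to land the clean constants $\frac{C}{|\mathcal{S}|}$ and $\frac{T-1}{|\mathcal{S}|-1}$ instead of a messier polynomial-in-$T$ expression. The convex part is essentially bookkeeping once non-expansiveness is invoked.
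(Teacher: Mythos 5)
Your proposal follows essentially the same route as the paper's proof: bound $\mathcal{E}_{\mathrm{gen}}$ by the stability of SGD under a replace-one-point coupling, use non-expansiveness of the gradient update in the convex case, and $(1+\alpha_t\beta)$-expansiveness plus a burn-in event of probability at most $1/|\mathcal{S}|$ (charged the crude bound $C$) in the non-convex case. The only difference is bookkeeping in part (2): you resolve the recursion by exact telescoping against $\prod_{s\le t}(1+1/s)=t+1$ and $\sum_t\frac{1}{t(t+1)}$, which actually yields the slightly stronger denominator $|\mathcal{S}|$, whereas the paper bounds $1+x\le e^{x}$ and uses an integral comparison, incurring the factor $\frac{|\mathcal{S}|}{|\mathcal{S}|-1}$ that produces the stated $|\mathcal{S}|-1$.
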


\begin{lemma}\label{lemma:region_smooth} If $\mathcal{L}$ is bounded for all $\theta, \boldsymbol{x}$ and is convex, $L$-Lipschitz-$\beta$-smooth with respect to model parameters $\theta$, then $\mathcal{L}_{r}^{\mathrm{region}}$ is also bounded for all $\theta, \boldsymbol{x}$ and convex, $L$-Lipschitz-$\beta$-smooth for $\theta$.
\end{lemma}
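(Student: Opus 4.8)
The plan is to exploit the fact that $\mathcal{L}_{r}^{\mathrm{region}}(u_{\theta}, \boldsymbol{x})$ is just a normalized integral — hence a convex average — of the point losses $\mathcal{L}(u_{\theta}, \boldsymbol{x}+\boldsymbol{\xi})$ over $\boldsymbol{\xi}\in\Omega_r$, and that each of the four properties in the statement (boundedness, convexity, $L$-Lipschitzness, $\beta$-smoothness in $\theta$) is an inequality that survives integration against a probability measure. Writing $\mu$ for the normalized Lebesgue measure $\tfrac{1}{|\Omega_r|}\mathrm{d}\boldsymbol{\xi}$ on $\Omega_r$, we have $\mathcal{L}_{r}^{\mathrm{region}}(u_{\theta}, \boldsymbol{x}) = \int_{\Omega_r}\mathcal{L}(u_{\theta}, \boldsymbol{x}+\boldsymbol{\xi})\,\mathrm{d}\mu(\boldsymbol{\xi})$, so all claims reduce to pushing a pointwise (in $\boldsymbol{\xi}$) inequality through this integral. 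The same argument then applies verbatim to the aggregated loss $\mathcal{L}_{r}^{\mathrm{region}}(u_{\theta}, \mathcal{S})$, which is a further uniform average over $\boldsymbol{x}\in\mathcal{S}$.

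I would proceed property by property. For boundedness, $|\mathcal{L}(u_{\theta}, \cdot)|\le C$ pointwise gives $|\mathcal{L}_{r}^{\mathrm{region}}(u_{\theta}, \boldsymbol{x})|\le\int_{\Omega_r}C\,\mathrm{d}\mu = C$. For convexity, fix $\boldsymbol{\xi}$; since $\theta\mapsto\mathcal{L}(u_{\theta}, \boldsymbol{x}+\boldsymbol{\xi})$ is convex, for $\lambda\in[0,1]$ the integrand at $\lambda\theta_1+(1-\lambda)\theta_2$ is at most $\lambda\mathcal{L}(u_{\theta_1}, \boldsymbol{x}+\boldsymbol{\xi})+(1-\lambda)\mathcal{L}(u_{\theta_2}, \boldsymbol{x}+\boldsymbol{\xi})$, and integrating while using linearity of the integral yields convexity of $\mathcal{L}_{r}^{\mathrm{region}}(\cdot, \boldsymbol{x})$. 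For $L$-Lipschitzness, move the difference inside the integral, apply the integral triangle inequality, and then the $\boldsymbol{\xi}$-independent pointwise bound $\|\mathcal{L}(u_{\theta_1}, \boldsymbol{x}+\boldsymbol{\xi})-\mathcal{L}(u_{\theta_2}, \boldsymbol{x}+\boldsymbol{\xi})\|\le L\|\theta_1-\theta_2\|$, which integrates to $L\|\theta_1-\theta_2\|$.

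The one step that needs genuine care — which I treat as the crux — is $\beta$-smoothness, because it first requires that $\nabla_{\theta}$ and $\int_{\Omega_r}(\cdot)\,\mathrm{d}\mu$ commute, i.e.\ $\nabla_{\theta}\mathcal{L}_{r}^{\mathrm{region}}(u_{\theta}, \boldsymbol{x}) = \int_{\Omega_r}\nabla_{\theta}\mathcal{L}(u_{\theta}, \boldsymbol{x}+\boldsymbol{\xi})\,\mathrm{d}\mu(\boldsymbol{\xi})$. I would justify this by the standard Leibniz / dominated-convergence argument: the $L$-Lipschitz hypothesis forces $\|\nabla_{\theta}\mathcal{L}(u_{\theta}, \cdot)\|\le L$ almost everywhere, an integrable dominating function on the bounded region $\Omega_r$, while $\beta$-smoothness gives continuity of $\theta\mapsto\nabla_{\theta}\mathcal{L}(u_{\theta}, \boldsymbol{x}+\boldsymbol{\xi})$, so differentiation under the integral sign is valid. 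Once the interchange is in hand, $\beta$-smoothness of $\mathcal{L}_{r}^{\mathrm{region}}$ follows exactly as Lipschitzness did: $\|\nabla_{\theta}\mathcal{L}_{r}^{\mathrm{region}}(u_{\theta_1}, \boldsymbol{x})-\nabla_{\theta}\mathcal{L}_{r}^{\mathrm{region}}(u_{\theta_2}, \boldsymbol{x})\|\le\int_{\Omega_r}\|\nabla_{\theta}\mathcal{L}(u_{\theta_1}, \boldsymbol{x}+\boldsymbol{\xi})-\nabla_{\theta}\mathcal{L}(u_{\theta_2}, \boldsymbol{x}+\boldsymbol{\xi})\|\,\mathrm{d}\mu\le\beta\|\theta_1-\theta_2\|$. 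A minor technical point worth a one-line remark is that $\boldsymbol{x}+\boldsymbol{\xi}$ should remain in the domain on which Assumption~\ref{assump:lipschitz} is posed (or $\mathcal{L}$ is understood to be extended there) so that the pointwise bounds apply for every $\boldsymbol{\xi}\in\Omega_r$; granting that, the proof is just the four integral estimates above.
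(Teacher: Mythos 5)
Your proposal is correct and takes essentially the same route as the paper: both treat $\mathcal{L}_{r}^{\mathrm{region}}$ as a normalized integral of point losses and push each of the four inequalities through the integral (the paper verifies convexity via the monotone-gradient characterization rather than the Jensen-type definition, but this is immaterial). Your explicit dominated-convergence justification for interchanging $\nabla_{\theta}$ with the integral is a point the paper leaves implicit, so your write-up is if anything slightly more careful.
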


\begin{theorem}[\textbf{Region optimization}]\label{theorem:region_opt}
    Suppose that the point optimization loss function $\mathcal{L}$ is $L$-Lipschitz and $\beta$-smooth for $\theta$. If we run stochastic gradient {descent} with step size $\alpha_{t}$ for $T$ iterations based on region optimization loss $\mathcal{L}_{r}^{\mathrm{region}}$ in Eq.~\eqref{equ:region_opt}, the generalization error in expectation satisfies:

(1) If $\mathcal{L}$ is convex for $\theta$ and $\alpha_{t}\leq\frac{2}{\beta}$, then $\mathcal{E}_{\mathrm{gen}}\leq (1-\frac{|\Omega_r|}{|\Omega|})\frac{2L^2}{|\mathcal{S}|}\sum_{t=1}^{T}\alpha_t$.

(2) If $\mathcal{L}$ is bounded by a constant $C$ {for all $\theta, \boldsymbol{x}$} and is non-convex for $\theta$ with monotonically non-increasing step sizes $\alpha_{t}\leq \frac{1}{\beta t}$, then $\mathcal{E}_{\mathrm{gen}}\leq \frac{C}{|\mathcal{S}|}+\frac{2L^2(T-1)}{\beta(|\mathcal{S}|-1)}-JL(\frac{|\Omega_r|}{|\Omega|})^2$, where $J$ is a finite number that depends on the training property at the several beginning iterations.
\end{theorem}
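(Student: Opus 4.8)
The plan is to regard region optimization as ordinary SGD run on the surrogate $\mathcal{L}_r^{\mathrm{region}}$ and to combine two observations. First, by Lemma \ref{lemma:region_smooth} the surrogate inherits boundedness, convexity, $L$-Lipschitzness and $\beta$-smoothness from $\mathcal{L}$, so the entire uniform-stability argument behind Theorem \ref{theorem:point_opt} applies verbatim with $\mathcal{L}$ replaced by $\mathcal{L}_r^{\mathrm{region}}$, and with the ``training loss'' of the region-trained model read as $\mathcal{L}_r^{\mathrm{region}}(u_{\mathcal{A}(\mathcal{S})},\mathcal{S})$. Second, region averaging spreads each selected point over a neighborhood of relative volume $\tfrac{|\Omega_r|}{|\Omega|}$, so part of the population loss $\mathcal{L}(u_\theta,\Omega)$ is already seen by the empirical region loss and should not enter the gap. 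Concretely, for any $\theta$ and any selected $\boldsymbol{x}$ with $\boldsymbol{x}+\Omega_r\subseteq\Omega$, splitting the integral defining $\mathcal{L}(u_\theta,\Omega)$ over $\boldsymbol{x}+\Omega_r$ and its complement yields the exact identity
\[
\mathcal{L}(u_\theta,\Omega)-\mathcal{L}_r^{\mathrm{region}}(u_\theta,\boldsymbol{x})=\Big(1-\tfrac{|\Omega_r|}{|\Omega|}\Big)\Big(\mathcal{L}\big(u_\theta,\Omega\setminus(\boldsymbol{x}+\Omega_r)\big)-\mathcal{L}_r^{\mathrm{region}}(u_\theta,\boldsymbol{x})\Big),
\]
and averaging over $\boldsymbol{x}\in\mathcal{S}$ turns the left side into the generalization gap (up to the lower-order, boundary-only discrepancy between $\mathcal{L}_r^{\mathrm{region}}(u_\theta,\mathcal{S})$ and $\mathcal{L}(u_\theta,\mathcal{S})$, which vanishes in expectation over $\mathcal{S}$ by translation invariance of the domain integral).

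For part (1), the identity above shows the generalization error equals $\big(1-\tfrac{|\Omega_r|}{|\Omega|}\big)$ times the residual quantity $\big|\mathbb{E}_{\mathcal{S},\mathcal{A}}\big[\tfrac1{|\mathcal{S}|}\sum_{\boldsymbol{x}\in\mathcal{S}}\big(\mathcal{L}(u_{\mathcal{A}(\mathcal{S})},\Omega\setminus(\boldsymbol{x}+\Omega_r))-\mathcal{L}_r^{\mathrm{region}}(u_{\mathcal{A}(\mathcal{S})},\mathcal{S})\big)\big]\big|$. This residual is again a ``population minus empirical'' quantity for the loss $\mathcal{L}_r^{\mathrm{region}}$, with the held-out comparison point drawn from the uncovered part of $\Omega$. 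Since by Lemma \ref{lemma:region_smooth} $\mathcal{L}_r^{\mathrm{region}}$ is convex, $L$-Lipschitz and $\beta$-smooth and $\alpha_t\le 2/\beta$, the stability recursion of \cite{hardt2016train} on $\mathbb{E}\|\theta_t-\theta_t'\|$ for two runs on datasets differing in one point (same-point steps are non-expansive; the differing-point step, taken with probability $1/|\mathcal{S}|$, adds at most $2\alpha_t L$) bounds the residual by $\tfrac{2L^2}{|\mathcal{S}|}\sum_{t=1}^{T}\alpha_t$, exactly as in Theorem \ref{theorem:point_opt}(1). Multiplying by the prefactor gives the stated $\big(1-\tfrac{|\Omega_r|}{|\Omega|}\big)\tfrac{2L^2}{|\mathcal{S}|}\sum_t\alpha_t$.

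For part (2), the terms $\tfrac{C}{|\mathcal{S}|}+\tfrac{2L^2(T-1)}{\beta(|\mathcal{S}|-1)}$ come directly from Theorem \ref{theorem:point_opt}(2) applied to $\mathcal{L}_r^{\mathrm{region}}$ via Lemma \ref{lemma:region_smooth}; the work is producing the extra negative term. I would revisit the non-convex stability argument of \cite{hardt2016train}: condition on the first iteration $t_0$ at which the perturbed sample is drawn (before which the two trajectories coincide), and bound the post-$t_0$ growth of $\mathbb{E}\|\theta_t-\theta_t'\|$. The new effect to exploit is that at that first hit the two gradients are $\nabla_\theta\mathcal{L}_r^{\mathrm{region}}$ evaluated at $\boldsymbol{x}_i$ and at the swapped point $\boldsymbol{x}_i'$, i.e.\ two neighborhood averages over $\boldsymbol{x}_i+\Omega_r$ and $\boldsymbol{x}_i'+\Omega_r$; averaging over the (random) positions of $\boldsymbol{x}_i,\boldsymbol{x}_i'$ in $\Omega$, the expected relative overlap of these two neighborhoods is $\tfrac{|\Omega_r|}{|\Omega|}$, which shrinks the expected gradient discrepancy seeded in the early iterations by a corresponding factor. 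Propagating this reduction through the recursion and collecting the finitely many affected initial iterations into $J$ yields the improvement $-JL\big(\tfrac{|\Omega_r|}{|\Omega|}\big)^2$, where one factor of $\tfrac{|\Omega_r|}{|\Omega|}$ is the overlap gain and the other is the probability of hitting the perturbed sample during the burn-in phase.

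The main obstacle, I expect, is making the second ingredient honest: the clean factorization and the identification of the covered fraction with $\tfrac{|\Omega_r|}{|\Omega|}$ rely on $\boldsymbol{x}+\Omega_r$ lying inside $\Omega$ and on the boundary-only error in the approximation $\mathcal{L}_r^{\mathrm{region}}(u_\theta,\Omega)\approx\mathcal{L}(u_\theta,\Omega)$ being negligible, so the result is really ``modulo boundary effects''; and in part (2), quantifying the overlap-driven second-order gain and controlling the initial-iteration constants absorbed into $J$ is delicate. Everything else reduces to the standard stability computation of \cite{hardt2016train} applied to the regularity transferred by Lemma \ref{lemma:region_smooth}.
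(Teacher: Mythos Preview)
Your identity $\mathcal{L}(u_\theta,\Omega)-\mathcal{L}_r^{\mathrm{region}}(u_\theta,\boldsymbol{x})=(1-\tfrac{|\Omega_r|}{|\Omega|})\big(\mathcal{L}(u_\theta,\Omega\setminus(\boldsymbol{x}+\Omega_r))-\mathcal{L}_r^{\mathrm{region}}(u_\theta,\boldsymbol{x})\big)$ is correct, but the plan to bound the residual ``exactly as in Theorem~\ref{theorem:point_opt}(1)'' does not give the stated constant. The residual is a generalization gap for the function $g(\theta,\boldsymbol{x})=\mathcal{L}_r^{\mathrm{region}}(u_\theta,\boldsymbol{x})-\mathcal{L}(u_\theta,\Omega\setminus(\boldsymbol{x}+\Omega_r))$, which is a \emph{difference} of two $L$-Lipschitz averages and hence only $2L$-Lipschitz in $\theta$; feeding this through Lemma~\ref{lemma:sample} and the stability recursion yields $2L\cdot\mathbb{E}\|\theta_T-\theta_T'\|\le \tfrac{4L^2}{|\mathcal{S}|}\sum_t\alpha_t$, not $\tfrac{2L^2}{|\mathcal{S}|}\sum_t\alpha_t$. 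So your factorization route, even if made rigorous, would produce $(1-\tfrac{|\Omega_r|}{|\Omega|})\tfrac{4L^2}{|\mathcal{S}|}\sum_t\alpha_t$ rather than the claimed bound. There is also the issue you flag but do not resolve: Definition~\ref{definition:gen} uses the \emph{point} training loss $\mathcal{L}(u_{\mathcal{A}(\mathcal{S})},\mathcal{S})$, and the discrepancy $\mathcal{L}_r^{\mathrm{region}}(u_{\mathcal{A}(\mathcal{S})},\mathcal{S})-\mathcal{L}(u_{\mathcal{A}(\mathcal{S})},\mathcal{S})$ does not vanish in expectation once $\theta=\mathcal{A}(\mathcal{S})$ depends on $\mathcal{S}$.

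The paper does not factor the gap at all. Instead it places the gain \emph{inside the stability recursion}: when the differing sample is drawn, it analyzes $\|G^{\mathrm{region}}_{\alpha_t,\boldsymbol{x}}(\theta_t)-G^{\mathrm{region}}_{\alpha_t,\boldsymbol{x}'}(\theta_t')\|$ by splitting the two region integrals into the overlap $\Omega_{\mathrm{in}}=(\boldsymbol{x}+\Omega_r)\cap(\boldsymbol{x}'+\Omega_r)$ and its complement. On $\Omega_{\mathrm{in}}$ the integrands are the \emph{same} loss at $\theta_t$ versus $\theta_t'$, so convexity gives non-expansiveness there; only the non-overlapping part contributes the $2\alpha_tL$ increment. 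Averaging over the random position of $\boldsymbol{x}'$ (on a torus) turns the expected non-overlap fraction into $1-\tfrac{|\Omega_r|}{|\Omega|}$, so the per-step increment becomes $\tfrac{2\alpha_tL}{|\mathcal{S}|}(1-\tfrac{|\Omega_r|}{|\Omega|})$ and the telescoped bound carries the prefactor with the correct constant. This is the mechanism you invoke for part~(2), and the paper uses it for part~(1) as well; for part~(2) the $(\tfrac{|\Omega_r|}{|\Omega|})^2$ arises not from ``overlap gain $\times$ hit probability'' but from the interaction of the reduced increment $(1-M)$ with the modified growth rate $\exp(\tfrac{1}{t}-\tfrac{1-M}{t|\mathcal{S}|})$ in the recursion, which after accumulation over the early iterations where $\mathbb{E}[\delta_t]\le 2L/\beta$ produces a $(1-M)(1+M)=1-M^2$ factor absorbed into $J$.
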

\vspace{-10pt}
\begin{proof}
{Based on the Lipschitz assumption, $\mathcal{E}_{\mathrm{gen}}$ can be bounded by {a term relating to} the expectation of distance between parameter $\theta$ optimized from different training sets. {The} region optimization paradigm brings a more ``consistent'' gradient optimization direction than point optimization at each iteration, thereby benefiting the generalization property. See Appendix \ref{proof:region_opt} for complete proof.}  
\end{proof}
\vspace{-5pt}

From Theorems \ref{theorem:point_opt} and \ref{theorem:region_opt}, we can observe that region optimization can reduce the generalization error $\mathcal{E}_{\mathrm{gen}}$. 
{Furthermore, the region optimization theorem also provides a more general theoretical framework.} For example, the conventional point optimization is equivalent to the case of $\Omega_r=0$, where only one single point is {selected} for each region. {For another extreme case, enlarging the region size to the whole domain (i.e.~$\Omega_r=\Omega$), Eq.~\eqref{equ:region_opt} is equivalent to directly optimizing the loss defined on $\Omega$, where the generalization error will be reduced to zero. Unfortunately, this ideal situation cannot be satisfied in practice, since Eq.~\eqref{equ:region_opt} requires precise {calculation} of the integral {over} the whole domain. More discussions of practical implementation are deferred to the next section.}

\paragraph{High-order PDE constraints} In our proposed region optimization (Eq.~\eqref{equ:region_opt}), the integral operation on the input domain can also relax the smoothness requirements of the loss function $\mathcal{L}$. For example, without any {additional} assumption of the smoothness of $\mathcal{L}(u_{\theta}, \boldsymbol{x})$ on $\boldsymbol{x}$, we can directly derive the generalization error for the first-order loss $\frac{\partial}{\partial x_j}\mathcal{L}_{r}^{\mathrm{region}}(u_{\theta}, \boldsymbol{x})$ on the $j$-th dimension as follows.

\begin{corollary}[\textbf{Region optimization for first-order constraints}]\label{coro:high_order}
    Suppose that $\mathcal{L}$ is bounded by $C$ for all $\theta,\boldsymbol{x}$ and is $L$-Lipschitz and $\beta$-smooth for $\theta$. If we run stochastic gradient method based on first-order $j$-th dimension loss function $\frac{\partial}{\partial x_j}\mathcal{L}_{r}^{\mathrm{region}}$ for $T$ iterations, the generalization error in Theorem \ref{theorem:region_opt}(2) still holds when we adopt the monotonically non-increasing step size $\alpha_{t}\leq \frac{1}{2\beta t}$.
\end{corollary}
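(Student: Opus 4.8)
The plan is to show that differentiating the region loss in the $x_j$ direction never introduces a derivative of $\mathcal{L}$ in $\boldsymbol{x}$: since $\mathcal{L}(u_{\theta},\boldsymbol{x}+\boldsymbol{\xi})$ depends on $x_j$ and $\xi_j$ only through their sum, one may replace $\partial/\partial x_j$ by $\partial/\partial\xi_j$ inside the integral in Eq.~\eqref{equ:region_opt} and apply the fundamental theorem of calculus in the $\xi_j$ variable, collapsing that one of the $(d{+}1)$ integrals to the difference of its two endpoint values. Writing $\boldsymbol{e}_j$ for the $j$-th unit vector, this gives
\begin{equation}
\frac{\partial}{\partial x_j}\mathcal{L}_{r}^{\mathrm{region}}(u_{\theta},\boldsymbol{x})
=\frac{1}{|\Omega_r|}\int_{[0,r]^{d}}\Big(\mathcal{L}\big(u_{\theta},\boldsymbol{x}+\boldsymbol{\xi}+r\boldsymbol{e}_j\big)-\mathcal{L}\big(u_{\theta},\boldsymbol{x}+\boldsymbol{\xi}\big)\Big)\,\mathrm{d}\boldsymbol{\xi},
\end{equation}
where $\boldsymbol{\xi}$ now ranges over the $d$-dimensional slab in the coordinates other than the $j$-th. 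The right-hand side is a difference of two averages of shifted copies of $\mathcal{L}$, i.e.\ again of ``region-optimization type,'' and it is well defined under mere continuity of $\mathcal{L}$ in $\boldsymbol{x}$ — the $\boldsymbol{\xi}$-integration itself supplies the $x_j$-regularity, so no additional smoothness of $\mathcal{L}(u_{\theta},\cdot)$ in $\boldsymbol{x}$ is required. (The same reduction iterates for higher-order derivatives, collapsing one integral per order, which is the mechanism by which the region loss absorbs high-order constraints without ever differentiating $\mathcal{L}$ in $\boldsymbol{x}$.)

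Next I would transfer the regularity in $\theta$. Each of the two terms on the right is, up to the $|\Omega_r|$ normalization, an average over $\boldsymbol{\xi}$ of $\mathcal{L}(u_{\theta},\cdot)$, so by linearity of the integral — exactly the argument behind Lemma~\ref{lemma:region_smooth} — each is bounded and inherits the $L$-Lipschitz and $\beta$-smooth properties in $\theta$ from Assumption~\ref{assump:lipschitz}. By the triangle inequality their difference is then bounded and $2L$-Lipschitz, $2\beta$-smooth in $\theta$ (up to the normalization factor). This factor of $2$, coming solely from taking a difference of two terms, is precisely what forces the admissible step size to tighten to $\alpha_t\le\frac{1}{2\beta t}$, which is the condition $\alpha_t\le\frac{1}{(2\beta)\,t}$ that Theorem~\ref{theorem:region_opt}(2) imposes on a $2\beta$-smooth objective.

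Finally, running stochastic gradient descent on $\frac{\partial}{\partial x_j}\mathcal{L}_{r}^{\mathrm{region}}$ is now literally an instance of the setting of Theorem~\ref{theorem:region_opt}(2): a bounded, non-convex loss that is Lipschitz and smooth in $\theta$, optimized by SGD with monotonically non-increasing step sizes satisfying the (rescaled) smoothness condition. Invoking that theorem with $C,L,\beta$ read as the constants of the derived loss and $J$ replaced by the analogous finite quantity governing the first few iterations yields a generalization bound of the same form as in Theorem~\ref{theorem:region_opt}(2), which is the claim.

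\textbf{Main obstacle.} The delicate step is the very first one: justifying the interchange of $\partial/\partial x_j$ with $\int_{\Omega_r}$ and the fundamental-theorem-of-calculus reduction under the stated minimal hypotheses, i.e.\ showing that continuity (or local integrability) of $\mathcal{L}(u_{\theta},\cdot)$ already makes $x_j\mapsto\int_0^r\mathcal{L}(u_{\theta},\boldsymbol{x}+\boldsymbol{\xi})\,\mathrm{d}\xi_j$ absolutely continuous with exactly the endpoint-difference derivative above. A second, purely clerical obstacle is tracking how the normalization $|\Omega_r|=r^{d+1}$ and the region ratio $\big(\tfrac{|\Omega_r|}{|\Omega|}\big)^{2}$ rescale the constants, so that the resulting inequality matches the precise format stated in Theorem~\ref{theorem:region_opt}(2).
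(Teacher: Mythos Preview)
Your proposal is correct and follows essentially the same route as the paper: the paper likewise collapses the $\xi_j$-integral via the fundamental theorem of calculus to write $\frac{\partial}{\partial x_j}\mathcal{L}_r^{\mathrm{region}}$ as a difference of two shifted region averages, reads off boundedness and the $2L$-Lipschitz, $2\beta$-smooth constants in $\theta$ by the triangle inequality, and then reruns the non-convex recursion of Theorem~\ref{theorem:region_opt}(2) with $\beta\mapsto 2\beta$ (and $L\mapsto 2L$), which is exactly why the admissible step size tightens to $\alpha_t\le\frac{1}{2\beta t}$. Your flagged ``clerical obstacle'' about the $|\Omega_r|$ normalization is apt --- the paper is somewhat loose there --- but the overall argument is the same.
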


Corollary \ref{coro:high_order} {implies} that the integral on the input domain in region optimization can help {training PINNs with} high-order constraints, which is valuable for high-order PDEs, such as wave equations. {In contrast, this valuable property cannot be achieved by the classic point optimization. See Example~\ref{example:fail_case}.}

\begin{example}[\textbf{Point optimization fails in optimizing with first-order constraints}]\label{example:fail_case} Under the same assumption with Corollary \ref{coro:high_order}, we cannot obtain the Lipschitz and smoothness property of $\frac{\partial}{\partial x_j}\mathcal{L}(u_{\theta}, \boldsymbol{x})$. For example, suppose that $\mathcal{L}(u_{\theta}, \boldsymbol{x})=|\theta^{\sf T}\sqrt{\boldsymbol{x}}|, \boldsymbol{x}\in[0,1]^{(d+1)}$, which is 1-Lipschitz-1-smooth. However, $\nabla_{\theta}\frac{\partial}{\partial x_j}\mathcal{L}(u_{\theta}, \boldsymbol{x})$ is unbounded when $\boldsymbol{x}\to\boldsymbol{0}$, thereby not Lipschitz constant.
\end{example}

\subsection{Practical Algorithm}\label{sec:algo}

Derived from {our} theoretical insights of region optimization, we implement RoPINN as a practical training algorithm. As {elaborated} in Algorithm \ref{alg:RoPINN}, RoPINN involves the following two iterative steps: Monte Carlo approximation and trust region calibration, where the former can efficiently approximate the {optimization objective} and the latter {can effectively} control the estimation error. Next, we will discuss the details and convergence properties of RoPINN. All proofs can be found in Appendix \ref{appdix:algo}.

\paragraph{Monte Carlo approximation} {Note} that the region integral in Eq.~\eqref{equ:region_opt} cannot be directly calculated, {so} we adopt a straightforward implementation based on the Monte Carlo approximation. Concretely, to approximate the gradient descent on the region loss $\mathcal{L}_{r}^{\mathrm{region}}$, we uniformly sample one point within the region $\Omega_r$ for the gradient descent at each iteration, whose expectation is equal to the gradient descent {of the original} region optimization in Eq.~\eqref{equ:region_opt}:
    \begin{equation}\label{equ:region}
        \begin{split}
    \mathbb{E}_{\boldsymbol{\xi}\sim U(\Omega_{r})}\left[\nabla_{\theta}\mathcal{L}(u_{\theta}, \boldsymbol{x}+\boldsymbol{\xi})\right]=\nabla_{\theta}\mathcal{L}_{r}^{\mathrm{region}}(u_{\theta}, \boldsymbol{x}).
        \end{split}
    \end{equation}
In addition to {efficiently} approximating region optimization without adding sampling points, our proposed sampling-based strategy is also equivalent to a high-order loss function, especially for the first-order term, which is essential in practice \cite{yu2022gradient}. Concretely, with Taylor expansion, we have that:
\begin{equation}\label{equ:understanding_training}
  \begin{split}
  \mathbb{E}_{\boldsymbol{\xi}\sim {U(\Omega_{r})}}\big(\nabla_{\theta}\mathcal{L}(u_{\theta},\boldsymbol{x}+\boldsymbol{\xi})\big)= \mathbb{E}_{\boldsymbol{\xi}{\sim U(\Omega_{r})}}\big(\nabla_{\theta}\mathcal{L}(u_{\theta},\boldsymbol{x}) + {\nabla_{\theta}(\boldsymbol{\xi}^{\sf T}\mathcal{L}_{1}}(u_{\theta},\boldsymbol{x}))+\mathcal{O}({\|\boldsymbol{\xi}\|^2})\big),
  \end{split}
\end{equation}
where $\Omega_{r}=[0,r]^{d+1}$, {and} $\mathcal{L}_{1}$ represents the first order of loss function, namely $\frac{\partial}{\partial\boldsymbol{x}}\mathcal{L}(u_{\theta}, \boldsymbol{x})$.

\begin{figure}[t]
\begin{algorithm}[H]
\setstretch{1}
  \caption{Region Optimized PINN (RoPINN)}
  \label{alg:RoPINN}
\begin{algorithmic}
  \STATE {\bfseries Input:} number of iterations $T$, number of past iterations {$T_0$} {retained} to estimate the trust region, default region size $r$, trust region calibration value $\sigma_0=1$, {and initial {PINN} parameters $\theta_0$}.
  \STATE {\bfseries Output:} optimized {PINN} parameters $\theta_T$.
  \STATE Initialize an empty {buffer} to record gradients as $\mathbf{g}$.
  \FOR{$t=0$ {\bfseries to} $T$}
    \STATE // {\textit{Region Optimization {with Monte Carlo Approximation}}}
    \STATE {Sample} points from neighborhood regions: $\mathcal{S}^\prime=\{\boldsymbol{x}_{i}+\boldsymbol{\xi}_{i}\}_{i=1}^{|S|}, \boldsymbol{x}_{i}\in\mathcal{S}, \boldsymbol{\xi}_{i}\sim U[0, \frac{r}{\sigma_{t}}]^{(d+1)}$
    \STATE {Calculate} loss function $\mathcal{L}_{t}=\mathcal{L}\left(u_{\theta_{t}},\mathcal{S}^\prime\right)$
    \STATE Update $\theta_{t}$ to $\theta_{t+1}$ with optimizer (Adam \cite{DBLP:journals/corr/KingmaB14}, L-BFGS \cite{liu1989limited}, etc) to minimize loss function $\mathcal{L}_{t}$
   \STATE // {\textit{Trust Region Calibration}}
   \STATE Record the gradient of parameters $g_t$ {throughout} optimization
   \STATE Update gradient {buffer} $\mathbf{g}$ by adding $g_t$ and keeping the latest {$T_0$} elements
   \STATE Trust region calibration with $\sigma_{t+1}=\|\sigma(\mathbf{g})\|$
  \ENDFOR
\end{algorithmic}
\end{algorithm}
\vspace{-20pt}
\end{figure}

\vspace{5pt}
\begin{theorem}[\textbf{Convergence rate}]\label{them:convergence}
    Suppose that there exists a constant $H$, s.t. $\forall \boldsymbol{v}$ and $\forall \boldsymbol{x}\in\Omega$, $\left|\boldsymbol{v}^{\sf T}\nabla_{\theta}\mathcal{L}_{r}^{\mathrm{region}}(u_{\theta}, \boldsymbol{x})\boldsymbol{v}\right|\leq H\|\boldsymbol{v}\|^2$. If the step size $\alpha_t=\frac{1}{\sqrt{t+1}}$ decreases over time for $T$ iterations, the region optimization based on Monte Carlo approximation will converge {at} the speed of
    \begin{equation}
        \begin{split}
\mathbb{E}\left[\left\|\nabla_{\theta}\mathcal{L}_{r}^{\mathrm{region}}(u_{\theta}, \boldsymbol{x})\right\|^2\right]\leq \mathcal{O}\left(\frac{1}{\sqrt{T}}\right).
        \end{split}
    \end{equation}
\end{theorem}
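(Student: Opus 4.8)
The plan is to read Theorem~\ref{them:convergence} as the textbook analysis of stochastic gradient descent on a smooth nonconvex objective, where the randomness is supplied by the Monte Carlo draw of $\boldsymbol{\xi}$. First I would unpack the hypotheses. The stated bound $\big|\boldsymbol{v}^{\sf T}\nabla_{\theta}\mathcal{L}_{r}^{\mathrm{region}}(u_{\theta},\boldsymbol{x})\boldsymbol{v}\big|\le H\|\boldsymbol{v}\|^2$ for all $\boldsymbol{v}$ is just $H$-smoothness of the region objective (its Hessian in $\theta$ is symmetric with operator norm at most $H$), which by Lemma~\ref{lemma:region_smooth} is consistent with $\mathcal{L}$ being $\beta$-smooth, so one may take $H=\beta$. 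The stochastic oracle used in Algorithm~\ref{alg:RoPINN} is $g_t=\nabla_{\theta}\mathcal{L}(u_{\theta_t},\boldsymbol{x}+\boldsymbol{\xi}_t)$ (or its average over the anchor set $\mathcal{S}$), which by Eq.~\eqref{equ:region} is an unbiased estimator of $\nabla_{\theta}\mathcal{L}_{r}^{\mathrm{region}}(u_{\theta_t},\boldsymbol{x})$; and since each sampled gradient inherits the $L$-Lipschitz bound of $\mathcal{L}$ from Assumption~\ref{assump:lipschitz}, we get the pointwise bound $\|g_t\|\le L$ and hence $\mathbb{E}\big[\|g_t\|^2\mid\theta_t\big]\le L^2$. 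This is the only noise control we need.

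Next I would run the descent argument. Writing the update $\theta_{t+1}=\theta_t-\alpha_t g_t$ and applying the quadratic upper bound implied by $H$-smoothness,
\[
\mathcal{L}_{r}^{\mathrm{region}}(u_{\theta_{t+1}})\le \mathcal{L}_{r}^{\mathrm{region}}(u_{\theta_t})-\alpha_t\big\langle\nabla_{\theta}\mathcal{L}_{r}^{\mathrm{region}}(u_{\theta_t}),\,g_t\big\rangle+\frac{H\alpha_t^2}{2}\|g_t\|^2 .
\]
Taking the conditional expectation over $\boldsymbol{\xi}_t$, using unbiasedness to replace $\mathbb{E}[\langle\nabla_{\theta}\mathcal{L}_{r}^{\mathrm{region}},g_t\rangle]$ by $\|\nabla_{\theta}\mathcal{L}_{r}^{\mathrm{region}}(u_{\theta_t})\|^2$, and $\mathbb{E}[\|g_t\|^2]\le L^2$, yields the one-step inequality
\[
\mathbb{E}\big[\mathcal{L}_{r}^{\mathrm{region}}(u_{\theta_{t+1}})\big]\le \mathbb{E}\big[\mathcal{L}_{r}^{\mathrm{region}}(u_{\theta_t})\big]-\alpha_t\,\mathbb{E}\big[\|\nabla_{\theta}\mathcal{L}_{r}^{\mathrm{region}}(u_{\theta_t})\|^2\big]+\frac{HL^2}{2}\alpha_t^2 .
\]

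Then I would rearrange to isolate $\alpha_t\,\mathbb{E}[\|\nabla_{\theta}\mathcal{L}_{r}^{\mathrm{region}}(u_{\theta_t})\|^2]$, sum over $t=0,\dots,T-1$, and telescope. Since $\mathcal{L}_{r}^{\mathrm{region}}$ is nonnegative (it is a weighted sum of squared residuals) and bounded by $C$ (Lemma~\ref{lemma:region_smooth}), the telescoped difference obeys $\mathcal{L}_{r}^{\mathrm{region}}(u_{\theta_0})-\mathbb{E}[\mathcal{L}_{r}^{\mathrm{region}}(u_{\theta_T})]\le C$. With $\alpha_t=\frac{1}{\sqrt{t+1}}$ one has $\sum_{t=0}^{T-1}\alpha_t=\Theta(\sqrt{T})$ and $\sum_{t=0}^{T-1}\alpha_t^2\le 1+\ln T$. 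Dividing by $\sum_t\alpha_t$ and lower-bounding the $\alpha_t$-weighted average of $\mathbb{E}[\|\nabla_{\theta}\mathcal{L}_{r}^{\mathrm{region}}\|^2]$ by its minimum over $t$ (equivalently, reporting the iterate sampled with probability proportional to $\alpha_t$) gives
\[
\min_{0\le t<T}\mathbb{E}\big[\|\nabla_{\theta}\mathcal{L}_{r}^{\mathrm{region}}(u_{\theta_t})\|^2\big]\le \frac{C+\frac{HL^2}{2}(1+\ln T)}{\Theta(\sqrt{T})}=\mathcal{O}\!\left(\frac{1}{\sqrt{T}}\right),
\]
where the logarithmic factor is absorbed into the $\mathcal{O}$ as written in the statement.

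The part that needs the most care is twofold. First, with the time-decaying schedule $\alpha_t=\frac{1}{\sqrt{t+1}}$ the bound genuinely carries a $\ln T$ factor coming from $\sum_t\alpha_t^2$; to land exactly on $\mathcal{O}(1/\sqrt{T})$ one must either accept the $\widetilde{\mathcal{O}}(1/\sqrt{T})$ reading, switch to a horizon-dependent constant step $\alpha_t\equiv c/\sqrt{T}$, and in any case state explicitly which iterate the left-hand side refers to (a randomly selected or weighted-average iterate, not a fixed $\theta$). Second, Algorithm~\ref{alg:RoPINN} uses a calibrated region size $r/\sigma_t$ that changes across iterations, so strictly speaking the ``objective'' $\mathcal{L}_{r/\sigma_t}^{\mathrm{region}}$ is not fixed and the telescoping above is not immediate; the clean statement requires either freezing the region size for this theorem or arguing that $\sigma_t$ stays in a bounded range so that a single-region-size Lyapunov function can be used up to constants. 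Everything else is the routine SGD bookkeeping sketched above.
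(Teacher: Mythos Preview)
Your proposal is correct and follows essentially the same route as the paper: the descent lemma from $H$-smoothness, unbiasedness of the Monte Carlo gradient (Eq.~\eqref{equ:region}) together with the uniform bound $\|g_t\|\le L$, telescoping, and then reporting the $\alpha_t$-weighted (equivalently, randomly sampled) iterate to convert the sum into an average. The paper likewise absorbs the $\ln T$ factor into the $\mathcal{O}(1/\sqrt{T})$ and does not address the time-varying region size, so your caveats on those two points are well placed rather than a divergence from the intended argument.
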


\begin{theorem}[\textbf{{Gradient} estimation error}]\label{them:estimation_error}The estimation error of gradient descent between Monte Carlo approximation and the {original} region optimization satisfies:
    \begin{equation}
        \begin{split}
    \mathbb{E}_{\boldsymbol{\xi}\sim U(\Omega_{r})}\left[\left\|\nabla_{\theta}\mathcal{L}(u_{\theta}, \boldsymbol{x}+\boldsymbol{\xi}) - \nabla_{\theta}\mathcal{L}_{r}^{\mathrm{region}}(u_{\theta}, \boldsymbol{x})\right\|^2\right]^\frac{1}{2}=\left\|\sigma_{\boldsymbol{\xi}\sim U(\Omega_{r})}\left(\nabla_{\theta}\mathcal{L}(u_{\theta}, \boldsymbol{x}+\boldsymbol{\xi})\right)\right\|,
        \end{split}
    \end{equation}
    where $\sigma$ represents the standard deviation of gradients in {region} $\Omega_{r}$.
\end{theorem}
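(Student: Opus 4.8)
The plan is to recognize the left-hand side as the (square root of the) total variance of the random gradient vector $\nabla_{\theta}\mathcal{L}(u_{\theta},\boldsymbol{x}+\boldsymbol{\xi})$ under $\boldsymbol{\xi}\sim U(\Omega_r)$, and then identify this total variance with $\|\sigma_{\boldsymbol{\xi}\sim U(\Omega_r)}(\cdot)\|^2$ by summing the coordinatewise variances. The only nontrivial ingredient needed beyond elementary probability is Eq.~\eqref{equ:region}, which states that $\nabla_{\theta}\mathcal{L}_{r}^{\mathrm{region}}(u_{\theta},\boldsymbol{x})$ is exactly the mean of $\nabla_{\theta}\mathcal{L}(u_{\theta},\boldsymbol{x}+\boldsymbol{\xi})$ over the sampling region.

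First I would fix $\theta$ and $\boldsymbol{x}$ and abbreviate the random vector $\boldsymbol{G}(\boldsymbol{\xi}) := \nabla_{\theta}\mathcal{L}(u_{\theta},\boldsymbol{x}+\boldsymbol{\xi}) \in \mathbb{R}^{p}$ with $p = \dim(\theta)$, where $\boldsymbol{\xi}$ is uniform on $\Omega_r$. By Eq.~\eqref{equ:region}, $\mathbb{E}_{\boldsymbol{\xi}}[\boldsymbol{G}(\boldsymbol{\xi})] = \nabla_{\theta}\mathcal{L}_{r}^{\mathrm{region}}(u_{\theta},\boldsymbol{x})$, so the quantity inside the expectation on the left-hand side is precisely $\|\boldsymbol{G}(\boldsymbol{\xi}) - \mathbb{E}_{\boldsymbol{\xi}}[\boldsymbol{G}(\boldsymbol{\xi})]\|^2$, i.e.\ the squared deviation of $\boldsymbol{G}$ from its own mean.

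Next I would expand the squared Euclidean norm coordinatewise: writing $G_k$ for the $k$-th component of $\boldsymbol{G}$,
\begin{equation*}
\mathbb{E}_{\boldsymbol{\xi}}\big[\|\boldsymbol{G}-\mathbb{E}_{\boldsymbol{\xi}}\boldsymbol{G}\|^2\big] = \sum_{k=1}^{p}\mathbb{E}_{\boldsymbol{\xi}}\big[(G_k - \mathbb{E}_{\boldsymbol{\xi}} G_k)^2\big] = \sum_{k=1}^{p}\mathrm{Var}_{\boldsymbol{\xi}}(G_k).
\end{equation*}
By definition, each $\mathrm{Var}_{\boldsymbol{\xi}}(G_k)$ is the square of the standard deviation of the $k$-th gradient component, i.e.\ the square of the $k$-th entry of the vector $\sigma_{\boldsymbol{\xi}\sim U(\Omega_r)}(\nabla_{\theta}\mathcal{L}(u_{\theta},\boldsymbol{x}+\boldsymbol{\xi}))$; hence the sum equals $\|\sigma_{\boldsymbol{\xi}\sim U(\Omega_r)}(\nabla_{\theta}\mathcal{L}(u_{\theta},\boldsymbol{x}+\boldsymbol{\xi}))\|^2$. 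Taking square roots of both sides gives the claim.

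There is essentially no hard step here; the statement is a bias--variance bookkeeping identity, and no approximation or Jensen slack enters — which is why it is stated as an equality. The only point to pin down is the notational convention: $\sigma(\cdot)$ applied to a random vector returns the vector of coordinatewise standard deviations, and the outer $\|\cdot\|$ is the Euclidean norm of that vector; under this reading the identity is exact. I would also remark that this exact characterization is precisely what justifies using $\|\sigma(\mathbf{g})\|$ as the calibration signal $\sigma_{t+1}$ in Algorithm~\ref{alg:RoPINN}, since it is literally the root-mean-square gradient estimation error of the one-sample Monte Carlo step.
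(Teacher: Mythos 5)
Your proposal is correct and follows essentially the same route as the paper's proof, which likewise just substitutes the definition of $\mathcal{L}_{r}^{\mathrm{region}}$ (so that the region gradient is the mean of the sampled gradient) and identifies the resulting quantity with the norm of the coordinatewise standard deviation; you merely spell out the coordinatewise variance bookkeeping that the paper leaves implicit. No gap.
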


\paragraph{Trust region calibration} {Although} the expectation of the Monte Carlo sampling is equal to region optimization as shown in Eq.~\eqref{equ:region}, this design will also bring estimation error in practice (Theorem~\ref{them:estimation_error}). A large estimation error will cause unstable training and further affect convergence. To ensure a reliable gradient descent, we propose to control the sampling region size $r$ towards a trustworthy value, namely \emph{trust region calibration}. {Unlike} the {notion} in optimization \cite{yuan2000review}, here \emph{trust region} is used to {define} the {area of input domain} where the variance of loss gradients for different points is relatively small. {Formally}, we adjust the region size in inverse proportion to gradient variance:
\begin{equation}\label{equ:constrained_optimization}
  \begin{split}
  r\propto \frac{1}{\left\|\sigma_{\boldsymbol{\xi}\sim U(\Omega_{r})}\left(\nabla_{\theta}\mathcal{L}(u_{\theta}, \boldsymbol{x}+\boldsymbol{\xi})\right)\right\|}.
  \end{split}
\end{equation}
In practice, we {initialize} the trust region size as a default value $r$ and calculate the gradient estimation error during the training process for calibration (Algorithm \ref{alg:RoPINN}). However, the calculation of the standard deviation of gradients usually requires multiple samples, which will bring times of computation overload. In {pursuit of} a practical algorithm, we propose to adopt the gradient variance among several successive iterations {as an} approximation. Similar ideas are widely used in deep learning optimizers, such as Adam \cite{DBLP:journals/corr/KingmaB14} and AdaGrad \cite{ward2020adagrad}, which adopt multi-iteration statistics as the momentum of gradient descent. The approximation process is guaranteed by the following {theoretical results}.

\begin{lemma}[\textbf{{Trust region one-iteration approximation}}]\label{lemma:approximation}
    Suppose that loss function $\mathcal{L}$ is $L$-Lipschitz and $\beta$-smooth for $\theta$ and the $t$-th step parameter is $\theta_{t}$. {Two gradient difference sequences between successive iterations, $\|\nabla_{\theta}\mathcal{L}(u_{\theta_{t}},\boldsymbol{z}_1) - \nabla_{\theta}\mathcal{L}(u_{\theta_{t-1}},\boldsymbol{z}_2)\|$ and $\|\nabla_{\theta}\mathcal{L}(u_{\theta_{t}},\boldsymbol{z}_1) - \nabla_{\theta}\mathcal{L}(u_{\theta_{t}},\boldsymbol{z}_2)\|$, share the same limit, as the difference of the two sequences is dominated by the following inequality:}
\begin{equation}
    \begin{split}
        {\big|} \left\|\nabla_{\theta}\mathcal{L}(u_{\theta_{t}},\boldsymbol{z}_1)-\nabla_{\theta}\mathcal{L}(u_{\theta_{t-1}},\boldsymbol{z}_2)\right\|-\left\|\nabla_{\theta}\mathcal{L}(u_{\theta_{t}},\boldsymbol{z}_1)-\nabla_{\theta}\mathcal{L}(u_{\theta_{t}},\boldsymbol{z}_2)\right\| {\big|}\leq \beta L\alpha_{t-1},
    \end{split}
\end{equation}
where $\alpha_{t-1}$ represents the step size at {the} $(t-1)$-th iteration,  {which approaches 0 as $t$ tends to $\infty$.}
\end{lemma}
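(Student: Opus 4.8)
The plan is to reduce the claimed bound to a single gradient difference taken across two consecutive iterates, and then to control that difference with $\beta$-smoothness, the uniform $L$-Lipschitz gradient bound, and the decay of the step size. First I would apply the reverse triangle inequality $\big|\,\|a\|-\|b\|\,\big|\le\|a-b\|$ with the choices $a=\nabla_{\theta}\mathcal{L}(u_{\theta_{t}},\boldsymbol{z}_1)-\nabla_{\theta}\mathcal{L}(u_{\theta_{t-1}},\boldsymbol{z}_2)$ and $b=\nabla_{\theta}\mathcal{L}(u_{\theta_{t}},\boldsymbol{z}_1)-\nabla_{\theta}\mathcal{L}(u_{\theta_{t}},\boldsymbol{z}_2)$. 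The shared term $\nabla_{\theta}\mathcal{L}(u_{\theta_{t}},\boldsymbol{z}_1)$ cancels in $a-b$, leaving $a-b=\nabla_{\theta}\mathcal{L}(u_{\theta_{t}},\boldsymbol{z}_2)-\nabla_{\theta}\mathcal{L}(u_{\theta_{t-1}},\boldsymbol{z}_2)$, i.e.~the difference of the gradient evaluated at the \emph{fixed} point $\boldsymbol{z}_2$ between two successive parameter values.

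Second, I would bound $\|a-b\|$ using Assumption \ref{assump:lipschitz}: $\beta$-smoothness in $\theta$ gives $\|a-b\|\le\beta\|\theta_t-\theta_{t-1}\|$. For a (stochastic) gradient step $\theta_{t}=\theta_{t-1}-\alpha_{t-1}\nabla_{\theta}\mathcal{L}(u_{\theta_{t-1}},\cdot)$, the $L$-Lipschitz property in $\theta$ yields the uniform bound $\|\nabla_{\theta}\mathcal{L}\|\le L$, hence $\|\theta_t-\theta_{t-1}\|=\alpha_{t-1}\|\nabla_{\theta}\mathcal{L}(u_{\theta_{t-1}},\cdot)\|\le\alpha_{t-1}L$. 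Chaining the two inequalities produces the stated bound $\beta L\alpha_{t-1}$.

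Third, the ``same limit'' conclusion follows because $\alpha_{t-1}\to 0$ under the step-size schedule: the two nonnegative sequences differ by a vanishing quantity, so they are asymptotically equal and, whenever one converges, the other converges to the same value. This is precisely what justifies replacing the (otherwise multi-sample) trust-region gradient standard deviation of Theorem \ref{them:estimation_error} by the cheaply observable gradient variation across successive iterations, which is the approximation used in Algorithm \ref{alg:RoPINN}.

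I expect the main obstacle to be not analytic depth but the modeling assumption in the second step: the increment bound $\|\theta_t-\theta_{t-1}\|\le\alpha_{t-1}L$ is written for plain (stochastic) gradient descent, whereas Algorithm \ref{alg:RoPINN} may run Adam or L-BFGS. I would handle this either by stating the lemma for the SGD surrogate used throughout the theoretical analysis, or by absorbing an optimizer-dependent preconditioner norm into the constant, noting it stays uniformly bounded under standard settings. A secondary point worth making explicit is that $L$-Lipschitzness of $\mathcal{L}$ in $\theta$ is exactly what delivers the uniform gradient-norm bound invoked above, so no separate boundedness hypothesis on the gradients is needed.
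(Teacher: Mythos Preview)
Your proposal is correct and follows essentially the same approach as the paper: apply the reverse triangle inequality to reduce to $\|\nabla_{\theta}\mathcal{L}(u_{\theta_t},\boldsymbol{z}_2)-\nabla_{\theta}\mathcal{L}(u_{\theta_{t-1}},\boldsymbol{z}_2)\|$, then bound this by $\beta\|\theta_t-\theta_{t-1}\|\le\beta L\alpha_{t-1}$. The only cosmetic difference is that the paper routes the second step through a Taylor expansion with a Hessian remainder term (bounding $\|\nabla_\theta^2\mathcal{L}\|\le\beta$ and $\|\nabla_\theta\mathcal{L}\|\le L$), whereas you invoke $\beta$-smoothness directly as a Lipschitz-gradient condition; your version is slightly cleaner and avoids the implicit twice-differentiability assumption.
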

\begin{theorem}[\textbf{Trust region multi-iteration approximation}]\label{them:error_control} Suppose that loss function $\mathcal{L}$ is $L$-Lipschitz and $\beta$-smooth for $\theta$ and the learning rate $\alpha_t\leq \frac{1}{\beta L}$ converges to zero over time $t$, then the estimation error can be approximated by the variance of optimization gradients in multiple successive iterations. Given hyperparameter {$T_0$}, our multi-iteration approximation is guaranteed by
\begin{equation}
    \begin{split}
\lim_{t\to\infty}\sigma\left(\left\{\nabla_{\theta}\mathcal{L}(u_{\theta_{t-i+1}},\boldsymbol{z}_i)\right\}_{i=1}^{{T_0}}\right)=\sigma\left(\left\{\nabla_{\theta}\mathcal{L}(u_{\theta_{t}},\boldsymbol{z}_i)\right\}_{i=1}^{{T_0}}\right).
    \end{split}
\end{equation}
\end{theorem}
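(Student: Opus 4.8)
The plan is to reduce the asserted limit identity to two facts: (i) over any \emph{fixed}-length window of $T_0$ consecutive iterates, the parameters drift by an amount that vanishes as $t\to\infty$; and (ii) the empirical standard deviation is a Lipschitz function of the tuple of gradient vectors it is applied to. Write $a_i^{(t)}:=\nabla_{\theta}\mathcal{L}(u_{\theta_{t-i+1}},\boldsymbol{z}_i)$ for the actually recorded buffer entries and $b_i^{(t)}:=\nabla_{\theta}\mathcal{L}(u_{\theta_{t}},\boldsymbol{z}_i)$ for their ``synchronized'' counterparts, so the two sides of the claimed identity are $\sigma(\{a_i^{(t)}\}_{i=1}^{T_0})$ and $\sigma(\{b_i^{(t)}\}_{i=1}^{T_0})$; it then suffices to prove $\max_{1\le i\le T_0}\|a_i^{(t)}-b_i^{(t)}\|\to 0$.

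\emph{Parameter drift.} Since $\mathcal{L}$ is $L$-Lipschitz in $\theta$, every per-point gradient, and hence every Monte Carlo averaged update direction $g_j$ used by SGD, satisfies $\|g_j\|\le L$. From $\theta_{j+1}=\theta_j-\alpha_j g_j$ we obtain $\|\theta_t-\theta_{t-i+1}\|\le L\sum_{j=t-i+1}^{t-1}\alpha_j\le L\sum_{j=t-T_0+1}^{t-1}\alpha_j$ for all $i\le T_0$. Because $T_0$ is fixed and $\alpha_t\to 0$, the right-hand side is a sum of at most $T_0-1$ terms, each tending to $0$, so it vanishes as $t\to\infty$ (the hypothesis $\alpha_t\le\frac{1}{\beta L}$ merely normalizes this windowed drift to be at most $T_0-1$). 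Feeding this into the $\beta$-smoothness of $\mathcal{L}$ in $\theta$ — equivalently, iterating Lemma~\ref{lemma:approximation} with $\boldsymbol{z}_1=\boldsymbol{z}_2=\boldsymbol{z}_i$ across the window — yields $\|a_i^{(t)}-b_i^{(t)}\|\le\beta\|\theta_{t-i+1}-\theta_t\|\le \beta L\sum_{j=t-T_0+1}^{t-1}\alpha_j$, uniformly in $i$, and this bound tends to $0$.

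\emph{Continuity of $\sigma$ and conclusion.} Regard $\sigma$ as a map on tuples $(x_1,\dots,x_{T_0})$ of vectors: it equals (coordinatewise, or in norm) a Euclidean norm composed with the orthogonal projection $x\mapsto(x_1-\bar x,\dots,x_{T_0}-\bar x)$, $\bar x=\frac{1}{T_0}\sum_i x_i$, whose operator norm is $1$. Hence $|\sigma(\{a_i^{(t)}\})-\sigma(\{b_i^{(t)}\})|\le\big(\tfrac{1}{T_0}\sum_{i=1}^{T_0}\|a_i^{(t)}-b_i^{(t)}\|^2\big)^{1/2}\le\max_{1\le i\le T_0}\|a_i^{(t)}-b_i^{(t)}\|$. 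Combining with the drift bound and letting $t\to\infty$ gives $|\sigma(\{a_i^{(t)}\})-\sigma(\{b_i^{(t)}\})|\to 0$, so the two sequences share the same limit, which is exactly the stated identity; the argument is pointwise in the (possibly random) sampled points $\boldsymbol{z}_i$, so it holds surely.

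\emph{Main obstacle.} There is no deep difficulty here; the two points needing care are that the window length $T_0$ must be fixed — otherwise a sum of vanishing step sizes need not vanish — and that the update must be genuine (stochastic) gradient descent, so that $\|g_j\|\le L$ controls the drift $\|\theta_t-\theta_{t-i+1}\|$; for a preconditioned or unclipped optimizer one would instead need a separate bound on the effective per-iteration step.
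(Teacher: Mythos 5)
Your proof is correct and follows essentially the same route as the paper's: bound the parameter drift over the fixed window of $T_0$ iterations by telescoping per-step changes of size at most $\alpha_j L$ (gradient norm bounded by $L$-Lipschitzness), convert this to a gradient discrepancy via $\beta$-smoothness, and let $\alpha_t\to 0$ kill the finite sum. The only difference is that you make the final continuity-of-$\sigma$ step explicit via its $1$-Lipschitz property on tuples, whereas the paper handles it by separately verifying convergence of each entry and each pairwise difference and then "replacing the gradients with their limits" — a cosmetic rather than substantive distinction.
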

It is worth noting that, as presented in Algorithm \ref{alg:RoPINN}, since the gradient of each iteration has already been on the shelf, our design will not bring any extra gradient or backpropagation calculation in comparison with point optimization. Besides, our algorithm is not limited to a certain optimizer, {and in general,} we can effectively obtain the gradients of parameters by retrieving the computation graph.

\paragraph{{Balance between generalization and optimization}}\label{sec:balance_theorem}
Recall that in Theorem \ref{theorem:region_opt}, we observe that a larger region size will benefit the generalization error, while Theorem \ref{them:estimation_error} demonstrates that too large region size will also cause unstable training because {it will result in excessive} gradient estimation error of Monte Carlo sampling in our implementation. {The above analysis reveals the underlying trade-off between generalization and optimization of PINN models, which is {formally stated below}.}

\begin{theorem}[\textbf{Region Optimization with gradient estimation error}]\label{theorem:region_opt_error} Based on the same assumptions in Theorem \ref{theorem:region_opt} but {optimizing} the {PINN} model with the {approximate} region optimization loss $\mathcal{L}_{r}^{\mathrm{approx}}(u_{\theta}, \boldsymbol{x})=\nabla_{\theta}\mathcal{L}(u_{\theta}, \boldsymbol{x}+\boldsymbol{\xi}), \boldsymbol{\xi}\sim U(\Omega_{r})$ for $T$ iterations, we further denote the upper bound of gradient estimation error as ${{\mathcal{E}_{r,\mathrm{grad}}}}=\max_{t\leq T} \|\nabla_{\theta}\mathcal{L}_{r}^{\mathrm{approx}}-\nabla_{\theta}\mathcal{L}_{r}^{\mathrm{region}}\|$, then $\mathcal{E}_{\mathrm{gen}}$ satisfies:

(1) If $\mathcal{L}$ is convex for $\theta$ and $\alpha_{t}\leq\frac{2}{\beta}$, $\mathcal{E}_{\mathrm{gen}}\leq \big(\describe{(1-|\Omega_r|/|\Omega|)L}{inversely proportional to $|\Omega_r|$}+\describe{{\mathcal{E}_{r, \mathrm{grad}}}}{generally $\propto|\Omega_r|$}\big)\frac{2L}{|\mathcal{S}|}\sum_{t=1}^{T}\alpha_t$.

(2) If $\mathcal{L}$ is bounded by a constant $C$ and is non-convex for $\theta$ with monotonically non-increasing step sizes $\alpha_{t}\leq \frac{1}{\beta t}$, then $\mathcal{E}_{\mathrm{gen}}\leq \frac{C}{|\mathcal{S}|}+\frac{2L^2(T-1)}{\beta(|\mathcal{S}|-1)}\describe{-J^\prime L(|\Omega_r|/|\Omega|)^2}{inversely proportional to $|\Omega_r|$}+\describe{J^\prime{\mathcal{E}_{r,\mathrm{grad}}}(1+|\Omega_r|/|\Omega|)}{generally $\propto|\Omega_r|$}$, where $J^\prime$ is a finite number that depends on the training property at the several beginning iterations.
\end{theorem}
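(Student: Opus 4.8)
The plan is to extend the algorithmic stability argument behind Theorems~\ref{theorem:point_opt} and~\ref{theorem:region_opt} by treating the Monte Carlo update $\theta\mapsto\theta-\alpha_t\nabla_\theta\mathcal{L}(u_\theta,\boldsymbol{x}+\boldsymbol{\xi})$ as a bounded perturbation of the exact region-gradient update. I would start from the standard reduction (valid under the $L$-Lipschitz part of Assumption~\ref{assump:lipschitz}) that bounds $\mathcal{E}_{\mathrm{gen}}$ by $L\,\mathbb{E}[\|\theta_T-\theta_T'\|]$, where $\theta_T,\theta_T'$ are produced by running the approximate region-optimization SGD on two training sets $\mathcal{S},\mathcal{S}'$ that differ in a single point. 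The key device is to couple the two runs: at every iteration use the same random index $i_t$ \emph{and} the same region offset $\boldsymbol{\xi}_{i_t}$, and track $\delta_t=\|\theta_t-\theta_t'\|$.

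For the convex case (part~1): when $i_t$ indexes a point common to $\mathcal{S}$ and $\mathcal{S}'$, both runs apply the map $\theta\mapsto\theta-\alpha_t\nabla_\theta\mathcal{L}(u_\theta,\boldsymbol{x}_{i_t}+\boldsymbol{\xi}_{i_t})$; since $\mathcal{L}(\cdot,\boldsymbol{x}_{i_t}+\boldsymbol{\xi}_{i_t})$ is convex and $\beta$-smooth and $\alpha_t\le 2/\beta$, this map is non-expansive, so $\delta_{t+1}\le\delta_t$ with \emph{no} contribution from the sampling error — this cancellation is exactly why the coupled offsets matter. When $i_t$ indexes the differing point (probability $1/|\mathcal{S}|$), I would bound the one-step displacement by inserting and subtracting the two exact region gradients: the gap between the exact region gradients is controlled as in Theorem~\ref{theorem:region_opt}, i.e.\ by $2\alpha_t(1-|\Omega_r|/|\Omega|)L$ via the ``consistent-direction'' argument together with Lemma~\ref{lemma:region_smooth}, while each of the two differences between the approximate and exact region gradient is at most $\alpha_t\mathcal{E}_{r,\mathrm{grad}}$ by definition. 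Summing the resulting recursion $\mathbb{E}[\delta_{t+1}]\le\mathbb{E}[\delta_t]+\frac{1}{|\mathcal{S}|}\big(2\alpha_t(1-|\Omega_r|/|\Omega|)L+2\alpha_t\mathcal{E}_{r,\mathrm{grad}}\big)$ over $t$ and multiplying by $L$ yields precisely the stated bound.

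For the non-convex case (part~2): I would follow the template of Theorem~\ref{theorem:point_opt}(2) — condition on the differing point not being selected in the first $t_0$ iterations, use the bounded-loss constant $C$ on the complementary event, and control the growth of $\delta_t$ afterwards through the $(1+\alpha_t\beta)$ expansion with $\alpha_t\le 1/(\beta t)$ — and re-run the sharper early-iteration analysis that produced the negative term $-JL(|\Omega_r|/|\Omega|)^2$ in Theorem~\ref{theorem:region_opt}(2). The difference is that each step now also injects an $\alpha_t\mathcal{E}_{r,\mathrm{grad}}$ perturbation into the ``differing-point'' branch; propagating this perturbation through the amplification factors over the finitely many beginning iterations produces the corrected constant $J'$ and the extra term $J'\mathcal{E}_{r,\mathrm{grad}}(1+|\Omega_r|/|\Omega|)$, where the $(1+|\Omega_r|/|\Omega|)$ arises from the perturbation entering both directly and through the region-averaging cross-term. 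I expect the main obstacle to be exactly this step: ensuring that the Monte Carlo perturbation, pushed through the \emph{expansive} (rather than non-expansive) recursion of the non-convex analysis, remains absorbed into a finite $J'$ and does not compound over $T$. This should follow by reusing, from the proof of Theorem~\ref{theorem:region_opt}(2), that the relevant amplification is active only during the first few iterations, after which $\alpha_t\le 1/(\beta t)$ damps the accumulated error; making that damping argument robust to the added $\mathcal{E}_{r,\mathrm{grad}}$ term is the delicate part.
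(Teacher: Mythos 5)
Your proposal is correct and follows essentially the same route as the paper's proof in Appendix \ref{proof:region_opt_error}: couple the two runs (same index and same offset $\boldsymbol{\xi}$), use non-expansiveness on the common-point branch, and on the differing-point branch insert and subtract the exact region gradients so that the gap splits into the Theorem \ref{theorem:region_opt} term $2\alpha_t L(1-|\Omega_r|/|\Omega|)$ plus $2\alpha_t\mathcal{E}_{r,\mathrm{grad}}$, then sum the recursion; the non-convex case likewise reuses the early-iteration refinement, where the paper makes your "finite $J'$" concern precise via the condition $\mathbb{E}(\delta_t)\leq\frac{2L}{\beta}-\frac{2}{\beta M}\mathcal{E}_{r,\mathrm{grad}}$ on the first $K'$ steps (with $J'=0$ when it fails) and obtains the $(1+|\Omega_r|/|\Omega|)$ factor from the bound $(\tfrac{t_0+K'}{t})^{M/|\mathcal{S}|}\leq 1+M$.
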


\begin{proof}
   {In contrast} to Theorem \ref{theorem:region_opt}, {here} the gradient estimation error will bring extra optimization discrepancy between different training sets. See Appendix \ref{proof:region_opt_error} for complete proof.
\end{proof}
\vspace{-5pt}
{Based on Theorem \ref{theorem:region_opt_error}, we {have pinpointed} that classical point optimization ($\Omega_r=0$) and sampling points globally ($\Omega_r=\Omega$) discussed in Theorem \ref{theorem:region_opt} correspond to two extreme cases. The former makes ${\mathcal{E}_{r,\mathrm{grad}}}=0$ but cannot reduce the generalization error, while the latter holds a large gradient estimation error. Thus, neither case can {yield} perfect generalization {for PINNs} . In contrast, the design for calibrating trust regions in RoPINN provides an adaptive strategy to better balance generalization and optimization, which can adjust the region size according to multi-iteration training stability.}

\section{Experiments}\label{sec:experiments}

To verify the effectiveness and generalizability of our proposed RoPINN, we experiment with a wide range of PDEs, covering diverse physics processes and a series of advanced PINN models.

\begin{wraptable}{r}{0.6\textwidth}
\vspace{-12pt}
\caption{Summary of benchmarks. \emph{Dimension} means the input space and \emph{Derivative} is the highest derivative order.}
\label{tab:dataset}
\vspace{-8pt}
\begin{center}
\begin{small}
\setlength{\tabcolsep}{2pt}
\begin{tabular}{l|clc}
\toprule
Benchmark    & Dimension & Derivative & Property  \\ 
\midrule
1D-Reaction  & 1D+Time & 1 (\scalebox{0.9}{e.g.~$\frac{\partial u}{\partial x}$}) & Failure modes \cite{krishnapriyan2021characterizing} \\
1D-Wave   &  1D+Time  & 2 (\scalebox{0.9}{e.g.~$\frac{\partial^2 u}{\partial x^2}$}) &  / \\
Convection  &  1D+Time & 1 (\scalebox{0.9}{e.g.~$\frac{\partial u}{\partial x}$}) & Failure modes \cite{krishnapriyan2021characterizing} \\
PINNacle \cite{hao2023pinnacle}  & 1D\textasciitilde5D+Time & 1\textasciitilde 2 (\scalebox{0.9}{e.g.~$\frac{\partial^2 u}{\partial x^2}$}) & 16 different tasks \\
\bottomrule
\end{tabular}
\end{small}
\end{center}
\vspace{-10pt}
\end{wraptable}

\vspace{-5pt}
\paragraph{Benchmarks} For a comprehensive evaluation, we experiment with four benchmarks: 1D-Reaction, 1D-Wave, Convection and PINNacle~\cite{hao2023pinnacle}. The first three benchmarks are widely acknowledged in investigating the optimization property of PINNs \cite{Wang2020WhenAW,rathore2024challenges}. Especially, 1D-Reaction and Convection {are} highly challenging and {have} been used to demonstrate ``PINNs failure modes''~\cite{krishnapriyan2021characterizing,mojgani2022lagrangian}. As for PINNacle \cite{hao2023pinnacle}, it is a comprehensive family of 20 tasks, including diverse PDEs, e.g.~Burgers, Poisson, Heat, Navier-Stokes, Wave and Gray-Scott equations in 1D to 5D space and {on} complex geometries. In this paper, to avoid meaningless comparisons, we remove the tasks that all the methods fail and leave 16 tasks.

\vspace{-5pt}
\paragraph{Base models} To verify the generalizability of RoPINN among different PINN models, we experiment with five base models, including canonical PINN \cite{Raissi2019PhysicsinformedNN}, activation function enhanced models: QRes~\cite{bu2021quadratic} and FLS \cite{wong2022learning}, Transformer-based model PINNsFormer \cite{zhao2023pinnsformer} and advanced physics-informed backbone KAN \cite{liu2024kan}. PINNsFormer \cite{zhao2023pinnsformer} and KAN \cite{liu2024kan} are the most advanced PINN models.

\vspace{-5pt}
\paragraph{Baselines} As stated before, this paper mainly focuses on the objective function of PINNs. Thus, we only include the gradient-enhanced method gPINN \cite{yu2022gradient} and variational-based method vPINN~\cite{kharazmi2019variational} as baselines. 
Notably, there are diverse training strategies for PINNs focusing on other aspects than objective function, such as sampling-based RAR \cite{wu2023comprehensive} or neural tangent kernel (NTK) approaches~\cite{Wang2020WhenAW}. We also experimented with them and demonstrated that they contribute orthogonally to RoPINN.

\vspace{-5pt}
\paragraph{Implementations} In RoPINN (Algorithm \ref{alg:RoPINN}), we {select the multi-iteration hyperparameter ${T_0}$ from $\{5,10\}$} and set the initial region size $r=10^{-4}$ for all datasets, where the trust region size will be adaptively adjusted to fit the PDE property during training. For 1D-Reaction, 1D-Wave and Convection, we follow \cite{zhao2023pinnsformer} and train the model with L-BFGS optimizer \cite{liu1989limited} for 1,000 iterations. As for PINNacle, we strictly follow their official configuration \cite{hao2023pinnacle} and train the model with Adam \cite{DBLP:journals/corr/KingmaB14} for 20,000 iterations. Besides, for simplicity and fair comparison, we set the weights of PINN loss as equal, that is $\lambda_\ast=1$ in Eq.~\eqref{equ:pinn_loss}. Canonical loss formalized in Eq.~\eqref{equ:pinn_loss}, relative L1 error (rMAE) and relative L2 error (rMSE) are recorded. All experiments are implemented in PyTorch~\cite{Paszke2019PyTorchAI} and trained on a single NVIDIA A100 GPU. See Appendix \ref{appdix:details} for more implementation details.

\subsection{Main Results}

\begin{table*}[t]
  \caption{Comparison between RoPINN and other objective functions (gPINN~\cite{yu2022gradient} and vPINN~\cite{kharazmi2019variational}) under different base models. Metrics for PINNacle \cite{hao2023pinnacle} are the proportions of improved tasks over 16 tasks, where full results can be found in Appendix \ref{appdix:full_pinnacle}. A lower loss, rMAE or rMSE indicates better performance. For clarity, we highlight the value with \colorbox{tablelightblue}{blue} if it surpasses the vanilla PINN and the best is in bold. \emph{Promotion} refers to the relative promotion of RoPINN over the vanilla version.}
  \label{tab:mainres}
  \centering
  \begin{small}
      \renewcommand{\multirowsetup}{\centering}
      \setlength{\tabcolsep}{2pt}
      \begin{tabular}{lcccccccccccccc}
        \toprule
                    \multirow{3}{*}{\scalebox{0.98}{Base Model}} & \multirow{3}{*}{\scalebox{0.98}{Objective}} & \multicolumn{3}{c}{\scalebox{0.98}{1D-Reaction}} & \multicolumn{3}{c}{\scalebox{0.98}{1D-Wave}} & \multicolumn{3}{c}{\scalebox{0.98}{Convection}} & \multicolumn{2}{c}{\scalebox{0.98}{PINNacle (16 tasks)}} \\
                    \cmidrule(lr){3-5}\cmidrule(lr){6-8}\cmidrule(lr){9-11}\cmidrule(lr){12-13}
        &  & \scalebox{0.95}{Loss} & \scalebox{0.95}{rMAE} & \scalebox{0.95}{rMSE} & \scalebox{0.95}{Loss} & \scalebox{0.95}{rMAE} & \scalebox{0.95}{rMSE} & \scalebox{0.95}{Loss} & \scalebox{0.95}{rMAE} & \scalebox{0.95}{rMSE} & \scalebox{0.95}{rMAE} & \scalebox{0.95}{rMSE}  \\
        \midrule
                    & Vanilla & 2.0e-1 & 0.982 & 0.981 & 1.9e-2 & 0.326 & 0.335 & 1.6e-2 & 0.778 & 0.840 & - & -\\
                     & gPINN  & 2.0e-1 & \cellcolor{tablelightblue}0.978 & \cellcolor{tablelightblue}0.978 & 2.8e-2 & 0.399 & 0.399 & 3.1e-2 & 0.890 & 0.935 & 18.8\% & 18.8\% \\
                   PINN \cite{Raissi2019PhysicsinformedNN} & vPINN & 2.3e-1 & 0.985 & 0.982 & \cellcolor{tablelightblue}7.3e-3 & \cellcolor{tablelightblue}0.162 & \cellcolor{tablelightblue}0.173 & \cellcolor{tablelightblue}1.1e-2 & \cellcolor{tablelightblue}0.663 & \cellcolor{tablelightblue}0.743 & 25.0\% & 25.0\% \\
                     \cmidrule(lr){2-13}
                     & RoPINN & \cellcolor{tablelightblue}\textbf{4.7e-5} & \cellcolor{tablelightblue}\textbf{0.056} & \cellcolor{tablelightblue}\textbf{0.095} & \cellcolor{tablelightblue}\textbf{1.5e-3} & \cellcolor{tablelightblue}\textbf{0.063} & \cellcolor{tablelightblue}\textbf{0.064} & \cellcolor{tablelightblue}\textbf{1.0e-2} & \cellcolor{tablelightblue}\textbf{0.635} & \cellcolor{tablelightblue}\textbf{0.720} & \multirow{2}{*}{\textbf{93.8\%}} & \multirow{2}{*}{\textbf{100.0\%}}\\
                     & Promotion & 99\% & 94\% & 90\% & 92\% & 80\% & 80\% & 25\% & 18\% & 14\% &  & \\
                    \midrule
                     & Vanilla & 2.0e-1  & 0.979 & 0.977 & 9.8e-2 & 0.523 & 0.515 & 4.2e-2 & 0.925 & 0.959 & - & -\\
                     & gPINN & 2.1e-2 & 0.984 & 0.984 & 1.3e-1 & 0.785 & 0.781 & 1.6e-1 & 1.111 & 1.222 & 12.5\% & 12.5\% \\
                    QRes \cite{bu2021quadratic} & vPINN & 2.2e-2 & 0.999 & 1.000 & 1.0e-1 & 0.709 & 0.721 & 5.5e-2 & 0.941 & 0.966 & 12.5\% & 12.5\%\\
                     \cmidrule(lr){2-13}
                     & RoPINN & \cellcolor{tablelightblue}\textbf{9.0e-6} & \cellcolor{tablelightblue}\textbf{0.007} & \cellcolor{tablelightblue}\textbf{0.013} & \cellcolor{tablelightblue}\textbf{1.7e-2} & \cellcolor{tablelightblue}\textbf{0.309} & \cellcolor{tablelightblue}\textbf{0.321} & \cellcolor{tablelightblue}\textbf{1.2e-2} & \cellcolor{tablelightblue}\textbf{0.819} & \cellcolor{tablelightblue}\textbf{0.870} & \multirow{2}{*}{\textbf{81.3\%}} & \multirow{2}{*}{\textbf{81.3\%}}\\
                     & Promotion & 99\% & 99\% & 99\% & 83\% & 41\% & 38\% & 71\% & 11\% & 9\% & \\
                    \midrule
                     & Vanilla & 2.0e-1 & 0.984 & 0.985 & 3.6e-3 & 0.102 & 0.119 & 1.2e-2 & 0.674 & 0.771 & - & -\\
                     & gPINN & 2.0e-1 & \cellcolor{tablelightblue}0.978 & \cellcolor{tablelightblue}0.979 & 9.2e-2 & 0.500 & 0.489 & 3.8e-1 & 0.913 & 0.949 & 12.5\% & 18.8\%  \\
                    FLS \cite{wong2022learning} & vPINN & 2.1e-1 & 1.000 & 0.994 & \cellcolor{tablelightblue}2.1e-3 & \cellcolor{tablelightblue}0.069 & \cellcolor{tablelightblue}0.069 & \cellcolor{tablelightblue}1.1e-2 & 0.688 & \cellcolor{tablelightblue}0.765 & 25.0\% & 18.8\%\\
                     \cmidrule(lr){2-13}
                     & RoPINN & \cellcolor{tablelightblue}\textbf{2.2e-5} & \cellcolor{tablelightblue}\textbf{0.022} & \cellcolor{tablelightblue}\textbf{0.039} & \cellcolor{tablelightblue}\textbf{1.5e-4} & \cellcolor{tablelightblue}\textbf{0.016} & \cellcolor{tablelightblue}\textbf{0.017} & \cellcolor{tablelightblue}\textbf{9.6e-4} & \cellcolor{tablelightblue}\textbf{0.173} & \cellcolor{tablelightblue}\textbf{0.197} & \multirow{2}{*}{\textbf{81.3\%}} & \multirow{2}{*}{\textbf{87.5\%}}\\
                     & Promotion & 99\% & 98\% & 96\% & 96\% & 84\% & 86\% & 99\% & 74\% & 74\% &  & \\
                    \midrule
                     & Vanilla & 3.0e-6 & 0.015 & 0.030 & 1.4e-2 & 0.270 & 0.283 & 3.7e-5 & 0.023 & 0.027 & - & -\\
                     PINNs-& gPINN & \cellcolor{tablelightblue}1.5e-6 & \cellcolor{tablelightblue}0.009 & \cellcolor{tablelightblue}0.018 & OOM & OOM & OOM & 3.7e-2 & 0.914 & 0.950 & 0.0\% & 0.0\% \\
                     Former \cite{zhao2023pinnsformer} & vPINN & 1.6e-4 & 0.065 & 0.124 & 4.5e-2 & 0.411 & 0.400 & 5.1e-5 & \cellcolor{tablelightblue}0.016 & \cellcolor{tablelightblue}0.022 & 0.0\% & 0.0\%\\
                     \cmidrule(lr){2-13}
                     & RoPINN & \cellcolor{tablelightblue}\textbf{1.0e-6} & \cellcolor{tablelightblue}\textbf{0.007} & \cellcolor{tablelightblue}\textbf{0.017} & \cellcolor{tablelightblue}\textbf{6.5e-3} & \cellcolor{tablelightblue}\textbf{0.165} & \cellcolor{tablelightblue}\textbf{0.172} & \cellcolor{tablelightblue}\textbf{1.2e-5} & \cellcolor{tablelightblue}\textbf{0.005} & \cellcolor{tablelightblue}\textbf{0.006} & \multirow{2}{*}{\textbf{100.0\%}} & \multirow{2}{*}{\textbf{100\%}} \\
                     & Promotion & 66\% & 53\% & 43\% & 54\% & 39\% & 39\% & 68\% & 78\% & 78\% &  & \\
                    \midrule
                    & Vanilla & 7.3e-5 & 0.031 & 0.061 & 9.2e-2 & 0.499 & 0.489 & 5.8e-2 & 0.922 & 0.954 & - & - \\
                     & gPINN & 2.9e-4 & \cellcolor{tablelightblue}0.030 & 0.061 & 2.6e-1 & 1.131 & 1.110 & 1.2e-1 & 1.006 & 1.041 & 31.3\% & 31.3\% \\
                     KAN \cite{liu2024kan} & vPINN & 2.1e-1 & 0.998 & 0.996 & \cellcolor{tablelightblue}9.0e-2 & \cellcolor{tablelightblue}0.498 & \cellcolor{tablelightblue}0.487 & \cellcolor{tablelightblue}2.5e-2 & \cellcolor{tablelightblue}0.853 & \cellcolor{tablelightblue}0.853 & 43.8\% & 43.8\% \\
                     \cmidrule(lr){2-13}
                     & RoPINN & \cellcolor{tablelightblue}\textbf{4.9e-5} & \cellcolor{tablelightblue}\textbf{0.026} & \cellcolor{tablelightblue}\textbf{0.051} & \cellcolor{tablelightblue}\textbf{9.6e-3} & \cellcolor{tablelightblue}\textbf{0.177} & \cellcolor{tablelightblue}\textbf{0.191} & \cellcolor{tablelightblue}\textbf{2.2e-2} & \cellcolor{tablelightblue}\textbf{0.805} & \cellcolor{tablelightblue}\textbf{0.801} & \multirow{2}{*}{\textbf{100\%}} & \multirow{2}{*}{\textbf{93.8\%}} \\
                     & Promotion & 33\% & 16\% & 16\% & 89\% & 65\% & 61\% & 62\% & 13\% & 16\% &  & \\
        \bottomrule
      \end{tabular}
  \end{small}
    \vspace{-15pt}
\end{table*}

\paragraph{Results} As shown in Table \ref{tab:mainres}, we investigate the effectiveness of RoPINN on diverse tasks and base models and compare it with two well-acknowledged PINN objectives. Here are {two} key observations.

RoPINN can consistently boost performance on all benchmarks, justifying its generality on PDEs and base models. Notably, since the PDEs {under evaluation} are quite diverse, especially for PINNacle (Table \ref{tab:dataset}), it is extremely challenging to obtain such a consistent improvement. We can find that the previous high-order regularization and variational-based methods could {yield} negative effects in many cases. For example, gPINN~\cite{yu2022gradient} performs badly on 1D-Wave, which may be due to second-order derivatives in the wave equation. Besides, vPINN~\cite{kharazmi2019variational} also fails in 1D-Reaction and QRes.

As we stated in Table \ref{tab:dataset}, 1D-Reaction and Convection are hard to optimize, so-called ``PINNs failure modes''~\cite{krishnapriyan2021characterizing,mojgani2022lagrangian}. In contrast, empowered by RoPINN, PINNs can mitigate this thorny challenge to some extent. Specifically, with RoPINN, canonical PINN \cite{Raissi2019PhysicsinformedNN}, QRes \cite{bu2021quadratic} and FLS~\cite{wong2022learning} achieve more than 90\% improvements in 1D-Reaction. Besides, RoPINN can further enhance the performance of PINNsFormer \cite{zhao2023pinnsformer} and KAN \cite{liu2024kan}, which have already performed well in 1D-Recation or Convection, further verifying its effectiveness in helping PINN optimization.

\begin{wraptable}{r}{0.6\textwidth}
\vspace{-10pt}
\caption{Adding RoPINN to other strategies based on PINN. \emph{Time} is for every $10^2$ training iterations on 1D-Reaction.}
\label{tab:combine}
\vspace{-8pt}
\begin{center}
\begin{small}
\setlength{\tabcolsep}{2pt}
\begin{tabular}{l|cccc}
\toprule
Method rMSE & 1D-Reaction & 1D-Wave & Convection & Time (s)   \\ 
\midrule
PINN \cite{Raissi2019PhysicsinformedNN} & 0.981 & 0.335 & 0.840 & 18.47 \\
+gPINN \cite{yu2022gradient} & 0.978 & 0.399 & 0.935 & 37.91 \\
+vPINN \cite{kharazmi2019variational} & 0.982 & 0.173 & 0.743 & 38.78 \\
+RoPINN & \textbf{0.095} & \textbf{0.064} & \textbf{0.720} & 20.04 \\
\midrule
+NTK \cite{Wang2020WhenAW} & 0.098 & 0.149 & 0.798 & 27.99 \\
+NTK+RoPINN & \textbf{0.052} & \textbf{0.023} & \textbf{0.693} & 29.96 \\
\midrule
+RAR \cite{wu2023comprehensive} & 0.981 & 0.126 & 0.771 & 19.71 \\
+RAR+RoPINN & \textbf{0.080} & \textbf{0.030} & \textbf{0.695} & 20.89 \\
\bottomrule
\end{tabular}
\end{small}
\end{center}
\vspace{-10pt}
\end{wraptable}

\vspace{-5pt}
\paragraph{{Combining} with other strategies} Since RoPINN mainly focuses on the objective function design, it can be integrated {seamlessly and directly} with other strategies. As shown in Table \ref{tab:combine}, we experiment with the widely-used loss-reweighting method NTK \cite{Wang2020WhenAW} and data-sampling strategy RAR \cite{wu2023comprehensive}. Although NTK can consistently improve the performance, it will take extra computation costs due to the {calculation} of neural tangent kernels \cite{jacot2018neural}. Based on NTK, our RoPINN can obtain better results with {slightly more time} cost. As for RAR, it performs unstable in different tasks, {while} RoPINN can also boost {it}. These results verify the orthogonal contribution and favorable efficiency of RoPINN w.r.t.~other methods.

\subsection{Algorithm Analysis}

\paragraph{Initial region size in Algorithm \ref{alg:RoPINN}} To provide an intuitive understanding of RoPINN, we plot the curves of training statistics in Figure \ref{fig:region}, including temporally adjusted region size $\log(\frac{r}{\sigma_t})$, train loss, and test performance. From Figure \ref{fig:region}(a), we can find that even though we initialize the region size as distinct values, RoPINN will progressively adjust the trust region size to similar values during training. This indicates that our algorithm can capture a potential ``balance point'' between training stability and generalization error, where the fluctuation of trust region size reveals the balancing process. Further, as shown in Figure \ref{fig:region}(b-c), if $r$ is initialized as a value closer to the balance point (e.g.~1e-4 and 1e-5 in this case), then the training process will converge faster. And too large a region size (e.g.~1e-3) will decrease the convergence speed due to the optimization noise (Theorem \ref{them:estimation_error}).

\begin{figure*}[h]
\begin{center}
\vspace{-5pt}
\centerline{\includegraphics[width=\textwidth]{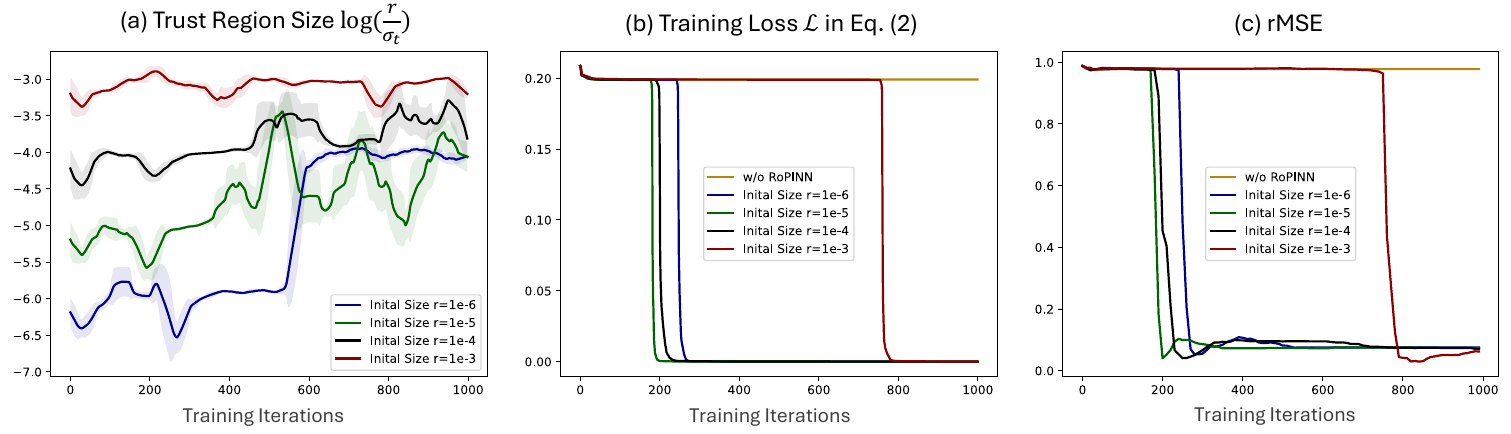}}
\vspace{-3pt}
  \caption{Optimization of canonical PINN \cite{Raissi2019PhysicsinformedNN} on the 1D-Reaction under different region sizes. To highlight the region size change, we adopt the moving average over time and mark the temporal standard deviation with shadow. The steep training loss is caused by the learning difficulty of PDE.}
  \label{fig:region}
\end{center}
\vspace{-15pt}
\end{figure*}

\vspace{-5pt}
\paragraph{Number of sampling points in Eq.~\eqref{equ:region}} For efficiency, RoPINN only samples one point within the trust region to approximate the region gradient descent. However, {it is worth noticing that sampling more points will make the approximation in Eq.~\eqref{equ:region} more accurate, leading to a lower gradient estimation error. Further, since RoPINN employs an adaptive strategy to adjust region $\Omega_r$, a lower gradient estimation error will also make the optimization process adapt to a larger region size $r$.} Therefore, we observe in Figure \ref{fig:sample}(a-b) that sampling more points will also increase the finally learned region size $r$ and speed up the convergence. In addition, Figure \ref{fig:sample}(c) shows that adding sampled points can also improve the final performance, {which has also been theoretically justified in Theorem~\ref{theorem:region_opt_error} that the upper bound of generalization error is inversely proportion to gradient estimation error.}

\begin{figure*}[h]
\begin{center}
\centerline{\includegraphics[width=\textwidth]{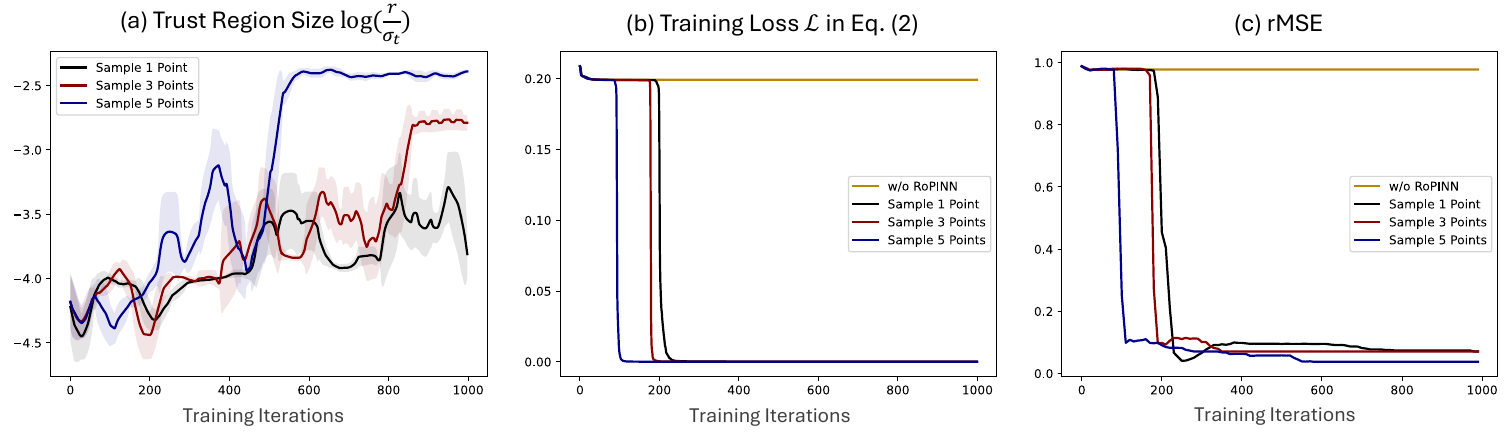}}
\vspace{-3pt}
  \caption{Optimization of canonical PINN \cite{Raissi2019PhysicsinformedNN} on the 1D-Reaction under different sample points.}
  \label{fig:sample}
\end{center}
\vspace{-15pt}
\end{figure*}

\vspace{-5pt}
\paragraph{{Efficiency analysis}} As we discussed above, sampling more points can benefit the final performance, while we choose only sample one point as the default setting of RoPINN in the spirit of boosting PINNs without extra backpropagation or gradient calculation, which has already achieved significant promotion w.r.t.~original PINNs (Table \ref{tab:mainres}). To provide a more comprehensive understanding of algorithm property, we plot the efficiency-performance curve in Figure \ref{fig:efficiency}, where we can obtain the following observations. Firstly, computation costs will grow linearly when adding points. Secondly, more points will bring better performance but will saturate around 10 points, where the performance fluctuations of 9, 13, and 30 points are within three times the standard deviations (Appendix \ref{sec:std}).

\begin{figure*}[t]
\begin{center}
\centerline{\includegraphics[width=\textwidth]{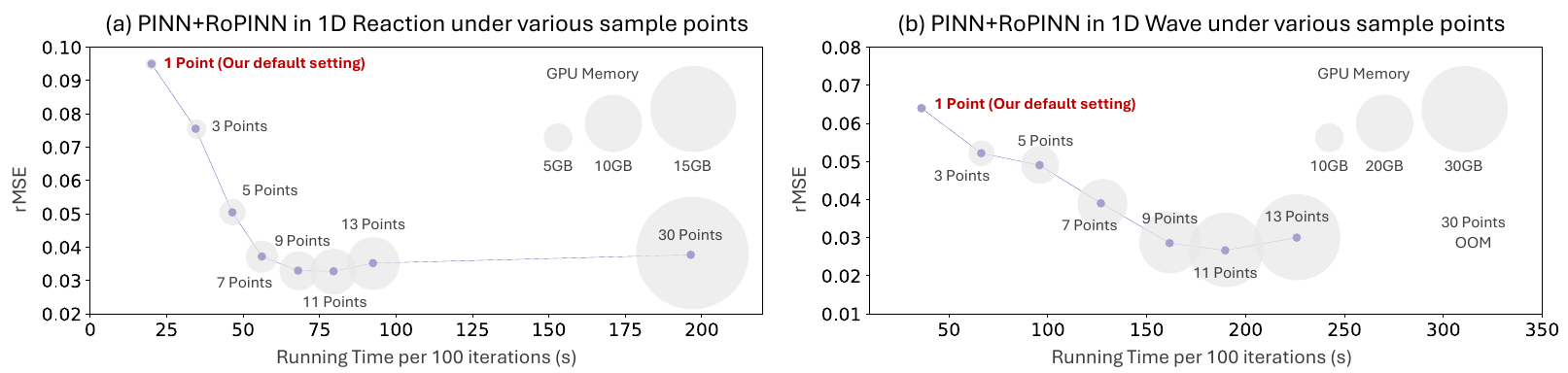}}
 \caption{Efficiency and model performance w.r.t. number of samples. Note that the default setting of RoPINN is just sampling one point, which will not bring extra gradient calculation costs.}\label{fig:efficiency}
\end{center}
\vspace{-15pt}
\end{figure*}

\vspace{-5pt}
\paragraph{Ablations} To verify the effectiveness of our design in RoPINN, we present ablations in Figure~\ref{fig:ablation}. It is observed that although we only sample one point, even fixed-size region optimization can also boost the performance of PINNs in most cases, {demonstrating the effectiveness of introducing ``region'' to PINN optimization}. However, as illustrated in Theorem \ref{them:estimation_error}, the sampling process may also cause gradient estimation error, so the relative promotion is inconsistent and unstable among different PDEs and base models. With our proposed trust region calibration, we can obtain a more significant and consistent improvement, indicating that {achieving a better balance between optimization and generalization ({formally stated} in Theorem \ref{theorem:region_opt_error}) performs an essential role in training PINN models}.

\begin{figure*}[t]
\begin{center}
\centerline{\includegraphics[width=\textwidth]{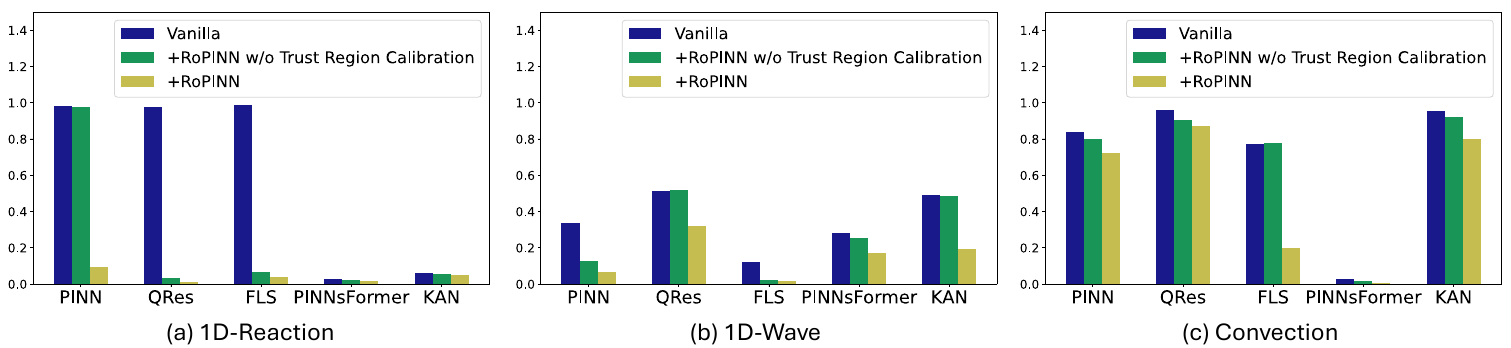}}
\vspace{-2pt}
  \caption{Ablation study of RoPINN on different PDEs and diverse base models. rMSE is recorded.}
  \label{fig:ablation}
\end{center}
\vspace{-25pt}
\end{figure*}

\vspace{-5pt}
\paragraph{Loss landscape} Previous research \cite{krishnapriyan2021characterizing} has studied why PINN cannot solve the Convection equation and found that it is not caused by the limited model capacity but by the hard-to-optimize loss landscape. Here we also provide a loss landscape visualization in Figure~\ref{fig:landscape}, which is obtained by perturbing the trained model along the directions of the first two dominant Hessian eigenvectors \cite{krishnapriyan2021characterizing,li2018visualizing,yao2020pyhessian}. We can find that vanilla PINN optimized by PINN loss in Eq.~\eqref{equ:pinn_loss} presents sharp cones. In contrast, empowered by RoPINN, the loss landscape is significantly smoothed. This visualization intuitively interprets why RoPINN can mitigate ``PINN failure modes''. See Appendix \ref{appdix:new_res} for more results.

\begin{figure*}[h]
\begin{center}
\vspace{-5pt}
\centerline{\includegraphics[width=\textwidth]{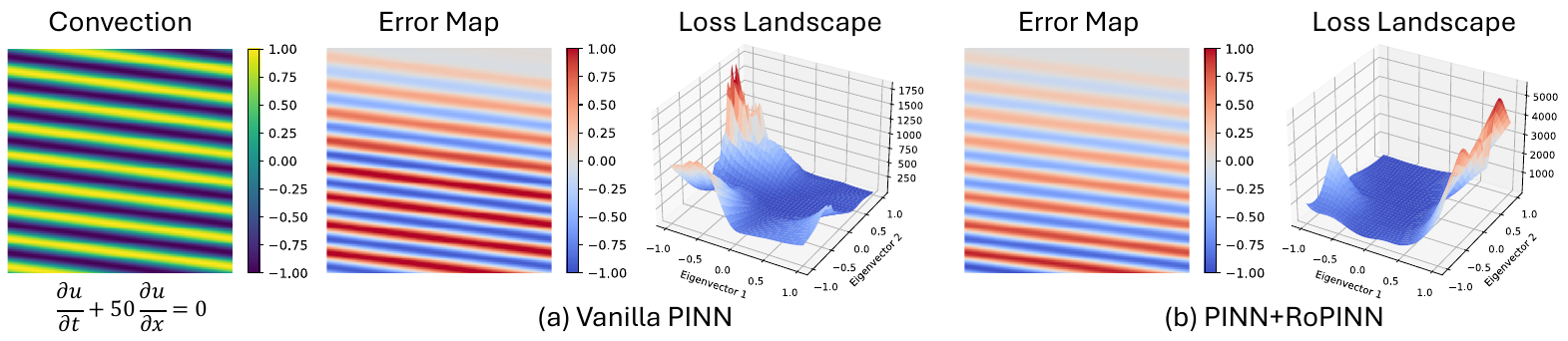}}
  \caption{Loss landscape of RoPINN and vanilla PINNs on the Convection equation. \emph{Error Map} refers to the distance between model prediction and the accurate solution, i.e.~$(u_{\theta}-u)$.}
  \label{fig:landscape}
\end{center}
\vspace{-20pt}
\end{figure*}

\section{Conclusion}
This paper presents and {analyzes} a new PINN optimization paradigm: region optimization. Going beyond previous scatter-point optimization, we extend the optimization from selected points to their neighborhood regions. Based on this idea, RoPINN is implemented as a simple but effective training algorithm, where {an efficient} Monte Carlo approximation {process} is used along with a trust region calibration strategy to control the {gradient} estimation error caused by sampling, {theoretically manifesting a better balance of generalization and optimization}. In addition to theoretical advantages, RoPINN can consistently boost the performance of various PINN models without extra backpropagation or gradient calculation, {demonstrating} favorable efficiency, training stability and {general capability}.

\begin{ack}
{This work was supported by the National Natural Science Foundation of China (U2342217 and 62022050), the BNRist Project, and the National Engineering Research Center for Big Data Software.}
\end{ack}

{
\small
\bibliographystyle{plain}
\bibliography{ref}
}
\newpage

\appendix

\section{Generalization Analysis in Section \ref{sec:region_optimize}}\label{appdix:proof}

This section will present the proofs for the theorems in Section \ref{sec:region_optimize}.

\subsection{Proof for Point Optimization Generalization Error (Theorem \ref{theorem:point_opt})}\label{appdix:point_proof}
The proof for the convex case is derived from previous papers \cite{hardt2016train,xiao2022stability} under the Assumption \ref{assump:lipschitz}. We derive a more compact upper bound for generalization error in expectation for the non-convex setting.
\begin{lemma}\label{lemma:sample}
    Given two finite sets of selected points $\mathcal{S}=(\boldsymbol{x}_1, \cdots, \boldsymbol{x}_N)$ and $\mathcal{S}^\prime=(\boldsymbol{x}_1^\prime, \cdots, \boldsymbol{x}_N^\prime)$, let $\mathcal{S}^{(i)}=(\boldsymbol{x}_1, \cdots,\boldsymbol{x}_{i-1},\boldsymbol{x}_i^\prime,\boldsymbol{x}_{i+1},\cdots, \boldsymbol{x}_N)$ be the set that is identical to $\mathcal{S}$ except the $i$-th element, the generalization error in expectation is equal to the expectation of the error difference between these two sets, which can be formalized as follows:
    \begin{equation}\label{equ:sample_error}
\mathcal{E}_{\mathrm{gen}}=\left|\mathbb{E}_{\mathcal{S},\mathcal{S}^\prime,\mathcal{A}}\left[\frac{1}{N}\sum_{i=1}^{N}\mathcal{L}(\mathcal{A}(\mathcal{S}^{(i)}), \boldsymbol{x}_{i}^\prime)-\frac{1}{N}\sum_{i=1}^{N}\mathcal{L}(\mathcal{A}(\mathcal{S}), \boldsymbol{x}_{i}^\prime)\right]\right|.
    \end{equation}
\end{lemma}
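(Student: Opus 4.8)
The statement is a classical symmetrization (``ghost sample'') identity for the expected generalization error, and the plan is to verify it by a careful unpacking of the definition of $\mathcal{E}_{\mathrm{gen}}$ together with the independence of the sampling of $\mathcal{S}$, $\mathcal{S}^\prime$, and the internal randomness of $\mathcal{A}$. First I would expand $\mathcal{L}(u_{\mathcal{A}(\mathcal{S})},\Omega)$ and $\mathcal{L}(u_{\mathcal{A}(\mathcal{S})},\mathcal{S})$ in Definition~\ref{definition:gen}. For the population term, since each $\boldsymbol{x}_i$ (and each $\boldsymbol{x}_i^\prime$) is drawn from the same distribution over $\Omega$ independently of $\mathcal{A}(\mathcal{S})$'s dependence structure, we have $\mathbb{E}_{\mathcal{S},\mathcal{A}}\big[\mathcal{L}(u_{\mathcal{A}(\mathcal{S})},\Omega)\big]=\mathbb{E}\big[\mathcal{L}(u_{\mathcal{A}(\mathcal{S})},\boldsymbol{x}_i^\prime)\big]$ for every $i$, because $\boldsymbol{x}_i^\prime$ is a fresh independent point; averaging over $i$ gives $\mathbb{E}\big[\tfrac{1}{N}\sum_i \mathcal{L}(u_{\mathcal{A}(\mathcal{S})},\boldsymbol{x}_i^\prime)\big]$. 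For the empirical term, $\mathcal{L}(u_{\mathcal{A}(\mathcal{S})},\mathcal{S})=\tfrac{1}{N}\sum_i \mathcal{L}(u_{\mathcal{A}(\mathcal{S})},\boldsymbol{x}_i)$ by definition of the averaged loss.

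Next I would use the symmetry between $\mathcal{S}$ and the swapped set $\mathcal{S}^{(i)}$. The key observation is a relabeling argument: since $\mathcal{S}$ and $\mathcal{S}^\prime$ are i.i.d. and exchangeable, the joint distribution of $(\mathcal{S},\boldsymbol{x}_i)$ is identical to the joint distribution of $(\mathcal{S}^{(i)},\boldsymbol{x}_i^\prime)$ — swapping the $i$-th coordinates of the two samples is a measure-preserving map. Hence
\[
\mathbb{E}_{\mathcal{S},\mathcal{A}}\big[\mathcal{L}(u_{\mathcal{A}(\mathcal{S})},\boldsymbol{x}_i)\big]=\mathbb{E}_{\mathcal{S},\mathcal{S}^\prime,\mathcal{A}}\big[\mathcal{L}(u_{\mathcal{A}(\mathcal{S}^{(i)})},\boldsymbol{x}_i^\prime)\big].
\]
Summing over $i$, dividing by $N$, and subtracting from the population term (rewritten via $\boldsymbol{x}_i^\prime$ as above), the difference inside the outer expectation becomes exactly $\tfrac{1}{N}\sum_i \mathcal{L}(u_{\mathcal{A}(\mathcal{S}^{(i)})},\boldsymbol{x}_i^\prime)-\tfrac{1}{N}\sum_i \mathcal{L}(u_{\mathcal{A}(\mathcal{S})},\boldsymbol{x}_i^\prime)$. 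Taking absolute value yields Eq.~\eqref{equ:sample_error}.

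The main obstacle, and the step to be careful about, is bookkeeping the dependence structure precisely: one must be explicit that $\mathcal{A}$'s randomness is independent of both samples, that $\boldsymbol{x}_i^\prime$ in the population term is genuinely a fresh draw that can be inserted because of its independence from $\mathcal{A}(\mathcal{S})$, and that the swap map $(\mathcal{S},\mathcal{S}^\prime)\mapsto(\mathcal{S}^{(i)},\mathcal{S}^{\prime(i)})$ leaves the product measure invariant — this is what licenses replacing $\mathcal{A}(\mathcal{S})$ evaluated at a training point by $\mathcal{A}(\mathcal{S}^{(i)})$ evaluated at the held-out point. The rest is linearity of expectation and the triangle inequality (only needed to move the absolute value outside, which is already how $\mathcal{E}_{\mathrm{gen}}$ is defined). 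I expect the argument to be short once the exchangeability is set up; no nontrivial estimates are involved, so this lemma is essentially a definitional reformulation that repackages $\mathcal{E}_{\mathrm{gen}}$ into the leave-one-out form needed for the stability-based bounds in Theorem~\ref{theorem:point_opt}.
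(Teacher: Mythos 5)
Your proposal is correct and follows essentially the same route as the paper's proof: both rewrite the empirical loss $\mathcal{L}(u_{\mathcal{A}(\mathcal{S})},\mathcal{S})$ via the exchangeability of $(\mathcal{S},\boldsymbol{x}_i)$ with $(\mathcal{S}^{(i)},\boldsymbol{x}_i^\prime)$, and identify the population loss with $\mathbb{E}\big[\tfrac{1}{N}\sum_i\mathcal{L}(u_{\mathcal{A}(\mathcal{S})},\boldsymbol{x}_i^\prime)\big]$ since the $\boldsymbol{x}_i^\prime$ are fresh independent draws. Your version is if anything more explicit about the measure-preserving swap and the independence of $\mathcal{A}$'s randomness, which the paper leaves implicit.
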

\begin{proof}
Directly deriving from the in-domain loss, we have:
\begin{equation}
\begin{split}
    \mathbb{E}_{\mathcal{S},\mathcal{A}}\left[\mathcal{L}(\mathcal{A}(\mathcal{S}),\mathcal{S})\right]&=\mathbb{E}_{\mathcal{S},\mathcal{A}}\left[\frac{1}{N}\sum_{i=1}^{N}\mathcal{L}(\mathcal{A}(\mathcal{S}), \boldsymbol{x}_{i})\right]\\
    &=\mathbb{E}_{\mathcal{S},\mathcal{S}^\prime,\mathcal{A}}\left[\frac{1}{N}\sum_{i=1}^{N}\mathcal{L}(\mathcal{A}(\mathcal{S}^{(i)}), \boldsymbol{x}_{i}^\prime)\right] \quad \quad \quad \quad \text{(Expectation on $\boldsymbol{x}_i^\prime$)}\\
    &=\mathbb{E}_{\mathcal{S},\mathcal{S}^\prime,\mathcal{A}}\left[\frac{1}{N}\sum_{i=1}^{N}\mathcal{L}(\mathcal{A}(\mathcal{S}), \boldsymbol{x}_{i}^\prime)\right]+\delta\\
    &=\mathbb{E}_{\mathcal{S},\mathcal{A}}\left[\mathcal{L}(\mathcal{A}(\mathcal{S}), \Omega)\right] +{\delta}.
\end{split}
\end{equation}
Then, according to Definition \ref{definition:gen}, we have:
\begin{equation}
\begin{split}
\mathcal{E}_{\mathrm{gen}}=\delta=\mathbb{E}_{\mathcal{S},\mathcal{S}^\prime,\mathcal{A}}\left[\frac{1}{N}\sum_{i=1}^{N}\mathcal{L}(\mathcal{A}(\mathcal{S}^{(i)}), \boldsymbol{x}_{i}^\prime)-\frac{1}{N}\sum_{i=1}^{N}\mathcal{L}(\mathcal{A}(\mathcal{S}), \boldsymbol{x}_{i}^\prime)\right].
\end{split}
\end{equation}
\end{proof}

\begin{lemma}[\textbf{Convex case}]\label{lemma:convex}
    Given the stochastic gradient method with an update rule as $G_{\alpha,\boldsymbol{x}}(\theta)=\theta-\alpha\nabla_{\theta}\mathcal{L}(\theta,\boldsymbol{x})$ and $\mathcal{L}$ is convex in $\theta$, then for $\alpha\leq\frac{2}{\beta}$, we have $\|G_{\alpha,\boldsymbol{x}}(\theta_1)-G_{\alpha,\boldsymbol{x}}(\theta_2)\|\leq\|\theta_1-\theta_2\|$.
\end{lemma}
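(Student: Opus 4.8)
The plan is to prove the expansiveness bound $\|G_{\alpha,\boldsymbol{x}}(\theta_1)-G_{\alpha,\boldsymbol{x}}(\theta_2)\|\leq\|\theta_1-\theta_2\|$ for a single SGD step on a convex, $\beta$-smooth loss. First I would expand the squared norm of the difference:
\begin{equation*}
\|G_{\alpha,\boldsymbol{x}}(\theta_1)-G_{\alpha,\boldsymbol{x}}(\theta_2)\|^2=\|\theta_1-\theta_2\|^2-2\alpha\langle\theta_1-\theta_2,\nabla\mathcal{L}(\theta_1,\boldsymbol{x})-\nabla\mathcal{L}(\theta_2,\boldsymbol{x})\rangle+\alpha^2\|\nabla\mathcal{L}(\theta_1,\boldsymbol{x})-\nabla\mathcal{L}(\theta_2,\boldsymbol{x})\|^2.
\end{equation*}
The goal is then to show the last two terms sum to something nonpositive whenever $\alpha\le 2/\beta$.

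The key tool is the co-coercivity of the gradient of a convex $\beta$-smooth function: for such $\mathcal{L}$ one has
\begin{equation*}
\langle\theta_1-\theta_2,\nabla\mathcal{L}(\theta_1,\boldsymbol{x})-\nabla\mathcal{L}(\theta_2,\boldsymbol{x})\rangle\ \geq\ \frac{1}{\beta}\|\nabla\mathcal{L}(\theta_1,\boldsymbol{x})-\nabla\mathcal{L}(\theta_2,\boldsymbol{x})\|^2.
\end{equation*}
I would include a short justification of this inequality (it follows by applying the standard descent-lemma/smoothness estimate to the auxiliary convex functions $h_i(\theta)=\mathcal{L}(\theta,\boldsymbol{x})-\langle\nabla\mathcal{L}(\theta_i,\boldsymbol{x}),\theta\rangle$, each minimized at $\theta_i$, and comparing their values), or simply cite it as a standard fact. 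Substituting co-coercivity into the expansion gives
\begin{equation*}
\|G_{\alpha,\boldsymbol{x}}(\theta_1)-G_{\alpha,\boldsymbol{x}}(\theta_2)\|^2\ \leq\ \|\theta_1-\theta_2\|^2+\alpha\Big(\alpha-\frac{2}{\beta}\Big)\|\nabla\mathcal{L}(\theta_1,\boldsymbol{x})-\nabla\mathcal{L}(\theta_2,\boldsymbol{x})\|^2.
\end{equation*}
Since $\alpha>0$ and $\alpha\le 2/\beta$, the coefficient $\alpha(\alpha-2/\beta)\le 0$, so the second term is nonpositive and the bound $\|G_{\alpha,\boldsymbol{x}}(\theta_1)-G_{\alpha,\boldsymbol{x}}(\theta_2)\|\le\|\theta_1-\theta_2\|$ follows by taking square roots.

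There is no real obstacle here — this is the classical nonexpansiveness lemma for gradient steps (as used in Hardt, Recht, Singer). The only point requiring a little care is making sure co-coercivity is invoked correctly: it needs \emph{both} convexity and $\beta$-smoothness, and the per-point loss $\mathcal{L}(\cdot,\boldsymbol{x})$ must satisfy these for the fixed $\boldsymbol{x}$, which is exactly what Assumption \ref{assump:lipschitz} together with the convexity hypothesis of the lemma provides. If one prefers to avoid citing co-coercivity, the self-contained route via the two auxiliary functions $h_1,h_2$ and the inequality $h_i(y)\ge h_i(\theta_i)+\frac{1}{2\beta}\|\nabla h_i(y)\|^2$ adds only a few lines.
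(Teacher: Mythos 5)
Your proposal is correct and matches the paper's own argument essentially line for line: both expand $\|G_{\alpha,\boldsymbol{x}}(\theta_1)-G_{\alpha,\boldsymbol{x}}(\theta_2)\|^2$, invoke co-coercivity of the gradient (which the paper cites as ``convexity and Assumption \ref{assump:lipschitz}'') to bound the cross term by $\frac{1}{\beta}\|\nabla\mathcal{L}(\theta_1,\boldsymbol{x})-\nabla\mathcal{L}(\theta_2,\boldsymbol{x})\|^2$, and conclude from $\alpha\leq\frac{2}{\beta}$. Your extra remark on how to justify co-coercivity is a sound addition but not a departure from the paper's route.
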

\begin{proof}For clarity, we denote $g=\|\nabla_{\theta}\mathcal{L}(\theta_1,\boldsymbol{x})-\nabla_{\theta}\mathcal{L}(\theta_2,\boldsymbol{x})\|$. Then we have:
    \begin{equation}
        \begin{split}
            \|G_{\alpha,\boldsymbol{x}}(\theta_1)-G_{\alpha,\boldsymbol{x}}(\theta_2)\|^2&=\|\theta_1-\theta_2-\alpha(\nabla_{\theta}\mathcal{L}(\theta_1,\boldsymbol{x})-\nabla_{\theta}\mathcal{L}(\theta_2,\boldsymbol{x}))\|^2\\
            &=\|\theta_1-\theta_2\|^2 - 2\alpha\left(\nabla_{\theta}\mathcal{L}(\theta_1,\boldsymbol{x})-\nabla_{\theta}\mathcal{L}(\theta_2,\boldsymbol{x})\right)^{\sf T}(\theta_1-\theta_2)+\alpha^2g^2\\
            &\leq \|\theta_1-\theta_2\|^2 - \frac{2\alpha}{\beta} g^2 +\alpha^2g^2 \quad\quad\quad\quad \text{({Convexity} and Assumption \ref{assump:lipschitz})}\\
            &\leq \|\theta_1-\theta_2\|^2. \quad\quad\quad\quad\ \ \quad\quad\quad\quad\quad\quad\ \text{($\alpha\leq\frac{2}{\beta}$)}
        \end{split}
    \end{equation}
\end{proof}
\vspace{-15pt}
\paragraph{Convex setting} Next, we will give the proof for the convex case of Theorem \ref{theorem:point_opt}(1).
\begin{proof} According to Lemma \ref{lemma:sample}, we attempt to bound the generalization error in expectation $\mathcal{E}_{\mathrm{gen}}$ by analyzing the error difference between two selected sample sets. Denote $\mathcal{S}$ and $\mathcal{S}^\prime$ as {two identical sample sets of size $|\mathcal{S}|$ except for one sample}. Suppose that with the stochastic gradient method on these two sets, we can obtain two optimization trajectories $\{\theta_t\}_{t=1}^{T}$ and $\{\theta_t^\prime\}_{t=1}^{T}$ respectively. According to Assumption \ref{assump:lipschitz}, we have the following {inequality}:
\begin{equation}
    \mathbb{E}\left[|\mathcal{L}({u_{\theta_t}},\boldsymbol{x})-\mathcal{L}({u_{\theta_t^\prime}},\boldsymbol{x})|\right]\leq L\mathbb{E}\left[\|\theta_t-\theta_t^\prime\|\right].
\end{equation}
{We assume two optimization trajectories, both obtained under the same random update rule and random permutation rule.} Note that at the $t$-th step, {with probability $(1-\frac{1}{|\mathcal{S}|})$}, the example selected by the stochastic gradient method is the same in both $\mathcal{S}$ and $\mathcal{S}^\prime$. As for the other $\frac{1}{|\mathcal{S}|}$ probability, we have to deal with different selected samples. Thus, according to Lemma \ref{lemma:convex}, we have:
\begin{equation}
    \begin{split}
        \mathbb{E}\left[\|\theta_{t+1}-\theta_{t+1}^\prime\|\right]&= (1-\frac{1}{|\mathcal{S}|})\mathbb{E}\left[\|G_{\alpha_t,\boldsymbol{x}}(\theta_t)-G_{\alpha_t,\boldsymbol{x}}(\theta_t^\prime)\|\right]+\frac{1}{|\mathcal{S}|}\mathbb{E}\left[\|G_{\alpha_t,\boldsymbol{x}}(\theta_t)-G_{\alpha_t,\boldsymbol{x}^\prime}(\theta_t^\prime)\|\right] \\
        &\leq (1-\frac{1}{|\mathcal{S}|})\mathbb{E}[\|\theta_t-\theta_t^\prime\|]+\frac{1}{|\mathcal{S}|}\mathbb{E}\left[\|\theta_t-\theta_t^\prime\| + \|\alpha_t\nabla_{\theta}\mathcal{L}(u_\theta,\boldsymbol{x})-\alpha_t\nabla_{\theta}\mathcal{L}(u_\theta,\boldsymbol{x}^\prime)\|\right].\\
    \end{split}
\end{equation}
Due to the $L$-Lipschitz assumption of $\mathcal{L}$, the gradient $\nabla_{\theta}\mathcal{L}(u_\theta,\boldsymbol{x})$ is uniformly smaller than $L$, then:
\begin{equation}
    \begin{split}
        \mathbb{E}\left[\|\theta_{t+1}-\theta_{t+1}^\prime\|\right]& \leq \mathbb{E}\left[\|\theta_t-\theta_t^\prime\|\right] + \frac{2\alpha_t L}{|\mathcal{S}|}.
    \end{split}
\end{equation}
In summary, since both optimization trajectories start from the same initialization, namely $\theta_0=\theta_0^\prime$, the following {inequality} holds:
\begin{equation}
    \begin{split}
        \mathbb{E}\left[|\mathcal{L}({u_{\theta_T}},\boldsymbol{x})-\mathcal{L}({u_{\theta_T^\prime}},\boldsymbol{x})|\right]\leq \frac{2L^2}{|\mathcal{S}|}\sum_{t=1}^{T}\alpha_t.
    \end{split}
\end{equation}
From Lemma \ref{lemma:sample}, we have $\mathcal{E}_{\mathrm{gen}}\leq \frac{2L^2}{|\mathcal{S}|}\sum_{t=1}^{T}\alpha_t$.
\end{proof}

\begin{lemma}[\textbf{Non-convex case}]\label{lemma:non_convex}
    Given the stochastic gradient method with an update rule as $G_{\alpha,\boldsymbol{x}}(\theta)=\theta-\alpha\nabla_{\theta}\mathcal{L}(\theta,\boldsymbol{x})$, then we have $\|G_{\alpha,\boldsymbol{x}}(\theta_1)-G_{\alpha,\boldsymbol{x}}(\theta_2)\|\leq(1+\alpha\beta)\|\theta_1-\theta_2\|$.
\end{lemma}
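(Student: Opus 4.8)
The plan is to prove this by a direct computation, using only the $\beta$-smoothness half of Assumption~\ref{assump:lipschitz}. In contrast to the convex case of Lemma~\ref{lemma:convex}, where the cross term $-2\alpha(\nabla_{\theta}\mathcal{L}(\theta_1,\boldsymbol{x})-\nabla_{\theta}\mathcal{L}(\theta_2,\boldsymbol{x}))^{\sf T}(\theta_1-\theta_2)$ can be sign-controlled by co-coercivity to yield a non-expansive map, here we make no convexity assumption, so we simply bound the gradient-difference contribution in magnitude and accept the expansive factor $(1+\alpha\beta)$.

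Concretely, I would first expand the update map as
\[
G_{\alpha,\boldsymbol{x}}(\theta_1)-G_{\alpha,\boldsymbol{x}}(\theta_2)=(\theta_1-\theta_2)-\alpha\bigl(\nabla_{\theta}\mathcal{L}(\theta_1,\boldsymbol{x})-\nabla_{\theta}\mathcal{L}(\theta_2,\boldsymbol{x})\bigr),
\]
then apply the triangle inequality to split off the gradient term,
\[
\|G_{\alpha,\boldsymbol{x}}(\theta_1)-G_{\alpha,\boldsymbol{x}}(\theta_2)\|\leq\|\theta_1-\theta_2\|+\alpha\,\|\nabla_{\theta}\mathcal{L}(\theta_1,\boldsymbol{x})-\nabla_{\theta}\mathcal{L}(\theta_2,\boldsymbol{x})\|,
\]
and finally invoke the $\beta$-smoothness inequality from Assumption~\ref{assump:lipschitz}, which bounds the second summand by $\alpha\beta\|\theta_1-\theta_2\|$. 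Collecting terms gives $\|G_{\alpha,\boldsymbol{x}}(\theta_1)-G_{\alpha,\boldsymbol{x}}(\theta_2)\|\leq(1+\alpha\beta)\|\theta_1-\theta_2\|$, as claimed.

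There is essentially no obstacle: this is the textbook estimate for the Lipschitz constant of a single gradient-descent step under smoothness alone. The only point worth flagging is the role this lemma plays downstream — the expansive factor $(1+\alpha_t\beta)$ is exactly what forces the non-convex branch of Theorem~\ref{theorem:point_opt}(2) (and correspondingly Theorem~\ref{theorem:region_opt}(2)) to use a sufficiently fast-decaying step size $\alpha_t\leq 1/(\beta t)$, so that the accumulated product $\prod_t(1+\alpha_t\beta)$ over the iterations after the first divergence step remains $\mathcal{O}(T)$ rather than exponential; this lemma is the per-step building block feeding into that stability recursion.
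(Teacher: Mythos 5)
Your proof is correct and follows essentially the same route as the paper's: expand the update map, apply the triangle inequality, and invoke $\beta$-smoothness to get the factor $(1+\alpha\beta)$. In fact, your version is slightly cleaner than the paper's displayed derivation, which writes the triangle-inequality step as an equality by typo; your downstream remark about why this expansive factor forces $\alpha_t\leq 1/(\beta t)$ in the non-convex branch is also accurate.
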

\begin{proof} This {inequality} can be easily obtained from the following:
        \begin{equation}
        \begin{split}
            \|G_{\alpha,\boldsymbol{x}}(\theta_1)-G_{\alpha,\boldsymbol{x}}(\theta_2)\|&=\|\theta_1-\theta_2-\alpha(\nabla_{\theta}\mathcal{L}(\theta_1,\boldsymbol{x})-\nabla_{\theta}\mathcal{L}(\theta_2,\boldsymbol{x}))\|\\
            &=\|\theta_1-\theta_2\| + \alpha \|\nabla_{\theta}\mathcal{L}(\theta_1,\boldsymbol{x})-\nabla_{\theta}\mathcal{L}(\theta_2,\boldsymbol{x})\|\\
            &\leq (1+\alpha\beta)\|\theta_1-\theta_2\|. \quad\quad\quad\quad\quad\quad\quad\quad\quad\quad\quad\quad \text{(Assumption \ref{assump:lipschitz})}
        \end{split}
    \end{equation}
\end{proof}
\vspace{-10pt}
\paragraph{Non-convex setting} Finally, we will give the proof for the non-convex case in Theorem \ref{theorem:point_opt}(2).

\begin{proof}
    We also consider the optimization trajectory $\{\theta_t\}_{t=1}^{T}$ and $\{\theta_t^\prime\}_{t=1}^{T}$ from $\mathcal{S}$ and $\mathcal{S}^\prime$, which are identical except for one element. {We assume two optimization trajectories, both obtained under the same random update rule and random permutation rule.} Let $\delta_t=\|\theta_t-\theta_t^\prime\|$ and $t_0\in\{1,\cdots,|\mathcal{S}|\}$ be a considered iteration. {Here, $t\leq |\mathcal{S}|$ because for $t > |\mathcal{S}|$, we must have $\delta_{t_0}\ne0$.} Then we have:
    \begin{equation}\label{equ:non_convex_summary}
    \begin{split}
        \mathbb{E}\left[|\mathcal{L}({u_{\theta_T}},\boldsymbol{x})-\mathcal{L}({u_{\theta_T^\prime}},\boldsymbol{x})|\right] &= \mathbb{P}(\delta_{t_0}=0)\mathbb{E}\left[|\mathcal{L}({u_{\theta_T}},\boldsymbol{x})-\mathcal{L}({u_{\theta_T^\prime}},\boldsymbol{x})||\delta_{t_0}=0\right] \\
        &+ \mathbb{P}(\delta_{t_0}\neq 0)\mathbb{E}\left[|\mathcal{L}({u_{\theta_T}},\boldsymbol{x})-\mathcal{L}({u_{\theta_T^\prime}},\boldsymbol{x})||\delta_{t_0}\neq0\right] \\
        & \leq L \mathbb{E}\left[\|\theta_T-\theta_T^\prime\||\delta_{t_0}=0\right] + \mathbb{P}(\delta_{t_0}\neq0)C \quad\quad \text{(Upper bound of $\mathcal{L}$)} \\
        & = \frac{Ct_0}{|\mathcal{S}|} + L \mathbb{E}\left[\|\theta_T-\theta_T^\prime\||\delta_{t_0}=0\right].
    \end{split}
    \end{equation}
Similar to the convex case, we analyze the expectation of parameter difference in the $(t+1)$-th iteration as follows. Since $\alpha\leq \frac{1}{\beta t}$, then we have:
\begin{equation}\label{equ:non_convex_core}
\begin{split}
    \mathbb{E}\left[\|\theta_{t+1}-\theta_{t+1}^\prime\||\delta_{t_0}=0\right]&\leq (1-\frac{1}{|\mathcal{S}|})(1+\frac{1}{t}) \mathbb{E}\left[\|\theta_{t}-\theta_{t}^\prime\|\right] + \frac{1}{|\mathcal{S}|}\mathbb{E}\left[\|\theta_{t}-\theta_{t}^\prime\|\right]+ \frac{2 L}{\beta t|\mathcal{S}|} \\
    & \leq (1+\frac{1}{t}-\frac{1}{t|\mathcal{S}|})\mathbb{E}[\delta_t] + \frac{2L}{\beta t|\mathcal{S}|} \\
    & \leq \mathrm{exp}(\frac{1}{t}-\frac{1}{t|\mathcal{S}|})\mathbb{E}[\delta_t] + \frac{2L}{\beta t|\mathcal{S}|}.\quad\quad\quad\quad\ \quad\quad \text{($1+x\leq \mathrm{exp}(x)$)}
\end{split}
\end{equation}
Accumulating the above in equations recursively, we have:
\begin{equation}\label{equ:non_convex_derive}
\begin{split}
    \mathbb{E}\left[\|\theta_{T}-\theta_{T}^\prime\||\delta_{t_0}=0\right]&\leq\sum_{t=t_0+1}^T\left\{\Pi_{k=t+1}^T\mathrm{exp}\left(\frac{1}{t}-\frac{1}{t|\mathcal{S}|}\right)\right\}\frac{2L}{\beta t|\mathcal{S}|} \\
    &\leq \sum_{t=t_0+1}^T\mathrm{exp}\left((1-\frac{1}{|\mathcal{S}|})\log \frac{T}{t}\right)\frac{2L}{\beta t|\mathcal{S}|} \quad\quad\quad\quad\quad\quad\text{($\sum_{k=t+1}^{T}\frac{1}{k}\leq\log \frac{T}{t}$)}\\
    & =\frac{2L}{\beta|\mathcal{S}|}T^{1-\frac{1}{|\mathcal{S}|}}\sum_{t=t_0+1}^T t^{-(1-\frac{1}{|\mathcal{S}|})-1}\\
    & \leq\frac{2L}{\beta |\mathcal{S}|} T^{1-\frac{1}{|\mathcal{S}|}}\frac{1}{1-\frac{1}{|\mathcal{S}|}}\left(t_0^{-(1-\frac{1}{|\mathcal{S}|})}-T^{-(1-\frac{1}{|\mathcal{S}|})}\right). \quad\text{(Integral approximation)}\\
\end{split}
\end{equation}
Organizing the above {inequalities}, we have:
\begin{equation}\label{equ:summary}
\begin{split}
    \mathbb{E}\left[\|\theta_{T}-\theta_{T}^\prime\||\delta_{t_0}=0\right]&\leq \frac{2L}{\beta (|\mathcal{S}|-1)}\left(\frac{T}{t_0}\right)^{1-\frac{1}{|\mathcal{S}|}}-\frac{2L}{\beta (|\mathcal{S}|-1)}\\
    & \leq \frac{2L}{\beta (|\mathcal{S}|-1)}\left(\frac{T}{t_0}\right)-\frac{2L}{\beta (|\mathcal{S}|-1)}.
\end{split}
\end{equation}
According to Eq.~\eqref{equ:non_convex_summary}, for arbitrary $T\ge1$, we just choose $t_0=1$, then
\begin{equation}
\begin{split}
    \mathbb{E}\left[|\mathcal{L}({u_{\theta_T}},\boldsymbol{x})-\mathcal{L}({u_{\theta_T^\prime}},\boldsymbol{x})|\right]&\leq \frac{C}{|\mathcal{S}|} + \frac{2L^2(T-1)}{\beta (|\mathcal{S}|-1)}.
\end{split}
\end{equation}
\end{proof}
\vspace{-15pt}
This boundary is {tighter} than \cite{hardt2016train,xiao2022stability}, where the latter omits the $\frac{2L}{\beta (|\mathcal{S}|-1)}$ term in Eq.~\eqref{equ:summary}.

\subsection{Proof for Properties of Region Optimization (Lemma \ref{lemma:region_smooth})}
Since $\mathcal{L}_{r}^{\mathrm{region}}$ is defined as a region integral of $\mathcal{L}$, Lemma \ref{lemma:region_smooth} can be easily obtained by:
\begin{equation}
\begin{split}
   \text{Bounded:}\ \ \ \ & \mathcal{L}_{r}^{\mathrm{region}}(u_{\theta}, \boldsymbol{x})=\frac{1}{|\Omega_r|}\int_{\Omega_r}\mathcal{L}(u_{\theta}, \boldsymbol{x}+\boldsymbol{\xi})\mathrm{d}\boldsymbol{\xi} \leq \max_{\theta,\boldsymbol{x}} \mathcal{L}(u_{\theta}, \boldsymbol{x}) \\
    \text{Convexity:}\ \ \ \ & \left(\nabla_{\theta}\mathcal{L}_{r}^{\mathrm{region}}(u_{\theta_1}, \boldsymbol{x})-\nabla_{\theta}\mathcal{L}_{r}^{\mathrm{region}}(u_{\theta_2}, \boldsymbol{x})\right)^{\sf T}(\theta_1-\theta_2)\\
    &=\frac{1}{|\Omega_r|}\int_{\Omega_r}\left(\nabla_{\theta}\mathcal{L}(u_{\theta_1}, \boldsymbol{x}+\boldsymbol{\xi})-\nabla_{\theta}\mathcal{L}(u_{\theta_2}, \boldsymbol{x}+\boldsymbol{\xi})\right)^{\sf T}(\theta_1-\theta_2)\mathrm{d}\boldsymbol{\xi} \ge 0 \\
    \text{Lipschitz:}\ \ \ \ & \|\mathcal{L}_{r}^{\mathrm{region}}(u_{\theta_1}, \boldsymbol{x})-\mathcal{L}_{r}^{\mathrm{region}}(u_{\theta_2}, \boldsymbol{x})\|\\
    &\leq\frac{1}{|\Omega_r|}\int_{\Omega_r}\left\|\mathcal{L}(u_{\theta_1}, \boldsymbol{x}+\boldsymbol{\xi})-\mathcal{L}(u_{\theta_2}, \boldsymbol{x}+\boldsymbol{\xi})\right\|\mathrm{d}\boldsymbol{\xi} \leq L\|\theta_1-\theta_2\| \\
    \text{Smoothness:}\ \ \ \ & \|\nabla_{\theta}\mathcal{L}_{r}^{\mathrm{region}}(u_{\theta_1}, \boldsymbol{x})-\nabla_{\theta}\mathcal{L}_{r}^{\mathrm{region}}(u_{\theta_2}, \boldsymbol{x})\|\\
    &\leq\frac{1}{|\Omega_r|}\int_{\Omega_r}\left\|\nabla_{\theta}\mathcal{L}(u_{\theta_1}, \boldsymbol{x}+\boldsymbol{\xi})-\nabla_{\theta}\mathcal{L}(u_{\theta_2}, \boldsymbol{x}+\boldsymbol{\xi})\right\|\mathrm{d}\boldsymbol{\xi} \leq \beta \|\theta_1-\theta_2\|.
\end{split}
\end{equation}


\subsection{Proof for Region Optimization Generalization Error (Theorem \ref{theorem:region_opt})}\label{proof:region_opt}
Similar to the proof in Appendix \ref{appdix:point_proof}, we will discuss the generalization error on region optimization. 

\paragraph{Convex setting} Firstly, we would like to prove the convex case.

\vspace{-5pt}
\begin{proof}
 According to Lemma \ref{lemma:region_smooth}, the region loss also holds the convexity, Lipschitz and smoothness properties, which ensures that Lemma \ref{lemma:convex} and \ref{lemma:non_convex} still work for $\mathcal{L}_{r}^{\mathrm{region}}$. We also focus on the selected sample sets $\mathcal{S}$ and $\mathcal{S}^\prime$, which are identical except for one element. Thus, at $t$-step, the following equation is satisfied:
\begin{equation}
    \begin{split}
        \mathbb{E}\left[\|\theta_{t+1}-\theta_{t+1}^\prime\|\right]&= (1-\frac{1}{|\mathcal{S}|})\mathbb{E}\left[\|G^{\mathrm{region}}_{\alpha_t,\boldsymbol{x}}(\theta_t)-G^{\mathrm{region}}_{\alpha_t,\boldsymbol{x}}(\theta_t^\prime)\|\right]+\frac{1}{|\mathcal{S}|}\mathbb{E}\left[\|G^{\mathrm{region}}_{\alpha_t,\boldsymbol{x}}(\theta_t)-G^{\mathrm{region}}_{\alpha_t,\boldsymbol{x}^\prime}(\theta_t^\prime)\|\right] \\
        &\leq (1-\frac{1}{|\mathcal{S}|})\mathbb{E}\left[\|\theta_t-\theta_t^\prime\|\right]+\frac{1}{|\mathcal{S}|}\mathbb{E}\left[\|G^{\mathrm{region}}_{\alpha_t,\boldsymbol{x}}(\theta_t)-G^{\mathrm{region}}_{\alpha_t,\boldsymbol{x}^\prime}(\theta_t^\prime)\|\right],
    \end{split}
\end{equation}
where $G^{\mathrm{region}}_{\alpha,\boldsymbol{x}}(\theta)=\theta-\alpha\nabla_{\theta}\mathcal{L}_{r}^{\mathrm{region}}(\theta,\boldsymbol{x})$. As for the second item on the right part, we also consider the upper bound of $\nabla_{\theta}\mathcal{L}_{r}^{\mathrm{region}}(u_\theta, \boldsymbol{x})$. However, different from point-wise optimization, $\boldsymbol{x}+\Omega_r$ could overlap with $\boldsymbol{x}^\prime+\Omega_r$. {For clarity, we define the overlapped area as $\Omega_{\mathrm{in}}$, whose size is larger than zero when $\boldsymbol{x}^\prime$ fall into the area centered at $\boldsymbol{x}$ with size $2^{(d+1)}|\Omega_r|$. Actually, due to the boundary of $\Omega$, we cannot always ensure $(\boldsymbol{x}^\prime+\Omega_r)\subset \Omega$. Thus, for simplification, we assume that the domain $\Omega$ can be projected to a torus, where the out-of-domain samples will be re-included to $\Omega$.}

{Further, $\mathbb{E}_{\boldsymbol{x},\boldsymbol{x}^\prime\in\Omega,\ \mathrm{s.t.~}|\Omega_{\mathrm{in}}|=0}$ is simplified as $\mathbb{E}_{\Omega_{\mathrm{in}}=0}$ and $\mathbb{E}_{\boldsymbol{x},\boldsymbol{x}^\prime\in\Omega,\ \mathrm{s.t.~}|\Omega_{\mathrm{in}}|>0}$ is shorted as $\mathbb{E}_{\Omega_{\mathrm{in}}>0}$. And the operator $\mathbb{I}$ is defined as $\mathbb{I}(x)=\max\left(0,\min(1,x)\right)$} Thus, we can obtain the estimation for the difference between the updated model parameters through the following derivations:
\begin{equation}\label{equ:key_proof_region_convex}
\begin{split}
    & \mathbb{E}\left[\|G^{\mathrm{region}}_{\alpha_t,\boldsymbol{x}}(\theta_t)-G^{\mathrm{region}}_{\alpha_t,\boldsymbol{x}^\prime}(\theta_t^\prime)\|\right]\\
    &=\mathbb{I}\left(\frac{|\Omega|-2^{(d+1)}|\Omega_r|}{|\Omega|}\right)\mathbb{E}_{\Omega_{\mathrm{in}}=0}\left[\|G^{\mathrm{region}}_{\alpha_t,\boldsymbol{x}}(\theta_t)-G^{\mathrm{region}}_{\alpha_t,\boldsymbol{x}^\prime}(\theta_t^\prime)\|\right] \\
    &+ \mathbb{I}\left(\frac{2^{(d+1)}|\Omega_r|}{|\Omega|}\right)\mathbb{E}_{\Omega_{\mathrm{in}}>0}\left[\|G^{\mathrm{region}}_{\alpha_t,\boldsymbol{x}}(\theta_t)-G^{\mathrm{region}}_{\alpha_t,\boldsymbol{x}^\prime}(\theta_t^\prime)\|\right]\\ 
    &\leq \mathbb{I}\left(\frac{|\Omega|-2^{(d+1)}|\Omega_r|}{|\Omega|}\right)\mathbb{E}_{\Omega_{\mathrm{in}}=0}\left[\|\theta_t-\theta_t^\prime\|+2\alpha_t L\right]\\
    &+ \mathbb{I}\left(\frac{2^{(d+1)}|\Omega_r|}{|\Omega|}\right)\mathbb{E}_{\Omega_{\mathrm{in}}>0}\left[\left\|\theta_t-\theta_t^\prime - \left(\alpha_t\frac{1}{|\Omega_r|}\int_{\Omega_{r}}\nabla_{\theta}\mathcal{L}({u_{\theta_t}}, \boldsymbol{x}+\boldsymbol{\xi})\mathrm{d}\boldsymbol{\xi} - \alpha_t\frac{1}{|\Omega_r|}\int_{\Omega_{r}}\nabla_{\theta}\mathcal{L}({u_{\theta_t^\prime}}, \boldsymbol{x^{\prime}}+\boldsymbol{\xi})\mathrm{d}\boldsymbol{\xi}\right)\right\|\right]\\
    &\leq \mathbb{I}\left(\frac{|\Omega|-2^{(d+1)}|\Omega_r|}{|\Omega|}\right)\mathbb{E}_{\Omega_{\mathrm{in}}=0}\left[\|\theta_t-\theta_t^\prime\|+2\alpha_t L\right]\\
    & +\mathbb{I}\left(\frac{2^{(d+1)}|\Omega_r|}{|\Omega|}\right)\mathbb{E}_{\Omega_{\mathrm{in}}>0}\left[\left\|\theta_t-\theta_t^\prime - \left(\alpha_t\frac{1}{|\Omega_r|}\int_{\Omega_{\mathrm{in}}}\nabla_{\theta}\mathcal{L}({u_{\theta_t}}, \boldsymbol{x}+\boldsymbol{\xi})\mathrm{d}\boldsymbol{\xi} - \alpha_t\frac{1}{|\Omega_r|}\int_{\Omega_{\mathrm{in}}}\nabla_{\theta}\mathcal{L}({u_{\theta_t^\prime}}, \boldsymbol{x}+\boldsymbol{\xi})\mathrm{d}\boldsymbol{\xi}\right)\right\|\right] \\
    & + \mathbb{I}\left(\frac{2^{(d+1)}|\Omega_r|}{|\Omega|}\right)\mathbb{E}_{\Omega_{\mathrm{in}}>0}\left[\frac{|\Omega_r|-|\Omega_{\mathrm{in}}|}{|\Omega_{r}|}2\alpha_t L\right]\\
    &\leq \mathbb{I}\left(\frac{|\Omega|-2^{(d+1)}|\Omega_r|}{|\Omega|}\right)\mathbb{E}_{\Omega_r=0}\left[\|\theta_t-\theta_t^\prime\|+2\alpha_t L\right] + \mathbb{I}\left(\frac{2^{(d+1)}|\Omega_r|}{|\Omega|}\right)\mathbb{E}_{\Omega_{\mathrm{in}}>0}\left[\|\theta_t-\theta_t^\prime\|+\frac{|\Omega_r|-|\Omega_{\mathrm{in}}|}{|\Omega_{r}|}2\alpha_t L\right]\\
    & \leq \mathbb{E}\left[\|\theta_t-\theta_t^\prime\|\right] + 2\alpha_t L - \mathbb{I}\left(\frac{2^{(d+1)}|\Omega_r|}{|\Omega|}\right)\mathbb{E}_{\Omega_{\mathrm{in}}>0}\left[\frac{|\Omega_{\mathrm{in}}|}{|\Omega_{r}|}2\alpha_t L\right] \\
    & \leq \mathbb{E}\left[\|\theta_t-\theta_t^\prime\|\right] + 2\alpha_t L (1-\frac{|\Omega_r|}{|\Omega|}).
\end{split}
\end{equation}
In the above {inequalities}, the third {inequality} is based on the following derivations. Firstly, denote $g=\|\left(\frac{1}{|\Omega_r|}\int_{\Omega_{\mathrm{in}}}\nabla_{\theta}\mathcal{L}({u_{\theta_t}}, \boldsymbol{x}+\boldsymbol{\xi})\mathrm{d}\boldsymbol{\xi} - \frac{1}{|\Omega_r|}\int_{\Omega_{\mathrm{in}}}\nabla_{\theta}\mathcal{L}({u_{\theta_t^\prime}}, \boldsymbol{x}+\boldsymbol{\xi})\mathrm{d}\boldsymbol{\xi}\right)\|$. According to Assumption~\ref{assump:lipschitz}, we have $g\leq \frac{|\Omega_{\mathrm{in}}|}{|\Omega_r|}\beta \|\theta_t-\theta_t^\prime\|\leq \beta\|\theta_t-\theta_t^\prime\|$. Thus, the following inequality holds:
\begin{equation}
\begin{split}
        & \left\|\theta_t-\theta_t^\prime - \left(\alpha_t\frac{1}{|\Omega_r|}\int_{\Omega_{\mathrm{in}}}\nabla_{\theta}\mathcal{L}({u_{\theta_t}}, \boldsymbol{x}+\boldsymbol{\xi})\mathrm{d}\boldsymbol{\xi} - \alpha_t\frac{1}{|\Omega_r|}\int_{\Omega_{\mathrm{in}}}\nabla_{\theta}\mathcal{L}({u_{\theta_t^\prime}}, \boldsymbol{x}+\boldsymbol{\xi})\mathrm{d}\boldsymbol{\xi}\right)\right\|^2 \\
        & = \|\theta_t-\theta_t^\prime\|^2 - 2\alpha_t g^{\sf T}(\theta_t-\theta_t^\prime) + \alpha_t^2 g^2 \\ 
        & \leq \|\theta_t-\theta_t^\prime\|^2 - 2\frac{\alpha_t}{\beta}  g^2 + \alpha_t^2 g^2 \quad\quad\quad\quad\quad\quad\quad \text{({Convexity} and $g\leq \beta\|\theta_t-\theta_t^\prime\|$)} \\
        & \leq \|\theta_t-\theta_t^\prime\|^2.\quad\quad\quad\quad\ \ \quad\quad\quad\quad\quad\quad\ \quad\quad\quad\ \text{($\alpha_t\leq\frac{2}{\beta}$)}
\end{split}
\end{equation}

Thus, recursively accumulating the residual at the $t$-th step, we have:
\begin{equation}
\begin{split}
\mathcal{E}_{\mathrm{gen}}\leq (1-\frac{|\Omega_r|}{|\Omega|})\frac{2L^2}{|\mathcal{S}|}\sum_{t=1}^{T}\alpha_t.
\end{split}
\end{equation}
{For the \textbf{more general case}, we no longer assume that the domain $\Omega$ can be projected to a torus, resulting in a non-symmetric scenario when $|\Omega_{\mathrm{in}}| > 0$. This asymmetry arises due to the presence of boundaries, since points that could potentially intersect with the set $\boldsymbol{x} + \Omega_r$ may be truncated by the boundary.  Specifically, we consider $\Omega = [0, l]^{(d+1)}$ and $\Omega_r = [0, r]^{(d+1)}$. The concrete probability $\mathbb{P}(|\Omega_{\mathrm{in}}| > 0)$ and the expectation $\mathbb{E}_{|\Omega_{\mathrm{in}}| > 0}(|\Omega_{\mathrm{in}}|)$ can be calculated as follows:}
\begin{equation}
\begin{split}
    \mathbb{P}(|\Omega_{\mathrm{in}}|>0) = (\frac{r(2l-3r)}{(l-r)^2})^{(d+1)}, \ \mathbb{E}_{|\Omega_{\mathrm{in}}|>0}(|\Omega_{\mathrm{in}}|) = (\frac{r^2(3l-4r)}{3(l-r)^2})^{(d+1)}.
\end{split}
\end{equation}

{Thus, the general case of Eq.~\ref{equ:key_proof_region_convex} can be reformulated using the identities above. We assume $\frac{r}{l} < 0.5$, as when $\frac{r}{l} \geq 0.5$, it follows that $\mathbb{P}(|\Omega_{\mathrm{in}}| > 0) = 1$.} Specifically, Eq.~\eqref{equ:key_proof_region_convex} can be rewrite as:\begin{equation}\label{equ:key_proof_nonperiodic_region_convex}
\begin{split}
    & \mathbb{E}\left[\|G^{\mathrm{region}}_{\alpha_t,\boldsymbol{x}}(\theta_t)-G^{\mathrm{region}}_{\alpha_t,\boldsymbol{x}^\prime}(\theta_t^\prime)\|\right]\\
    &=\mathbb{P}(|\Omega_{\mathrm{in}}|=0)\mathbb{E}_{\Omega_{\mathrm{in}}=0}\left[\|G^{\mathrm{region}}_{\alpha_t,\boldsymbol{x}}(\theta_t)-G^{\mathrm{region}}_{\alpha_t,\boldsymbol{x}^\prime}(\theta_t^\prime)\|\right] + \mathbb{P}(|\Omega_{\mathrm{in}}|>0)\mathbb{E}_{\Omega_{\mathrm{in}}>0}\left[\|G^{\mathrm{region}}_{\alpha_t,\boldsymbol{x}}(\theta_t)-G^{\mathrm{region}}_{\alpha_t,\boldsymbol{x}^\prime}(\theta_t^\prime)\|\right]\\ 
    &\leq \mathbb{P}(|\Omega_{\mathrm{in}}|=0)\mathbb{E}_{\Omega_{\mathrm{in}}=0}\left[\|\theta_t-\theta_t^\prime\|+2\alpha_t L\right]\\
    &+ \mathbb{P}(|\Omega_{\mathrm{in}}|>0)\mathbb{E}_{\Omega_{\mathrm{in}}>0}\left[\left\|\theta_t-\theta_t^\prime - \left(\alpha_t\frac{1}{|\Omega_r|}\int_{\Omega_{r}}\nabla_{\theta}\mathcal{L}({u_{\theta_t}}, \boldsymbol{x}+\boldsymbol{\xi})\mathrm{d}\boldsymbol{\xi} - \alpha_t\frac{1}{|\Omega_r|}\int_{\Omega_{r}}\nabla_{\theta}\mathcal{L}({u_{\theta_t^\prime}}, \boldsymbol{x^{\prime}}+\boldsymbol{\xi})\mathrm{d}\boldsymbol{\xi}\right)\right\|\right]\\
    &\leq \mathbb{P}(|\Omega_{\mathrm{in}}|=0)\mathbb{E}_{\Omega_{\mathrm{in}}=0}\left[\|\theta_t-\theta_t^\prime\|+2\alpha_t L\right] + \mathbb{P}(|\Omega_{\mathrm{in}}|>0)\mathbb{E}_{\Omega_{\mathrm{in}}>0}\left[\frac{|\Omega_r|-|\Omega_{\mathrm{in}}|}{|\Omega_{r}|}2\alpha_t L\right]\\
    & +\mathbb{P}(|\Omega_{\mathrm{in}}|>0)\mathbb{E}_{\Omega_{\mathrm{in}}>0}\left[\left\|\theta_t-\theta_t^\prime - \left(\alpha_t\frac{1}{|\Omega_r|}\int_{\Omega_{\mathrm{in}}}\nabla_{\theta}\mathcal{L}({u_{\theta_t}}, \boldsymbol{x}+\boldsymbol{\xi})\mathrm{d}\boldsymbol{\xi} - \alpha_t\frac{1}{|\Omega_r|}\int_{\Omega_{\mathrm{in}}}\nabla_{\theta}\mathcal{L}({u_{\theta_t^\prime}}, \boldsymbol{x}+\boldsymbol{\xi})\mathrm{d}\boldsymbol{\xi}\right)\right\|\right] \\
    &\leq \mathbb{P}(|\Omega_{\mathrm{in}}|=0)\mathbb{E}_{\Omega_r=0}\left[\|\theta_t-\theta_t^\prime\|+2\alpha_t L\right] + \mathbb{P}(|\Omega_{\mathrm{in}}|>0)\mathbb{E}_{\Omega_{\mathrm{in}}>0}\left[\|\theta_t-\theta_t^\prime\|+\frac{|\Omega_r|-|\Omega_{\mathrm{in}}|}{|\Omega_{r}|}2\alpha_t L\right]\\
    & \leq \mathbb{E}\left[\|\theta_t-\theta_t^\prime\|\right] + 2\alpha_t L - \mathbb{P}(|\Omega_{\mathrm{in}}|>0)\mathbb{E}_{\Omega_{\mathrm{in}}>0}\left[\frac{|\Omega_{\mathrm{in}}|}{|\Omega_{r}|}2\alpha_t L\right] \\
    & \leq \mathbb{E}\left[\|\theta_t-\theta_t^\prime\|\right] + 2\alpha_t L \left[1-\mathbb{P}(|\Omega_{\mathrm{in}}|>0)\frac{\mathbb{E}_{|\Omega_{\mathrm{in}}|>0}(|\Omega_{\mathrm{in}}|)}{|\Omega_r|}\right]\\
    & = \mathbb{E}\left[\|\theta_t-\theta_t^\prime\|\right] + 2\alpha_t L \left[1-(\frac{r^2(2l-3r)(3l-4r)}{3(l-r)^4})^{(d+1)}\right].
\end{split}
\end{equation}
Thus, recursively accumulating the residual at the $t$-th step, we have:
\begin{equation}
\begin{split}
\mathcal{E}_{\mathrm{gen}}\leq \left[1-(\frac{r^2(2l-3r)(3l-4r)}{3(l-r)^4})^{(d+1)}\right]\frac{2L^2}{|\mathcal{S}|}\sum_{t=1}^{T}\alpha_t.
\end{split}
\end{equation}

{Although the specific forms differ and the general case is much more complex, these two inequalities both share the same intuitive meaning: region optimization benefits from the overlap between $\boldsymbol{x} + \Omega_r$ and $\boldsymbol{x}^\prime + \Omega_r$. Moreover, within a certain range, the benefit increases as the value of $r$ becomes larger.}

Thus, to keep the bound simple and easy to understand, the main text theorems are under the assumption that $\Omega$ can be projected to a torus. Otherwise, $\frac{|\Omega_r|}{|\Omega|}$ should be replaced by $(\frac{r^2(2l-3r)(3l-4r)}{3(l-r)^4})^{(d+1)}$. 

\end{proof}

\vspace{-20pt}
\paragraph{Non-convex setting} Next, we will prove the non-convex setting. Similarly, we assume $\Omega$ can be projected to a torus. Otherwise, the $\frac{|\Omega_r|}{|\Omega|}$ in the final bound should be replaced by $(\frac{r^2(2l-3r)(3l-4r)}{3(l-r)^4})^{(d+1)}$.
\vspace{-10pt}
\begin{proof}
For clarity, let $M=\frac{|\Omega_r|}{|\Omega|}$. Then, we can rewrite the Eq.~\eqref{equ:non_convex_core} as follows.

If $\mathbb{E}(\delta_t)\leq \frac{2L}{\beta}$, we have:
\begin{equation}\label{equ:non_convex_transform}
\begin{split}
    \mathbb{E}\left[\|\theta_{t+1}-\theta_{t+1}^\prime\||\delta_{t_0}=0\right]&\leq(1-\frac{1}{|\mathcal{S}|})(1+\frac{1}{t})\mathbb{E}\left[\|\theta_t-\theta_t^\prime\|\right] + \frac{1}{|\mathcal{S}|}\mathbb{E}\left[\|G^{\mathrm{region}}_{\alpha_t,\boldsymbol{x}}(\theta_t)-G^{\mathrm{region}}_{\alpha_t,\boldsymbol{x}^\prime}(\theta_t^\prime)\|\right] \\
    &\leq (1-\frac{1}{|\mathcal{S}|})(1+\frac{1}{t})\mathbb{E}\left[\|\theta_t-\theta_t^\prime\|\right] +\frac{1}{|\mathcal{S}|}\left((1+\frac{M}{t})\mathbb{E}\left[\|\theta_t-\theta_t^\prime\|\right]+\frac{2L}{\beta t}(1-M)\right)\\
    & \leq (1+\frac{1}{t}-\frac{1-M}{t|\mathcal{S}|})\mathbb{E}[\delta_t]+\frac{2L}{\beta t|\mathcal{S}|}(1-M) \\
    & \leq \mathrm{exp}\left(\frac{1}{t}-\frac{1-M}{t|\mathcal{S}|}\right)\mathbb{E}[\delta_t]+\frac{2L}{\beta t|\mathcal{S}|}(1-M),
\end{split}
\end{equation}
{where the second inequality is based on the following derivations:}
\begin{equation}\label{equ:non_convex_detail}
\begin{split}
    & \mathbb{E}\left[\|G^{\mathrm{region}}_{\alpha_t,\boldsymbol{x}}(\theta_t)-G^{\mathrm{region}}_{\alpha_t,\boldsymbol{x}^\prime}(\theta_t^\prime)\|\right]\\
    &=\mathbb{I}\left(\frac{|\Omega|-2^{(d+1)}|\Omega_r|}{|\Omega|}\right)\mathbb{E}_{\Omega_{\mathrm{in}}=0}\left[\|G^{\mathrm{region}}_{\alpha_t,\boldsymbol{x}}(\theta_t)-G^{\mathrm{region}}_{\alpha_t,\boldsymbol{x}^\prime}(\theta_t^\prime)\|\right] \\
    & + \mathbb{I}\left(\frac{2^{(d+1)}|\Omega_r|}{|\Omega|}\right)\mathbb{E}_{\Omega_{\mathrm{in}}>0}\left[\|G^{\mathrm{region}}_{\alpha_t,\boldsymbol{x}}(\theta_t)-G^{\mathrm{region}}_{\alpha_t,\boldsymbol{x}^\prime}(\theta_t^\prime)\|\right]\\ 
    &\leq \mathbb{I}\left(\frac{|\Omega|-2^{(d+1)}|\Omega_r|}{|\Omega|}\right)\mathbb{E}_{\Omega_{\mathrm{in}}=0}\left[\|\theta_t-\theta_t^\prime\|+2\alpha_t L\right]\\
    &+ \mathbb{I}\left(\frac{2^{(d+1)}|\Omega_r|}{|\Omega|}\right)\mathbb{E}_{\Omega_{\mathrm{in}}>0}\left[\left\|\theta_t-\theta_t^\prime - \left(\alpha_t\frac{1}{|\Omega_r|}\int_{\Omega_{r}}\nabla_{\theta}\mathcal{L}({u_{\theta_t}}, \boldsymbol{x}+\boldsymbol{\xi})\mathrm{d}\boldsymbol{\xi} - \alpha_t\frac{1}{|\Omega_r|}\int_{\Omega_{r}}\nabla_{\theta}\mathcal{L}({u_{\theta_t^\prime}}, \boldsymbol{x}+\boldsymbol{\xi})\mathrm{d}\boldsymbol{\xi}\right)\right\|\right]\\
    &\leq \mathbb{I}\left(\frac{|\Omega|-2^{(d+1)}|\Omega_r|}{|\Omega|}\right)\mathbb{E}_{\Omega_{\mathrm{in}}=0}\left[\|\theta_t-\theta_t^\prime\|+2\alpha_t L\right]\\
    & +\mathbb{I}\left(\frac{2^{(d+1)}|\Omega_r|}{|\Omega|}\right)\mathbb{E}_{\Omega_{\mathrm{in}}>0}\left[\left\|\theta_t-\theta_t^\prime\right\| +\left\| \left(\alpha_t\frac{1}{|\Omega_r|}\int_{\Omega_{\mathrm{in}}}\nabla_{\theta}\mathcal{L}({u_{\theta_t}}, \boldsymbol{x}+\boldsymbol{\xi})\mathrm{d}\boldsymbol{\xi} - \alpha_t\frac{1}{|\Omega_r|}\int_{\Omega_{\mathrm{in}}}\nabla_{\theta}\mathcal{L}({u_{\theta_t^\prime}}, \boldsymbol{x}+\boldsymbol{\xi})\mathrm{d}\boldsymbol{\xi}\right)\right\|\right] \\
    & + \mathbb{I}\left(\frac{2^{(d+1)}|\Omega_r|}{|\Omega|}\right)\mathbb{E}_{\Omega_{\mathrm{in}}>0}\left[\frac{|\Omega_r|-|\Omega_{\mathrm{in}}|}{|\Omega_{r}|}2\alpha_t L\right]\\
    &\leq \mathbb{I}\left(\frac{|\Omega|-2^{(d+1)}|\Omega_r|}{|\Omega|}\right)\mathbb{E}_{\Omega_r=0}\left[\|\theta_t-\theta_t^\prime\|+2\alpha_t L\right] \\
    & + \mathbb{I}\left(\frac{2^{(d+1)}|\Omega_r|}{|\Omega|}\right)\mathbb{E}_{\Omega_{\mathrm{in}}>0}\left[\|\theta_t-\theta_t^\prime\|+\frac{|\Omega_\mathrm{in}|}{|\Omega_r|}\alpha_t\beta\|\theta_t-\theta_t^\prime\|+\frac{|\Omega_r|-|\Omega_{\mathrm{in}}|}{|\Omega_{r}|}2\alpha_t L\right]\\
    & \leq \mathbb{E}\left[\|\theta_t-\theta_t^\prime\|\right] + \frac{\alpha_t\beta|\Omega_r|}{|\Omega|}\mathbb{E}\left[\|\theta_t-\theta_t^\prime\|\right]+2\alpha_t L - \mathbb{I}\left(\frac{2^{(d+1)}|\Omega_r|}{|\Omega|}\right)\mathbb{E}_{\Omega_{\mathrm{in}}>0}\left[\frac{|\Omega_{\mathrm{in}}|}{|\Omega_{r}|}2\alpha_t L\right] \\
    & \leq (1+\frac{\alpha_t\beta|\Omega_r|}{|\Omega|})\mathbb{E}\left[\|\theta_t-\theta_t^\prime\|\right] + 2\alpha_t L (1-\frac{|\Omega_r|}{|\Omega|}) \\
    & = (1+\frac{M}{t})\mathbb{E}\left[\|\theta_t-\theta_t^\prime\|\right]+\frac{2L}{\beta t}(1-M).
\end{split}
\end{equation}
Notably, $\mathbb{E}\left[\|G^{\mathrm{region}}_{\alpha_t,\boldsymbol{x}}(\theta_t)-G^{\mathrm{region}}_{\alpha_t,\boldsymbol{x}^\prime}(\theta_t^\prime)\|\right]$ has an obvious upper bound, i.e.~$\left(\mathbb{E}\left[\|\theta_t-\theta_t^\prime\|\right]+\frac{2L}{\beta t}\right)$. And only when $\mathbb{E}(\delta_t)\leq \frac{2L}{\beta}$, the bound derived by Eq.~\eqref{equ:non_convex_detail} is tighter. Furthermore, the condition that $\mathbb{E}(\delta_t)\leq \frac{2L}{\beta}$ can be easily satisfied at the beginning several iterations since $\mathbb{E}(\delta_0)=0$.

Otherwise, we still take the following equation:
\begin{equation}
\begin{split}
    \mathbb{E}\left[\|\theta_{t+1}-\theta_{t+1}^\prime\||\delta_{t_0}=0\right]&\leq \mathrm{exp}\left(\frac{1}{t}-\frac{1}{t|\mathcal{S}|}\right)\mathbb{E}[\delta_t]+\frac{2L}{\beta t|\mathcal{S}|},
\end{split}
\end{equation}
where we do not consider the benefits brought by the overlap area of $\boldsymbol{x}+\Omega_r$ and $\boldsymbol{x}^\prime+\Omega_r$.

Suppose that at the first $K$ steps $\mathbb{E}[\delta_{t_0+K}]\leq \frac{2L}{\beta}$, Accumulating the above in equations recursively, we have the generalization error bound accumulated to the first $K$ steps as follows:
\begin{equation}
\begin{split}
    \Delta&=\sum_{t=t_0+1}^{t_0+K}\left\{\Pi_{k=t+1}^{t_0+K}\mathrm{exp}\left(\frac{1}{t}-\frac{1-M}{t|\mathcal{S}|}\right)\Pi_{k=t_0+K+1}^{T}\mathrm{exp}\left(\frac{1}{t}-\frac{1}{t|\mathcal{S}|}\right)\right\}\frac{2L}{\beta t|\mathcal{S}|}(1-M) \\
    &\leq \sum_{t=t_0+1}^{t_0+K}\mathrm{exp}\left((1-\frac{1}{|\mathcal{S}|})\log \frac{T}{t_0+K}+(1-\frac{1-M}{|\mathcal{S}|})\log\frac{t_0+K}{t}\right)\frac{2L}{\beta t|\mathcal{S}|}(1-M) \\
    &= \sum_{t=t_0+1}^{t_0+K}\mathrm{exp}\left((1-\frac{1}{|\mathcal{S}|})\log \frac{T}{t}+\frac{M}{|\mathcal{S}|}\log\frac{t_0+K}{t}\right)\frac{2L}{\beta t|\mathcal{S}|}(1-M) \\
    &\leq \sum_{t=t_0+1}^{t_0+K}\mathrm{exp}\left((1-\frac{1}{|\mathcal{S}|})\log \frac{T}{t}\right)\frac{2L}{\beta t|\mathcal{S}|}(1-M)(\frac{t_0+K}{t})^{\frac{M}{|\mathcal{S}|}}\\
    &\leq \sum_{t=t_0+1}^{t_0+K}\mathrm{exp}\left((1-\frac{1}{|\mathcal{S}|})\log \frac{T}{t}\right)\frac{2L}{\beta t|\mathcal{S}|} -\sum_{t=t_0+1}^{t_0+K}\mathrm{exp}\left((1-\frac{1}{|\mathcal{S}|})\log \frac{T}{t}\right)\frac{2L}{\beta t|\mathcal{S}|}(M^2) \\
    &=\sum_{t=t_0+1}^{t_0+K}\mathrm{exp}\left((1-\frac{1}{|\mathcal{S}|})\log \frac{T}{t}\right)\frac{2L}{\beta t|\mathcal{S}|} - JM^2,
\end{split}
\end{equation}
where $J$ is a finite value that depends on the training property of beginning iterations, namely $K$ and $t_0$. The last {inequality} is from $(\frac{t_0+K}{t})^{\frac{M}{|\mathcal{S}|}}\leq (1+M)$, when $|\mathcal{S}|$ is sufficient enough.

Then, considering the all $T$ steps, we have
\begin{equation}\label{equ:non_convex_derive_region}
\begin{split}
    \mathbb{E}\left[\|\theta_{T}-\theta_{T}^\prime\||\delta_{t_0}=0\right]&\leq\Delta + \sum_{t=t_0+K+1}^T\left\{\Pi_{k=t+1}^T\mathrm{exp}\left(\frac{1}{t}-\frac{1}{t|\mathcal{S}|}\right)\right\}\frac{2L}{\beta t|\mathcal{S}|} \\
    &\leq \sum_{t=t_0+1}^T\mathrm{exp}\left((1-\frac{1}{|\mathcal{S}|})\log \frac{T}{t}\right)\frac{2L}{\beta t|\mathcal{S}|} - JM^2 \quad\quad\quad\text{($\sum_{k=t+1}^{T}\frac{1}{k}\leq\log \frac{T}{t}$)} \\
    & =\frac{2L}{\beta|\mathcal{S}|}T^{1-\frac{1}{|\mathcal{S}|}}\sum_{t=t_0+1}^T t^{-(1-\frac{1}{|\mathcal{S}|})-1} - JM^2 \\
    & \leq\frac{2L}{\beta |\mathcal{S}|} T^{1-\frac{1}{|\mathcal{S}|}}\frac{1}{1-\frac{1}{|\mathcal{S}|}}\left(t_0^{-(1-\frac{1}{|\mathcal{S}|})}-T^{-(1-\frac{1}{|\mathcal{S}|})}\right) - JM^2. \quad\text{(Integral approximation)}\\
\end{split}
\end{equation}
Thus, following a similar proof process as Theorem \ref{theorem:point_opt}(2), we can obtain:
\begin{equation}
     \mathbb{E}\left[\|\theta_{T}-\theta_{T}^\prime\||\delta_{t_0}=0\right]\leq \frac{2L}{\beta (|\mathcal{S}|-1)}(\frac{T}{t_0})-\frac{2L}{\beta (|\mathcal{S}|-1)}-JM^2.
\end{equation}
With $t_0=1$, we have the generalization error under the non-convex case satisfies:
\begin{equation}
    \mathbb{E}\left[|\mathcal{L}({u_{\theta_T}},\boldsymbol{x})-\mathcal{L}({u_{\theta_T^\prime}},\boldsymbol{x})|\right]\leq \frac{C}{|\mathcal{S}|} + \frac{2L^2(T-1)}{\beta (|\mathcal{S}|-1)} - JLM^2.
\end{equation}

\end{proof}

\subsection{Proof for High-order Constraint Optimization (Corollary \ref{coro:high_order})}

First, we would like to prove the following Lemma.

\begin{lemma}
    Suppose that $\mathcal{L}$ is bounded by $C$ for all $\theta,\boldsymbol{x}$ and is $L$-Lipschitz and $\beta$-smooth for $\theta$, then the first-order $j$-th dimension loss function $\frac{\partial}{\partial x_j}\mathcal{L}_{r}^{\mathrm{region}}$ is also bounded by $C$ for all $\theta,\boldsymbol{x}$ and is $2L$-Lipschitz and $2\beta$-smooth for $\theta$.
\end{lemma}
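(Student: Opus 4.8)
The plan is to reduce the claim to one observation: differentiating a region average in its base point $\boldsymbol{x}$ collapses the $(d{+}1)$-dimensional integral over $\Omega_r$ into a difference of two $d$-dimensional integrals over opposite faces of $\Omega_r$, after which all three properties follow from exactly the estimates already used in the proof of Lemma~\ref{lemma:region_smooth}, with an extra factor of two from the two faces. Concretely, starting from $\mathcal{L}_r^{\mathrm{region}}(u_\theta,\boldsymbol{x})=\frac{1}{|\Omega_r|}\int_{[0,r]^{d+1}}\mathcal{L}(u_\theta,\boldsymbol{x}+\boldsymbol{\xi})\,\mathrm{d}\boldsymbol{\xi}$, I would integrate out the $j$-th component of $\boldsymbol{\xi}$ last: for fixed $\theta$ and fixed values of the remaining coordinates, the map $x_j\mapsto\int_{x_j}^{x_j+r}\mathcal{L}(u_\theta,\dots)\,\mathrm{d}y_j$ is the running integral of a bounded measurable function, hence absolutely continuous, so the fundamental theorem of calculus gives its derivative a.e.\ as the boundary difference $\mathcal{L}(u_\theta,\dots,x_j{+}r,\dots)-\mathcal{L}(u_\theta,\dots,x_j,\dots)$. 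Averaging this over the remaining $d$ coordinates yields, for a.e.\ $\boldsymbol{x}$,
\[
\frac{\partial}{\partial x_j}\mathcal{L}_r^{\mathrm{region}}(u_\theta,\boldsymbol{x})=\frac{1}{r}\Big(\overline{\mathcal{L}}^{+}_{j}(u_\theta,\boldsymbol{x})-\overline{\mathcal{L}}^{-}_{j}(u_\theta,\boldsymbol{x})\Big),
\]
where $\overline{\mathcal{L}}^{\pm}_{j}(u_\theta,\boldsymbol{x})$ is the average of $\mathcal{L}(u_\theta,\cdot)$ over the face of $\boldsymbol{x}+\Omega_r$ on which the $j$-th coordinate equals $x_j+r$ (resp.\ $x_j$); equivalently, differentiate under the integral in the $\boldsymbol{\xi}$-representation and integrate by parts in $\xi_j$.

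With this representation in hand, each $\overline{\mathcal{L}}^{\pm}_{j}$ is itself a region average of $\mathcal{L}$ (over a $d$-dimensional face rather than the $(d{+}1)$-dimensional cube), so the three one-line estimates from the proof of Lemma~\ref{lemma:region_smooth} apply verbatim with the face-measure in place of the cube-measure: $0\le\mathcal{L}\le C$ pointwise gives $0\le\overline{\mathcal{L}}^{\pm}_{j}\le C$; pulling the norm inside the face-integral carries the $L$-Lipschitz bound on $\mathcal{L}(\cdot,\boldsymbol{x})$ over to $\overline{\mathcal{L}}^{\pm}_{j}(\cdot,\boldsymbol{x})$; and pulling $\|\nabla_\theta\mathcal{L}(u_{\theta_1},\cdot)-\nabla_\theta\mathcal{L}(u_{\theta_2},\cdot)\|$ inside carries over $\beta$-smoothness. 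Combining the two faces by the triangle inequality, the difference of two nonnegative functions bounded by $C$ is again bounded by $C$, the difference of two $L$-Lipschitz functions is $2L$-Lipschitz, and the difference of two $\beta$-smooth functions is $2\beta$-smooth; the fixed scalar $1/r$ is absorbed (into the loss weight, or by reading $\partial/\partial x_j$ in the rescaled coordinate), giving the stated constants $C$, $2L$, $2\beta$.

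The genuinely delicate point is justifying the differentiation step without assuming any $\boldsymbol{x}$-smoothness of $\mathcal{L}$ — which is exactly the regularity that fails for the \emph{point} loss, as Example~\ref{example:fail_case} illustrates. This is precisely what the absolute-continuity remark handles: a one-dimensional running average of an $L^\infty$ integrand is automatically absolutely continuous, independently of the integrand's own regularity, so the boundary-term formula holds for a.e.\ $\boldsymbol{x}$ — and ``a.e.'' suffices, since stochastic gradient descent only evaluates the gradient at sampled points. Everything after that is the same bookkeeping as Lemma~\ref{lemma:region_smooth}, plus the factor of two.
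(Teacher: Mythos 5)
Your proof is correct and follows essentially the same route as the paper's: differentiating the region integral in $x_j$ collapses it to a difference of face integrals at $\xi_j=r$ and $\xi_j=0$, after which boundedness, Lipschitzness, and smoothness each follow by the triangle inequality with a factor of two, exactly as in the paper. You are in fact slightly more careful than the paper on two points it glosses over — justifying the differentiation via absolute continuity of the running integral (needed precisely because no $\boldsymbol{x}$-regularity of $\mathcal{L}$ is assumed), and accounting for the $1/r$ normalization factor that the paper silently drops by writing the unnormalized integral.
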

\begin{proof} For the bounded property, with the non-negative property of loss function, we have:
\begin{equation}
\begin{split}
    \frac{\partial}{\partial x_j}\mathcal{L}_{r}^{\mathrm{region}}(u_{\theta},\boldsymbol{x}) &= \frac{\partial}{\partial x_j}\int_{\Omega_r}\mathcal{L}(u_{\theta},\boldsymbol{x}+\boldsymbol{\xi})\mathrm{d}\boldsymbol{\xi}= \int_{\Omega_{r}\backslash x_j}\mathcal{L}(u_{\theta},\boldsymbol{x}+\boldsymbol{\xi}_r)-\mathcal{L}(u_{\theta},\boldsymbol{x}+\boldsymbol{\xi}_0)\mathrm{d}\boldsymbol{\xi}\leq C,
\end{split}
\end{equation}
where $\boldsymbol{\xi}_r=(\cdots,r,\cdots)\in\Omega_t\backslash x_j$ and $\boldsymbol{\xi}_0=(\cdots,0,\cdots)\in\Omega_t\backslash x_j$.

As for the Lipschitz and smoothness, we can obtain the following {inequalities}:
\begin{equation}
\begin{split}
  \text{Lipschitz:}\ \ \ \   &\|\frac{\partial}{\partial x_j}\mathcal{L}_{r}^{\mathrm{region}}(u_{\theta_1},\boldsymbol{x})-\frac{\partial}{\partial x_j}\mathcal{L}_{r}^{\mathrm{region}}(u_{\theta_2},\boldsymbol{x})\| \\
    & = \|\int_{\Omega_{r}\backslash x_j}\mathcal{L}(u_{\theta_1},\boldsymbol{x}+\boldsymbol{\xi}_r)-\mathcal{L}(u_{\theta_1},\boldsymbol{x}+\boldsymbol{\xi}_0)-\mathcal{L}(u_{\theta_2},\boldsymbol{x}+\boldsymbol{\xi}_r)-\mathcal{L}(u_{\theta_2},\boldsymbol{x}+\boldsymbol{\xi}_0)\mathrm{d}\boldsymbol{\xi}\|\\
    & \leq \int_{\Omega_{r}\backslash x_j}\|\mathcal{L}(u_{\theta_1},\boldsymbol{x}+\boldsymbol{\xi}_r)-\mathcal{L}(u_{\theta_2},\boldsymbol{x}+\boldsymbol{\xi}_r)\|+\|\mathcal{L}(u_{\theta_1},\boldsymbol{x}+\boldsymbol{\xi}_0)-\mathcal{L}(u_{\theta_2},\boldsymbol{x}+\boldsymbol{\xi}_0)\|\mathrm{d}\boldsymbol{\xi}\\
    & \leq 2L\|\theta_1-\theta_2\|\\
    \text{Smoothness:}\ \ \ \ &\|\nabla_\theta\frac{\partial}{\partial x_j}\mathcal{L}_{r}^{\mathrm{region}}(u_{\theta_1},\boldsymbol{x})-\nabla_\theta\frac{\partial}{\partial x_j}\mathcal{L}_{r}^{\mathrm{region}}(u_{\theta_2},\boldsymbol{x})\| \\
    & = \|\nabla_\theta\int_{\Omega_{r}\backslash x_j}\mathcal{L}(u_{\theta_1},\boldsymbol{x}+\boldsymbol{\xi}_r)-\mathcal{L}(u_{\theta_1},\boldsymbol{x}+\boldsymbol{\xi}_0)-\mathcal{L}(u_{\theta_2},\boldsymbol{x}+\boldsymbol{\xi}_r)-\mathcal{L}(u_{\theta_2},\boldsymbol{x}+\boldsymbol{\xi}_0)\mathrm{d}\boldsymbol{\xi}\|\\
    & \leq \int_{\Omega_{r}\backslash x_j}\|\nabla_\theta\mathcal{L}(u_{\theta_1},\boldsymbol{x}+\boldsymbol{\xi}_r)-\nabla_\theta\mathcal{L}(u_{\theta_2},\boldsymbol{x}+\boldsymbol{\xi}_r)\|\mathrm{d}\boldsymbol{\xi} \\
    & +\int_{\Omega_{r}\backslash x_j}\|\nabla_\theta\mathcal{L}(u_{\theta_1},\boldsymbol{x}+\boldsymbol{\xi}_0)-\nabla_\theta\mathcal{L}(u_{\theta_2},\boldsymbol{x}+\boldsymbol{\xi}_0)\|\mathrm{d}\boldsymbol{\xi}\\
    & \leq 2\beta\|\theta_1-\theta_2\|. \\
\end{split}
\end{equation}
Thus, $\frac{\partial}{\partial x_j}\mathcal{L}_{r}^{\mathrm{region}}$ is also bounded by $C$ for all $\theta,\boldsymbol{x}$ and is $2L$-Lipschitz and $2\beta$-smooth for $\theta$.
\end{proof}
Next, we will give the proof for Corollary \ref{coro:high_order}. 
\begin{proof}
    According to Lemma \ref{lemma:non_convex}, we have the gradient update operator $G_{\alpha_t,\boldsymbol{x}}^{\mathrm{region},x_j}$ for $\frac{\partial}{\partial x_j}\mathcal{L}_r^{\mathrm{region}}$ satisfies the following {inequality}:
    \begin{equation}
        \|G_{\alpha_t,\boldsymbol{x}}^{\mathrm{region},x_j}(\theta_1)-G_{\alpha_t,\boldsymbol{x}}^{\mathrm{region},x_j}(\theta_2)\|\leq (1+2\alpha_t\beta)\|\theta_1-\theta_2\|.
    \end{equation}
    Let $M=\frac{|\Omega_r|}{|\Omega|}$, since $\alpha_t\leq \frac{1}{2\beta t}$, we can rewrite the Eq.~\eqref{equ:non_convex_transform} as follows:
    \begin{equation}
        \begin{split}
    &\mathbb{E}\left[\|\theta_{t+1}-\theta_{t+1}^\prime\||\delta_{t_0}=0\right]\\
    &\leq (1-\frac{1}{|\mathcal{S}|})(1+\frac{1}{t}) \mathbb{E}\left[\|\theta_{t}-\theta_{t}^\prime\|\right] + \frac{1}{|\mathcal{S}|}\left((1+\frac{M}{t})\mathbb{E}\left[\|\theta_{t}-\theta_{t}^\prime\|\right] + \frac{2L}{\beta}(1-M)\right), \\
        \end{split}
    \end{equation}
where the second term is derived from Eq.~\eqref{equ:non_convex_detail} by substituting $L$ to $2L$ and $\beta$ to $2\beta$, which is:
\begin{equation}\label{equ:non_convex_detail_high_order}
\begin{split}
    \mathbb{E}\left[\|G^{\mathrm{region},x_j}_{\alpha_t,\boldsymbol{x}}(\theta_t)-G^{\mathrm{region},x_j}_{\alpha_t,\boldsymbol{x}^\prime}(\theta_t^\prime)\|\right] & \leq (1+\frac{2\alpha_t\beta|\Omega_r|}{|\Omega|})\mathbb{E}\left[\|\theta_t-\theta_t^\prime\|\right] + 4\alpha_t L (1-\frac{|\Omega_r|}{|\Omega|}) \\
    & = (1+\frac{M}{t})\mathbb{E}\left[\|\theta_t-\theta_t^\prime\|\right]+\frac{2L}{\beta t}(1-M).
\end{split}
\end{equation}

    Thus, following the same derivation as Theorem \ref{theorem:region_opt}, we have Corollary \ref{coro:high_order} holds.
\end{proof}

\section{Algorithm Analysis in Section \ref{sec:algo}}\label{appdix:algo}

This section contains the proof for the theoretical analysis of our proposed algorithm in Section \ref{sec:algo}.

\subsection{Proof for Convergence Rate of RoPINN (Theorem \ref{them:convergence})}
The crux of proof is to take expectation for Monte Carlo sampling.
\begin{proof} From Taylor expansion, there exist $\boldsymbol{x}^\prime$ such that:
\begin{equation}
\begin{split}
    \mathcal{L}_{r}^{\mathrm{region}}(u_{\theta_{t+1}}, \boldsymbol{x})&=\mathcal{L}_{r}^{\mathrm{region}}(u_{\theta_{t}}-\alpha_t\nabla_{\theta}\mathcal{L}(u_{\theta_t}, \boldsymbol{x}+\boldsymbol{\xi}), \boldsymbol{x}) \\
    &=\mathcal{L}_{r}^{\mathrm{region}}(u_{\theta_{t}}, \boldsymbol{x})-\alpha_t\nabla_{\theta}\mathcal{L}(u_{\theta_t}, \boldsymbol{x}+\boldsymbol{\xi})^{\sf T}\nabla_{\theta}\mathcal{L}_{r}^{\mathrm{region}}(u_{\theta_{t}}, \boldsymbol{x})\\
    &+\frac{1}{2}(\alpha_t\nabla_{\theta}\mathcal{L}(u_{\theta_t}, \boldsymbol{x}))^{\sf T}\nabla_{\theta}^2\mathcal{L}_{r}^{\mathrm{region}}(u_{\theta_{t}}, \boldsymbol{x}^\prime)(\alpha_t\nabla_{\theta}\mathcal{L}(u_{\theta_t}, \boldsymbol{x}))\\
    &\leq\mathcal{L}_{r}^{\mathrm{region}}(u_{\theta_{t}}, \boldsymbol{x})-\alpha_t\nabla_{\theta}\mathcal{L}(u_{\theta_t}, \boldsymbol{x}+\boldsymbol{\xi})^{\sf T}\nabla_{\theta}\mathcal{L}_{r}^{\mathrm{region}}(u_{\theta_{t}}, \boldsymbol{x})+\frac{\alpha_t^2 L^2 H}{2}.
\end{split}
\end{equation}
Taking expectations to $\boldsymbol{\xi}$ on both sides, since $\mathbb{E}[\nabla_{\theta}\mathcal{L}(u_{\theta_t}, \boldsymbol{x}+\boldsymbol{\xi})]=\nabla_{\theta}\mathcal{L}_{r}^{\mathrm{region}}(u_{\theta_t}, \boldsymbol{x}+\boldsymbol{\xi})$, we have:
\begin{equation}\label{equ:proof_expectation}
\begin{split}
    \mathbb{E}\left[\mathcal{L}_{r}^{\mathrm{region}}(u_{\theta_{t+1}}, \boldsymbol{x})\right]&\leq\mathbb{E}\left[\mathcal{L}_{r}^{\mathrm{region}}(u_{\theta_{t}}, \boldsymbol{x})-\alpha_t\nabla_{\theta}\mathcal{L}(u_{\theta_t}, \boldsymbol{x}+\boldsymbol{\xi})^{\sf T}\nabla_{\theta}\mathcal{L}_{r}^{\mathrm{region}}(u_{\theta_{t}}, \boldsymbol{x})+\frac{\alpha_t^2 L^2 H}{2}\right]\\
    &=\mathbb{E}\left[\mathcal{L}_{r}^{\mathrm{region}}(u_{\theta_{t}}, \boldsymbol{x})\right] -\alpha_t\mathbb{E}\left[\left\|\nabla_{\theta}\mathcal{L}_{r}^{\mathrm{region}}(u_{\theta_{t}}, \boldsymbol{x})\right\|^2\right]+\frac{\alpha_t^2L^2H}{2}.
\end{split}
\end{equation}
Rearranging the terms and accumulating over $T$ iterations, we have the following sum:
\begin{equation}
    \begin{split}
        \sum_{t=0}^{T-1}\alpha_t\mathbb{E}\left[\left\|\nabla_{\theta}\mathcal{L}_{r}^{\mathrm{region}}(u_{\theta_{t}}, \boldsymbol{x})\right\|^2\right]&\leq \sum_{t=0}^{T-1}\left(\mathbb{E}\left[\mathcal{L}_{r}^{\mathrm{region}}(u_{\theta_{t}}, \boldsymbol{x})\right]-\mathbb{E}\left[\mathcal{L}_{r}^{\mathrm{region}}(u_{\theta_{t+1}}, \boldsymbol{x})\right]\right)+\sum_{t=0}^{T-1}\frac{\alpha_t^2L^2H}{2} \\
        &\leq \mathcal{L}_{r}^{\mathrm{region}}(u_{\theta_{0}}, \boldsymbol{x})-\mathcal{L}_{r}^{\mathrm{region}}(u_{\theta_{T}}, \boldsymbol{x})+\frac{L^2H}{2}\sum_{t=0}^{T-1}\alpha_t^2\\
        &\leq \mathcal{L}_{r}^{\mathrm{region}}(u_{\theta_{0}}, \boldsymbol{x})-\mathcal{L}_{r}^{\mathrm{region}}(u_\ast, \boldsymbol{x})+\frac{L^2H}{2}\sum_{t=0}^{T-1}\alpha_t^2,
    \end{split}
\end{equation}
where $u_\ast$ represents the global optimum. Here we run the gradient descent for a random number of iterations $\tau$. For $\tau=t$ iterations with probability:
\begin{equation}
    \begin{split}
        \mathbb{P}(\tau=t)=\frac{\alpha_t}{\sum_{k=0}^{T-1}\alpha_k},
    \end{split}
\end{equation}
Thus, with $\alpha_t=\frac{1}{\sqrt{t+1}}$, we have the gradient norm is bounded by:
\begin{equation}
    \begin{split}
       \mathbb{E}\left[\left\|\nabla_{\theta}\mathcal{L}_{r}^{\mathrm{region}}(u_{\theta_{\tau}}, \boldsymbol{x})\right\|^2\right]&= \left(\sum_{t=0}^{T-1}\alpha_t\right)^{-1}\sum_{t=0}^{T-1}\alpha_t\mathbb{E}\left[\left\|\nabla_{\theta}\mathcal{L}_{r}^{\mathrm{region}}(u_{\theta_{t}}, \boldsymbol{x})\right\|^2\right]\\
       &\leq \left(\sum_{t=0}^{T-1}\alpha_t\right)^{-1}\left(\mathcal{L}_{r}^{\mathrm{region}}(u_{\theta_{0}}, \boldsymbol{x})-\mathcal{L}_{r}^{\mathrm{region}}(u_\ast, \boldsymbol{x})+\frac{L^2H}{2}\sum_{t=0}^{T-1}\alpha_t^2\right)\\
       &\lesssim (2\sqrt{T})^{-1}\left(\mathcal{L}_{r}^{\mathrm{region}}(u_{\theta_{0}}, \boldsymbol{x})-\mathcal{L}_{r}^{\mathrm{region}}(u_\ast, \boldsymbol{x})+\frac{L^2H}{2}\log(T+1)\right)\\
       &=\mathcal{O}(\frac{1}{\sqrt{T}}).
    \end{split}
\end{equation}
\end{proof}

\subsection{Proof for Estimation of RoPINN (Theorem \ref{them:estimation_error})}
As presented in Eq.~\eqref{equ:region} and \eqref{equ:proof_expectation}, we approximate the region optimization with the Monte Carlo sampling method. For better efficiency, we propose only to sample one point at each iteration. However, this will cause an estimation error formalized in Theorem \ref{them:estimation_error}, which can be directly derived by the definition of standard deviation as follows:

\begin{proof}
According to the definition of $\mathcal{L}_{r}^{\mathrm{region}}$ in Eq.~\eqref{equ:region_opt}, we get
    \begin{equation}
        \begin{split}
    &\mathbb{E}_{\boldsymbol{\xi}\sim U(\Omega_{r})}\left[\left\|\nabla_{\theta}\mathcal{L}(u_{\theta}, \boldsymbol{x}+\boldsymbol{\xi}) - \nabla_{\theta}\mathcal{L}_{r}^{\mathrm{region}}(u_{\theta}, \boldsymbol{x})\right\|^2\right]^\frac{1}{2}\\
    &= \mathbb{E}_{\boldsymbol{\xi}\sim U(\Omega_{r})}\left[\left\|\nabla_{\theta}\mathcal{L}(u_{\theta}, \boldsymbol{x}+\boldsymbol{\xi}) - \nabla_{\theta}\frac{1}{|\Omega_{r}|}\int_{\Omega_{r}}\mathcal{L}(u_{\theta},\boldsymbol{x}+\boldsymbol{\xi})\mathrm{d}\boldsymbol{\xi}\right\|^2\right]^\frac{1}{2}\\
    &= \left\|\sigma_{\boldsymbol{\xi}\sim U(\Omega_{r})}\left(\nabla_{\theta}\mathcal{L}(u_{\theta}, \boldsymbol{x}+\boldsymbol{\xi})\right)\right\|.
        \end{split}
    \end{equation}
\end{proof}

\subsection{Proof for Estimation of Trust Region (Lemma \ref{lemma:approximation} and Theorem \ref{them:error_control})}
First, we give the proof for Lemma \ref{lemma:approximation}.
\begin{proof}
    According to Assumption \ref{assump:lipschitz}, there exist $\boldsymbol{x}^\prime$, such that the following equation holds:
    \begin{equation}
    \begin{split}
        &\nabla_{\theta}\mathcal{L}(u_{\theta_{t}},\boldsymbol{z}_1)-\nabla_{\theta}\mathcal{L}(u_{\theta_{t-1}},\boldsymbol{z}_2)\\
        &=\nabla_{\theta}\mathcal{L}(u_{\theta_{t}},\boldsymbol{z}_1)-\nabla_{\theta}\mathcal{L}(u_{\theta_{t}+\alpha_{t-1}\nabla_{\theta}\mathcal{L}(u_{\theta_{t-1}}, \boldsymbol{z}_2)},\boldsymbol{z}_2)\\
        &= \nabla_{\theta}\mathcal{L}(u_{\theta_{t}},\boldsymbol{z}_1)-\nabla_{\theta}\mathcal{L}(u_{\theta_{t}},\boldsymbol{z}_2) + \alpha_{t-1}\nabla_{\theta}\mathcal{L}(u_{\theta_{t-1}}, \boldsymbol{z}_2)\nabla^2_{\theta}\mathcal{L}(u_{\theta_{t-1}}, \boldsymbol{x}^\prime).
    \end{split}
    \end{equation}
{Thus, the following inequality holds:}
\begin{equation}
    \begin{split}
         & {\big|} \left\|\nabla_{\theta}\mathcal{L}(u_{\theta_{t}},\boldsymbol{z}_1)-\nabla_{\theta}\mathcal{L}(u_{\theta_{t-1}},\boldsymbol{z}_2)\right\|-\left\|\nabla_{\theta}\mathcal{L}(u_{\theta_{t}},\boldsymbol{z}_1)-\nabla_{\theta}\mathcal{L}(u_{\theta_{t}},\boldsymbol{z}_2)\right\| {\big|}\\
        & \leq \left\|\left(\nabla_{\theta}\mathcal{L}(u_{\theta_{t}},\boldsymbol{z}_1)-\nabla_{\theta}\mathcal{L}(u_{\theta_{t-1}},\boldsymbol{z}_2)\right)-\left(\nabla_{\theta}\mathcal{L}(u_{\theta_{t}},\boldsymbol{z}_1)-\nabla_{\theta}\mathcal{L}(u_{\theta_{t}},\boldsymbol{z}_2)\right)\right\| \\
        & = \left\| \alpha_{t-1}\nabla_{\theta}\mathcal{L}(u_{\theta_{t-1}}, \boldsymbol{z}_2)\nabla^2_{\theta}\mathcal{L}(u_{\theta_{t-1}}, \boldsymbol{x}^\prime)\right\| \\
        & \leq \beta L\alpha_{t-1}.
    \end{split}
\end{equation}
\end{proof}
\vspace{-10pt}
Next, we will prove Theorem \ref{them:error_control}.
\begin{proof}
    This theorem can be proved by demonstrating that: for all $i,j\in\{1,\cdots,{T_0}\}$:
    \begin{equation}\label{equ:limit_proof}
    \begin{split}
        \lim_{t\to \infty}\nabla_{\theta}\mathcal{L}(u_{\theta_{t-i+1}}, z_{i}) &= \nabla_{\theta}\mathcal{L}(u_{\theta_{t}}, z_{i})\\
        \lim_{t\to \infty}\left(\nabla_{\theta}\mathcal{L}(u_{\theta_{t-i+1}}, z_{i}) - \nabla_{\theta}\mathcal{L}(u_{\theta_{t-j+1}}, z_{j})\right) &= \nabla_{\theta}\mathcal{L}(u_{\theta_{t}}, z_{i}) - \nabla_{\theta}\mathcal{L}(u_{\theta_{t}}, z_{j}).
    \end{split}
    \end{equation}
    For the first equation, since $\alpha_t\to 0$, given $\forall \epsilon$, there exists a constant $M$ such that any $t>M$, $\alpha_t \leq \frac{\epsilon}{{T_0}L\beta}$. Thus, for any $t>M$ and any $i,j\in\{1,\cdots,{T_0}\}$, the following equation is satisfied:
    \begin{equation}\label{equ:derivation_for_approximate}
    \begin{split}
        \|\nabla_{\theta}\mathcal{L}(u_{\theta_{t-i+1}}, z_{i})-\nabla_{\theta}\mathcal{L}(u_{\theta_{t}}, z_{i})\|&\leq \sum_{k=1}^{i-1}\|\nabla_{\theta}\mathcal{L}(u_{\theta_{t-i+k}}, z_{i})-\nabla_{\theta}\mathcal{L}(u_{\theta_{t-i+k+1}}, z_{i})\|\\
        &\leq \sum_{k=1}^{i-1} \alpha_{t-i+k}L\beta \leq \epsilon.
    \end{split}
    \end{equation}
    Thus, $\lim_{t\to \infty}\nabla_{\theta}\mathcal{L}(u_{\theta_{t-i+1}}, z_{i}) = \nabla_{\theta}\mathcal{L}(u_{\theta_{t}}, z_{i})$. Therefore, given $\forall \epsilon^\prime$, there exist a constant $M^\prime$, $\forall t>M^\prime$, $\|\nabla_{\theta}\mathcal{L}(u_{\theta_{t-i+1}}, z_{i}) - \nabla_{\theta}\mathcal{L}(u_{\theta_{t}}, z_{i})\|\leq\frac{\epsilon}{2}$.
    
    As for the second equation, for any $t>M^\prime$, the following equation is satisfied:
    \begin{equation}
        \begin{split}
            &\|\nabla_{\theta}\mathcal{L}(u_{\theta_{t-i+1}}, z_{i}) - \nabla_{\theta}\mathcal{L}(u_{\theta_{t-j+1}}, z_{j}) - \nabla_{\theta}\mathcal{L}(u_{\theta_{t}}, z_{i}) - \nabla_{\theta}\mathcal{L}(u_{\theta_{t}}, z_{j})\|\\
            &\leq \|\nabla_{\theta}\mathcal{L}(u_{\theta_{t-i+1}}, z_{i}) - \nabla_{\theta}\mathcal{L}(u_{\theta_{t}}, z_{i})\| + \|\nabla_{\theta}\mathcal{L}(u_{\theta_{t-j+1}}, z_{j}) - \nabla_{\theta}\mathcal{L}(u_{\theta_{t}}, z_{j})\|\leq \epsilon^\prime.
        \end{split}
    \end{equation}
    Thus, Theorem \ref{them:error_control} can be proved by replacing the gradient of past iterations with their limitations.
\end{proof}

\subsection{Proof for Region Optimization with Gradient Estimation Error (Theorem \ref{theorem:region_opt_error})}\label{proof:region_opt_error}
\paragraph{Convex setting} Firstly, we would like to prove the convex case as follows.
\begin{proof}
    Similar to the proof of region optimization in Appendix \ref{proof:region_opt}, at the $t$-th step, we can obtain the following equation:
\begin{equation}
    \begin{split}
    \mathbb{E}\left[\|\theta_{t+1}-\theta_{t+1}^\prime\|\right]&= (1-\frac{1}{|\mathcal{S}|})\mathbb{E}\left[\|G^{\mathrm{approx}}_{\alpha_t,\boldsymbol{x}}(\theta_t)-G^{\mathrm{approx}}_{\alpha_t,\boldsymbol{x}}(\theta_t^\prime)\|\right]+\frac{1}{|\mathcal{S}|}\mathbb{E}\left[\|G^{\mathrm{approx}}_{\alpha_t,\boldsymbol{x}}(\theta_t)-G^{\mathrm{approx}}_{\alpha_t,\boldsymbol{x}^\prime}(\theta_t^\prime)\|\right] \\
    &\leq (1-\frac{1}{|\mathcal{S}|})\mathbb{E}\left[\|\theta_t-\theta_t^\prime\|\right]+\frac{1}{|\mathcal{S}|}\mathbb{E}\left[\|G^{\mathrm{approx}}_{\alpha_t,\boldsymbol{x}}(\theta_t)-G^{\mathrm{approx}}_{\alpha_t,\boldsymbol{x}^\prime}(\theta_t^\prime)\|\right],
\end{split}
\end{equation}
where $G^{\mathrm{approx}}_{\alpha,\boldsymbol{x}}(\theta)=\theta-\alpha\nabla_{\theta}\mathcal{L}_{r}^{\mathrm{approx}}(\theta,\boldsymbol{x})=\theta-\alpha\nabla_{\theta}\mathcal{L}(\theta,\boldsymbol{x}+\boldsymbol{\xi}), \boldsymbol{\xi}\sim U(\Omega_{r})$. Suppose that we have sampled ${\boldsymbol{\xi},\boldsymbol{\xi}^\prime\in \Omega_{r}}$, the second term on the right part can be bounded as follows:
\begin{equation}
    \begin{split}
&\mathbb{E}\left[\|G^{\mathrm{approx}}_{\alpha_t,\boldsymbol{x}}(\theta_t)-G^{\mathrm{approx}}_{\alpha_t,\boldsymbol{x}^\prime}(\theta_t^\prime)\|\right]\\
&\leq\mathbb{E}\left[\|\theta_t - \alpha_t\nabla_{\theta}\mathcal{L}(\theta_t,\boldsymbol{x}+\boldsymbol{\xi})-\theta_t^\prime - \alpha_t\nabla_{\theta}\mathcal{L}(\theta_t^\prime,\boldsymbol{x}^\prime+\boldsymbol{\xi}^\prime)\|\right]\\
& \leq \mathbb{E}\left[\|\theta_t - \alpha_t\nabla_{\theta}\mathcal{L}_{r}^{\mathrm{region}}(\theta_t,\boldsymbol{x})-\theta_t^\prime - \alpha_t\nabla_{\theta}\mathcal{L}_{r}^{\mathrm{region}}(\theta_t^\prime,\boldsymbol{x}^\prime)\|\right] \\
& + \mathbb{E}\left[\|\alpha_t\nabla_{\theta}\mathcal{L}(\theta_t,\boldsymbol{x}+\boldsymbol{\xi}) - \alpha_t\nabla_{\theta}\mathcal{L}_{r}^{\mathrm{region}}(\theta_t,\boldsymbol{x})\|\right]\\
& +\mathbb{E}\left[\|\alpha_t\nabla_{\theta}\mathcal{L}(\theta_t,\boldsymbol{x}^\prime+\boldsymbol{\xi}^\prime) - \alpha_t\nabla_{\theta}\mathcal{L}_{r}^{\mathrm{region}}(\theta_t^\prime,\boldsymbol{x}^\prime)\|\right] \\
& \leq \mathbb{E}\left[\|\theta_t-\theta_t^\prime\|\right] + 2\alpha_t L (1-\frac{|\Omega_r|}{|\Omega|}) + 2\alpha_t{{\mathcal{E}_{r,\mathrm{grad}}}} \quad\quad\quad\quad\quad\quad\quad\quad\quad \text{(Based on Eq.~\eqref{equ:key_proof_region_convex})}\\
& = \mathbb{E}\left[\|\theta_t-\theta_t^\prime\|\right] + 2\alpha_t \left(L (1-\frac{|\Omega_r|}{|\Omega|}) +{{\mathcal{E}_{r,\mathrm{grad}}}}\right)
    \end{split}
\end{equation}
Thus, recursively accumulating the residual at the $t$-th step, we have:
\begin{equation}
\begin{split}
\mathcal{E}_{\mathrm{gen}}\leq \left(L(1-\frac{|\Omega_r|}{|\Omega|})+{{\mathcal{E}_{r,\mathrm{grad}}}}\right)\frac{2L}{|\mathcal{S}|}\sum_{t=1}^{T}\alpha_t.
\end{split}
\end{equation}
\end{proof}

\paragraph{Non-convex setting} Similarly, we can prove the non-convex setting as follows.
\begin{proof}
    It is easy to prove that $\mathcal{L}_{r}^{\mathrm{approx}}(\theta,\boldsymbol{x})$ is still $L$-Lipchitz-$\beta$-smoothness for $\theta$. For clarity, we define that $M=\frac{|\Omega_r|}{|\Omega|}$. Thus, based on Eq.~\eqref{equ:non_convex_detail}, we have the following derivations.
    
    If $\mathbb{E}(\delta_t)\leq \frac{2L}{\beta}-\frac{2}{\beta M}\mathcal{E}_{r,\mathrm{grad}}$, we have:
\begin{equation}\label{equ:non_convex_transform_error}
\begin{split}
    &\mathbb{E}\left[\|\theta_{t+1}-\theta_{t+1}^\prime\||\delta_{t_0}=0\right]\\
    &\leq(1-\frac{1}{|\mathcal{S}|})(1+\frac{1}{t})\mathbb{E}\left[\|\theta_t-\theta_t^\prime\|\right] + \frac{1}{|\mathcal{S}|}\mathbb{E}\left[\|G^{\mathrm{approx}}_{\alpha_t,\boldsymbol{x}}(\theta_t)-G^{\mathrm{approx}}_{\alpha_t,\boldsymbol{x}^\prime}(\theta_t^\prime)\|\right] \\
    &\leq (1-\frac{1}{|\mathcal{S}|})(1+\frac{1}{t})\mathbb{E}\left[\|\theta_t-\theta_t^\prime\|\right] \\
    & +\frac{1}{|\mathcal{S}|}\left(\mathbb{E}\left[\|G^{\mathrm{region}}_{\alpha_t,\boldsymbol{x}}(\theta_t)-G^{\mathrm{region}}_{\alpha_t,\boldsymbol{x}^\prime}(\theta_t^\prime)\|\right]\right) \\
    & + \frac{1}{|\mathcal{S}|}\left(\mathbb{E}\left[\|\alpha_t\nabla_{\theta}\mathcal{L}(\theta_t,\boldsymbol{x}+\boldsymbol{\xi}) - \alpha_t\nabla_{\theta}\mathcal{L}_{r}^{\mathrm{region}}(\theta_t,\boldsymbol{x})\|\right]\right)\\
    & +\frac{1}{|\mathcal{S}|}\left(\mathbb{E}\left[\|\alpha_t\nabla_{\theta}\mathcal{L}(\theta_t,\boldsymbol{x}^\prime+\boldsymbol{\xi}^\prime) - \alpha_t\nabla_{\theta}\mathcal{L}_{r}^{\mathrm{region}}(\theta_t^\prime,\boldsymbol{x}^\prime)\|\right]\right) \\
    & \leq (1+\frac{1}{t}-\frac{1-M}{t|\mathcal{S}|})\mathbb{E}[\delta_t]+\frac{2\alpha_t}{|\mathcal{S}|}\left(L(1-M)+{\mathcal{E}_{r,\mathrm{grad}}}\right) \\
    & \leq \mathrm{exp}\left(\frac{1}{t}-\frac{1-M}{t|\mathcal{S}|}\right)\mathbb{E}[\delta_t]+\frac{2}{\beta t|\mathcal{S}|}\left(L(1-M)+{\mathcal{E}_{r,\mathrm{grad}}}\right).
\end{split}
\end{equation}

Otherwise, we still consider the following inequality:
\begin{equation}
\begin{split}
    \mathbb{E}\left[\|\theta_{t+1}-\theta_{t+1}^\prime\||\delta_{t_0}=0\right]&\leq \mathrm{exp}\left(\frac{1}{t}-\frac{1}{t|\mathcal{S}|}\right)\mathbb{E}[\delta_t]+\frac{2L}{\beta t|\mathcal{S}|},
\end{split}
\end{equation}

Suppose that at the first $K^\prime$ steps $\mathbb{E}[\delta_{t_0+K^\prime}]\leq \frac{2L}{\beta}-\frac{2}{\beta M}\mathcal{E}_{r,\mathrm{grad}}$. 

Accumulating the above in equations recursively, we have the generalization error bound accumulated to the first $K^\prime$ steps as follows:
\begin{equation}
\begin{split}
    \Delta&=\sum_{t=t_0+1}^{t_0+K^\prime}\left\{\Pi_{k=t+1}^{t_0+K^\prime}\mathrm{exp}\left(\frac{1}{t}-\frac{1-M}{t|\mathcal{S}|}\right)\Pi_{k=t_0+K^\prime+1}^{T}\mathrm{exp}\left(\frac{1}{t}-\frac{1}{t|\mathcal{S}|}\right)\right\}\frac{2}{\beta t|\mathcal{S}|}\left(L(1-M)+{\mathcal{E}_{r,\mathrm{grad}}}\right) \\
    &\leq \sum_{t=t_0+1}^{t_0+K^\prime}\mathrm{exp}\left((1-\frac{1}{|\mathcal{S}|})\log \frac{T}{t_0+K^\prime}+(1-\frac{1-M}{|\mathcal{S}|})\log\frac{t_0+K^\prime}{t}\right)\frac{2}{\beta t|\mathcal{S}|}\left(L(1-M)+{\mathcal{E}_{r,\mathrm{grad}}}\right) \\
    &= \sum_{t=t_0+1}^{t_0+K^\prime}\mathrm{exp}\left((1-\frac{1}{|\mathcal{S}|})\log \frac{T}{t}+\frac{M}{|\mathcal{S}|}\log\frac{t_0+K^\prime}{t}\right)\frac{2}{\beta t|\mathcal{S}|}\left(L(1-M)+{\mathcal{E}_{r,\mathrm{grad}}}\right) \\
    &\leq \sum_{t=t_0+1}^{t_0+K^\prime}\mathrm{exp}\left((1-\frac{1}{|\mathcal{S}|})\log \frac{T}{t}\right)\frac{2}{\beta t|\mathcal{S}|}\left(L(1-M)+{\mathcal{E}_{r,\mathrm{grad}}}\right)(\frac{t_0+K^\prime}{t})^{\frac{M}{|\mathcal{S}|}}\\
    &\leq \sum_{t=t_0+1}^{t_0+K^\prime}\mathrm{exp}\left((1-\frac{1}{|\mathcal{S}|})\log \frac{T}{t}\right)\frac{2L}{\beta t|\mathcal{S}|} -\sum_{t=t_0+1}^{t_0+K^\prime}\mathrm{exp}\left((1-\frac{1}{|\mathcal{S}|})\log \frac{T}{t}\right)\frac{2}{\beta t|\mathcal{S}|}\left(LM^2+\mathcal{E}_{r,\mathrm{grad}}(1+M)\right) \\
    &=\sum_{t=t_0+1}^{t_0+K^\prime}\mathrm{exp}\left((1-\frac{1}{|\mathcal{S}|})\log \frac{T}{t}\right)\frac{2L}{\beta t|\mathcal{S}|} - J^\prime LM^2 + J^\prime\mathcal{E}_{r,\mathrm{grad}}(1+M),
\end{split}
\end{equation}
where $J^\prime$ is a finite value that depends on the training property of beginning iterations, namely $K^\prime$ and $t_0$. The last {inequality} is from $(\frac{t_0+K^\prime}{t})^{\frac{M}{|\mathcal{S}|}}\leq (1+M)$, when $|\mathcal{S}|$ is sufficient enough.

Then, considering the all $T$ steps, we have
\begin{equation}\label{equ:non_convex_derive_region}
\begin{split}
    & \mathbb{E}\left[\|\theta_{T}-\theta_{T}^\prime\||\delta_{t_0}=0\right]\\
    &\leq\Delta + \sum_{t=t_0+K+1}^T\left\{\Pi_{k=t+1}^T\mathrm{exp}\left(\frac{1}{t}-\frac{1}{t|\mathcal{S}|}\right)\right\}\frac{2L}{\beta t|\mathcal{S}|} \\
    &\leq \sum_{t=t_0+1}^T\mathrm{exp}\left((1-\frac{1}{|\mathcal{S}|})\log \frac{T}{t}\right)\frac{2L}{\beta t|\mathcal{S}|} - J^\prime M^2+ J^\prime\mathcal{E}_{r,\mathrm{grad}}(1+M) \quad\quad\quad\text{($\sum_{k=t+1}^{T}\frac{1}{k}\leq\log \frac{T}{t}$)} \\
    & =\frac{2L}{\beta|\mathcal{S}|}T^{1-\frac{1}{|\mathcal{S}|}}\sum_{t=t_0+1}^T t^{-(1-\frac{1}{|\mathcal{S}|})-1} - J^\prime M^2+ J^\prime\mathcal{E}_{r,\mathrm{grad}}(1+M) \\
    & \leq\frac{2L}{\beta |\mathcal{S}|} T^{1-\frac{1}{|\mathcal{S}|}}\frac{1}{1-\frac{1}{|\mathcal{S}|}}\left(t_0^{-(1-\frac{1}{|\mathcal{S}|})}-T^{-(1-\frac{1}{|\mathcal{S}|})}\right) - J^\prime M^2+ J^\prime\mathcal{E}_{r,\mathrm{grad}}(1+M). \quad\text{(Integral approximation)}\\
\end{split}
\end{equation}

Next, following the proof in Appendix \ref{proof:region_opt}, we can obtain the generalization bound as follows:
\begin{equation}
    \mathbb{E}\left[|\mathcal{L}({u_{\theta_T}},\boldsymbol{x})-\mathcal{L}({u_{\theta_T^\prime}},\boldsymbol{x})|\right]\leq \frac{C}{|\mathcal{S}|} + \frac{2L^2(T-1)}{\beta (|\mathcal{S}|-1)} - J^\prime LM^2 + J^\prime\mathcal{E}_{r,\mathrm{grad}}(1+M).
\end{equation}

Note that $K^\prime$ does not exist when $\frac{2L}{\beta}<\frac{2}{\beta M}\mathcal{E}_{r,\mathrm{grad}}$, then $J^\prime=0$, which corresponds to the situation that the region size is too large and brings serious gradient estimation error. Introducing ``region'' cannot bring a better generalization bound in this case. 
\end{proof}

\section{Implementation Details}\label{appdix:details}
This section provides experiment details, including \textbf{benchmarks}, \textbf{metrics} and \textbf{implementations}.

\subsection{Benchmarks}

To comprehensively test our algorithm, we include the following four benchmarks. The first three benchmarks cover three typical PDEs (plotted in Figure \ref{fig:dataset}), which are widely used in exploring the PINN optimization \cite{krishnapriyan2021characterizing,rathore2024challenges}. The last one is an advanced comprehensive benchmark with 20 different PDEs. Here are the details.

\begin{figure*}[h]
\begin{center}
\vspace{-5pt}
\centerline{\includegraphics[width=0.85\textwidth]{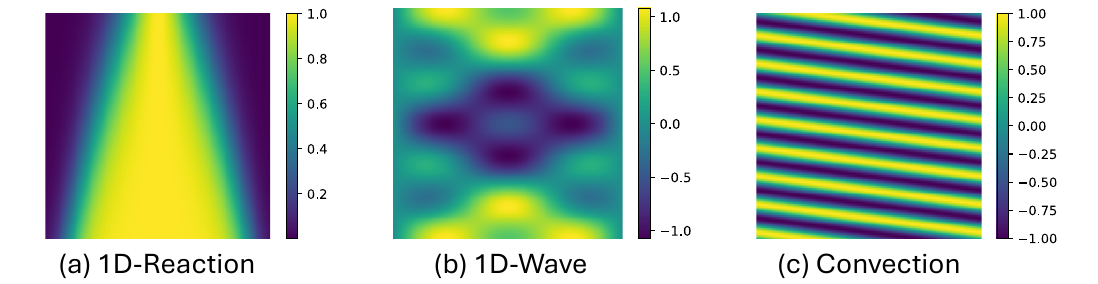}}
 \caption{Visualization of the solution $u$ for the first three benchmarks.}
  \label{fig:dataset}
\end{center}
\vspace{-25pt}
\end{figure*}

\vspace{-5pt}
\paragraph{1D-Reaction} This problem is a one-dimensional non-linear ODE, which describes the chemical reactions. The concrete equation that we studied here can be formalized as follows:
\begin{equation}
\begin{split}
    \frac{\partial u}{\partial t}-\rho u(1-u) = 0,& \ x\in(0,2\pi), t\in(0,1),\\
    u(x, 0) = \exp{\left(-\frac{(x-\pi)^2}{2(\pi / 4)^2}\right)},& \ x\in[0,2\pi], \\ 
    u(0, t) = u(2\pi, t),&\ t\in[0,1].
\end{split}
\end{equation}
The analytical solution to this problem is $u(x,t)=\frac{h(x)e^{\rho t}}{h(x)e^{\rho t}+1-h(x)}$ and $h(x)=\exp{\left(-\frac{(x-\pi)}{2(\pi/4)^2}\right)}$. In our experiments, we set $\rho=5$. This problem is previously studied as ``PINN failure mode''~\cite{krishnapriyan2021characterizing}, which is because of the non-linear term of the equation \cite{lobanova2004running}. Besides, as shown in Figure \ref{fig:dataset}(a), it contains sharp boundaries for the center high-value area, which is also hard to learn for deep models.

Following experiments in PINNsFormer \cite{zhao2023pinnsformer}, we uniformly sampled 101 points for initial state $\Omega_0$ and boundary $\partial\Omega$ and a uniform grid of 101$\times$101 mesh points for the residual domain $\Omega$. For evaluation, we employed a 101$\times$101 mesh within the residual domain $\Omega$. This strategy is also adopted for 1D-Wave and Convection experiments.

\vspace{-5pt}
\paragraph{1D-Wave} This problem presents a hyperbolic PDE that is widely studied in acoustics, electromagnetism, and fluid dynamics \cite{atiyah1970lacunas}. Concretely, the PDE can be formalized as follows:
\begin{equation}
    \begin{split}
        \frac{\partial^2 u}{\partial t^2}-4\frac{\partial^2 u}{\partial x^2} = 0,& \ x\in(0,1), t\in(0,1),\\
        u(x, 0)=\sin(\pi x)+\frac{1}{2}\sin(\beta\pi x), & \ x\in[0,1],\\
\frac{\partial u(x,0)}{\partial t}=0, &\ x\in[0,1],\\
u(0,t)=u(1,t)=0, &\ t\in[0,1].
    \end{split}
\end{equation}
The analytic solution for this PDE is $u(x,t)=\sin(\pi x)\cos(2\pi t)+\frac{1}{2}\sin(\beta \pi x)\cos(2\beta\pi t)$. We set $\beta=3$ for our experiments. As presented in Figure \ref{fig:dataset}(b), the solution is smoother than the other two datasets, thereby easier for deep models to solve in some aspects. However, the equation contains second-order derivative terms, which also brings challenges in automatic differentiation. That is why gPINN \cite{yu2022gradient} fails in this task (Table \ref{tab:mainres}).

\vspace{-5pt}
\paragraph{Convection} This problem is also a hyperbolic PDE that can be used to model fluid, atmosphere, heat transfer and biological processes \cite{stocker2011introduction}. The concrete PDE that we studied in this paper is:
\begin{equation}
    \begin{split}
        \frac{\partial u}{\partial t}+\beta\frac{\partial u}{\partial t}=0, & \ x\in(0,2\pi), t\in(0,1),\\
        u(x,0)=\sin(x), &\  x\in[0,2\pi],\\
        u(0,t) = u(2\pi, t), &\  t\in[0,1].
    \end{split}
\end{equation}
The analytic solution for this PDE is $u(x,t)=\sin(x-\beta t)$, where $\beta$ is set as 50 in our experiments. Note that although the final solution seems to be quite simple, it is difficult for PINNs in practice due to the highly complex and high-frequency patterns. And the previous research \cite{krishnapriyan2021characterizing} has shown that the loss landscape of the Convection equation contains many hard-to-optimize sharp cones.

\vspace{-5pt}
\paragraph{PINNacle} This benchmark \cite{hao2023pinnacle} is built upon the DeepXDE \cite{lu2021deepxde}, consisting of a wide range of PDEs and baselines. In their paper, the authors included 20 different PDE-solving tasks, covering diverse phenomena in fluid dynamics, heat conduction, etc and including PDEs with high dimensions, complex geometrics, nonlinearity and multiscale interactions. To ensure a comprehensive evaluation, we also benchmark RoPINN with PINNacle.

During our experiments, we found that there are several subtasks that none of the previous methods can solve, such as the 2D Heat equation with long time (Heat 2d-LT), 2D Navier-Stokes equation with long time (NS 2d-LT), 2D Wave equation with long time (Wave 2d-MS) and Kuramoto-Sivashinsky equation (KS). In addition to the challenges of high dimensionality and complex geometry mentioned by PINNacle, we discover unique challenges in these tasks caused by long periods and high-order derivatives of governed PDEs, making them extremely challenging for current PINNs. To solve these problems, we might need more powerful PINN backbones. Since we mainly focus on the PINN training paradigm, we omit the abovementioned 4 tasks to avoid the meaningless comparison and experiment with the left 16 tasks. Our datasets are summarized in Table \ref{tab:dataset_detail_pinnacle}.

\begin{table}[t]
  \caption{Details of datasets in PINNacle \cite{hao2023pinnacle} (16 different PDEs included in our experiments), including the dimension of inputs, highest order of PDEs, number of train/test points and concrete equations. Here we only present the simplified PDE formalizations for intuitive understanding. More detailed descriptions of PDE type and coefficient meanings can be found in their paper \cite{hao2023pinnacle}.}\label{tab:dataset_detail_pinnacle}
  \vspace{2pt}
  \centering
  \begin{threeparttable}
  \begin{small}
  \renewcommand{\multirowsetup}{\centering}
  \setlength{\tabcolsep}{6pt}
  \begin{tabular}{cc|cc|cc|cc|cc|cc}
    \toprule
    \multicolumn{2}{c}{PDE} &
    \multicolumn{2}{c}{Dimension} &
    \multicolumn{2}{c}{Order} &
    \multicolumn{2}{c}{$N_{\text{train}}$} &
    \multicolumn{2}{c}{$N_{\text{test}}$} & \multicolumn{2}{c}{Key Equations}  \\ \midrule
    \multirow{2}{*}{Burges} & 1d-C & \multicolumn{2}{c}{1D+Time} & \multicolumn{2}{c}{2}&\multicolumn{2}{c}{16384}&\multicolumn{2}{c}{12288} & \multicolumn{2}{c}{\multirow{2}{*}{
    $\frac{\partial\boldsymbol{u}}{\partial t}+\boldsymbol{u}\cdot\nabla\boldsymbol{u}-\nu\Delta\boldsymbol{u}=0$}}
    \\ & 2d-C & \multicolumn{2}{c}{2D+Time}& \multicolumn{2}{c}{2}&\multicolumn{2}{c}{98308}&\multicolumn{2}{c}{82690}
    \\ \midrule
    \multirow{4}{*}{Poisson} & 2d-C& \multicolumn{2}{c}{2D} & \multicolumn{2}{c}{2}&\multicolumn{2}{c}{12288}&\multicolumn{2}{c}{10240} & \multicolumn{2}{c}{$-\Delta \boldsymbol{u}=0$}
    \\ & 2d-CG& \multicolumn{2}{c}{2D} & \multicolumn{2}{c}{2}&\multicolumn{2}{c}{12288}&\multicolumn{2}{c}{10240}& \multicolumn{2}{c}{$-\Delta \boldsymbol{u}+k^2\boldsymbol{u}=f(x,y)$}
    \\ & 3d-CG & \multicolumn{2}{c}{3D}& \multicolumn{2}{c}{2}&\multicolumn{2}{c}{49152}&\multicolumn{2}{c}{40960}& \multicolumn{2}{c}{$-\mu_i\Delta \boldsymbol{u}+k_i^2\boldsymbol{u}=f(x,y,z), i=1,2$}
    \\ &2d-MS & \multicolumn{2}{c}{2D}& \multicolumn{2}{c}{2}&\multicolumn{2}{c}{12288}&\multicolumn{2}{c}{10329}& \multicolumn{2}{c}{$-\nabla(a(x)\nabla \boldsymbol{u})=f(x,y)$}\\ \midrule
    \multirow{3}{*}{Heat} & 2d-VC & \multicolumn{2}{c}{2D+Time}& \multicolumn{2}{c}{2}&\multicolumn{2}{c}{65536}&\multicolumn{2}{c}{49189}& \multicolumn{2}{c}{$\frac{\partial \boldsymbol{u}}{\partial t}-\nabla(a(x)\nabla \boldsymbol{u})=f(x,t)$}\\
    & 2d-MS & \multicolumn{2}{c}{2D+Time}& \multicolumn{2}{c}{2}&\multicolumn{2}{c}{65536}&\multicolumn{2}{c}{49189}& \multicolumn{2}{c}{$\frac{\partial \boldsymbol{u}}{\partial t}-\frac1{(500\pi)^2}\boldsymbol{u}_{xx}-\frac1{\pi^2}\boldsymbol{u}_{yy}=0$}
    \\ & 2d-CG& \multicolumn{2}{c}{2D+Time} & \multicolumn{2}{c}{2}&\multicolumn{2}{c}{65536}&\multicolumn{2}{c}{49152}& \multicolumn{2}{c}{$\frac{\partial \boldsymbol{u}}{\partial t}-\Delta \boldsymbol{u}=0$}
    \\ \midrule
    \multirow{2}{*}{NS} & 2d-C& \multicolumn{2}{c}{2D} & \multicolumn{2}{c}{2}&\multicolumn{2}{c}{14337}&\multicolumn{2}{c}{12378}& \multicolumn{2}{c}{\multirow{2}{*}{$\ \ \boldsymbol{u}\cdot\nabla\boldsymbol{u}+\nabla p-\frac{1}{Re}\Delta\boldsymbol{u}=0, \nabla\cdot\boldsymbol{u}=0$}}
    \\ & 2d-CG& \multicolumn{2}{c}{2D} & \multicolumn{2}{c}{2}&\multicolumn{2}{c}{14055}&\multicolumn{2}{c}{12007} 
    \\ \midrule
    \multirow{2}{*}{Wave} & 1d-C & \multicolumn{2}{c}{1D+Time}& \multicolumn{2}{c}{2}&\multicolumn{2}{c}{12288}&\multicolumn{2}{c}{10329}& \multicolumn{2}{c}{$\boldsymbol{u}_{tt}-4\boldsymbol{u}_{xx}=0$}
    \\ & 2d-CG & \multicolumn{2}{c}{2D+Time}& \multicolumn{2}{c}{2}&\multicolumn{2}{c}{49170}&\multicolumn{2}{c}{42194}& \multicolumn{2}{c}{$\left[\nabla^2-\frac1{c(x)}\frac{\partial^2}{\partial t^2}\right]u(x,t)=0$}
    \\ \midrule
    \multirow{2}{*}{Chaotic} & \multirow{2}{*}{GS} & \multicolumn{2}{c}{\multirow{2}{*}{2D+Time}}& \multicolumn{2}{c}{\multirow{2}{*}{2}}&\multicolumn{2}{c}{\multirow{2}{*}{65536}}&\multicolumn{2}{c}{\multirow{2}{*}{61780}} & \multicolumn{2}{c}{$\boldsymbol{u}_{t}=\varepsilon_1\Delta \boldsymbol{u}+b(1-\boldsymbol{u})-\boldsymbol{u}\boldsymbol{v}^2$}
    \\ 
    & & \multicolumn{2}{c}{} & \multicolumn{2}{c}{} & \multicolumn{2}{c}{} & \multicolumn{2}{c}{} & \multicolumn{2}{c}{$\boldsymbol{v}_{t}=\varepsilon_2\Delta \boldsymbol{v}-d\boldsymbol{v}+\boldsymbol{uv}^2$}\\ 
    \midrule
    High- & PNd & \multicolumn{2}{c}{5D}& \multicolumn{2}{c}{2}&\multicolumn{2}{c}{49152}&\multicolumn{2}{c}{67241}& \multicolumn{2}{c}{$-\Delta \boldsymbol{u}=\frac{\pi^2}4\sum_{i=1}^n\sin\left(\frac\pi2x_i\right)$}
    \\
    dim & HNd & \multicolumn{2}{c}{5D+Time}& \multicolumn{2}{c}{2}&\multicolumn{2}{c}{65537}&\multicolumn{2}{c}{49152}& \multicolumn{2}{c}{$\frac{\partial \boldsymbol{u}}{\partial t}=k\Delta \boldsymbol{u}+f(x,t)$}
    \\
    \bottomrule
  \end{tabular}
    \end{small}
  \end{threeparttable}
  \vspace{-10pt}
\end{table}

\subsection{Metrics}

In our experiments, we adopt the following three metrics. Training loss, rMAE and rMSE. And the training loss has been defined in Eq.~\eqref{equ:pinn_loss}. Here are the calculations for rMSE and rMAE:
\begin{equation}
    \text{rMAE: }\sqrt{\frac{\sum_{\boldsymbol{x}\in\mathcal{S}}\left|u_{\theta}(\boldsymbol{x})-u_{\ast}(\boldsymbol{x})\right|}{\sum_{\boldsymbol{x}\in\mathcal{S}}\left|u_{\ast}(\boldsymbol{x})\right|}}\quad\text{rMSE: }\sqrt{\frac{\sum_{\boldsymbol{x}\in\mathcal{S}}\left(u_{\theta}(\boldsymbol{x})-u_{\ast}(\boldsymbol{x})\right)^2}{\sum_{\boldsymbol{x}\in\mathcal{S}}\left(u_{\ast}(\boldsymbol{x})\right)^2}},
\end{equation}
where $u_{\ast}$ denotes the ground truth solution. Note that the model output and ground truth can be negative and positive, respectively. Thus, these two metrics could be larger than 1.

\subsection{Implementations}\label{appdix:implement}

For classical base models PINN \cite{Raissi2019PhysicsinformedNN}, QRes \cite{bu2021quadratic} and FLS \cite{wong2022learning}, we adopt the conventional configuration from previous papers \cite{zhao2023pinnsformer}. As for the latest model PINNsFormer \cite{zhao2023pinnsformer} and KAN \cite{liu2024kan}, we use their official code. Next, we will detail the implementations of optimization algorithms.

\paragraph{RoPINN}  As we described in the main text, we set the initial region size $r=10^{-4}$, past iteration number {$T_0\in\{5,10\}$} and only sample 1 point for each region at each iteration for all datasets. The corresponding analyses have been included in Figure \ref{fig:region} for $r$, Figure \ref{fig:sample} for sampling points and Appendix \ref{appdix:hyperparameter_P} for {$T_0$} to demonstrate the algorithm property under different hyperparameter settings.

In addition, our formalization for region optimization in Eq.~\eqref{equ:region} only involves the equation, initial and boundary conditions, where we can still calculate their loss values after random sampling in the extended region. This definition perfectly matches the setting of 1D-Reaction, 1D-Wave and Convection. However, in PINNacle \cite{hao2023pinnacle}, some tasks also involve the data loss term, such as the inverse problem (Appendix \ref{appdix:inverse_problem}), which means we can only obtain the correct values for several observed or pre-calculated points. Since these points are pre-selected, we cannot obtain their new values after sampling. Thus, we do not apply region sampling to these points in our experiments. Actually, the data loss term only involves the forward process of deep models, which is a pure data-driven paradigm and is distinct from the other PDE-derived terms in PINNs. Therefore, the previous methods, gPINN and vPINN, also do not consider the data loss term in their algorithms.

\paragraph{gPINN} For the first three benchmarks, we add the first-order derivatives for spatial and temporal dimensions as the regularization term. We also search the weights of regularization terms in $\{1, 0.1, 0.01\}$ and report the best results. As for the PINNacle, we report the results of canonical PINN following their paper \cite{hao2023pinnacle} and experiment with other base models by only replacing the model.

\paragraph{vPINN} We follow the code base in PINNacle, and implement it to the first three benchmarks. The test functions are set as Legendre polynomials and the test function number is set as $5$. The number of points used to compute the integral within domain $\Omega$ is set as 10, and the number of grids is set differently for each subtask, with values of \{4, 8, 16, 32\} for PINNacle and the same to other baselines for the first three benchmarks. 

\paragraph{Other baselines} In Table \ref{tab:combine} of the main text, we also experiment with the loss-reweighting method NTK~\cite{Wang2020WhenAW} and data-resampling method RAR \cite{wu2023comprehensive}. For NTK, we follow their official code and recalculate the neural tangent kernel to update loss weights every 10 iterations. And the kernel size is set as 300. As for RAR, we use the \emph{residual-based adaptive refinement with distribution} algorithm.

\section{Additional Results}\label{appdix:new_res}
In this section, we provide more results as a supplement to the main text, including additional hyperparameter analysis, new experiments and more showcases.

\subsection{Hyperparameter Sensitivity on {$T_0$}}\label{appdix:hyperparameter_P}
As we stated in Algorithm \ref{alg:RoPINN}, we adopt the gradient variance of past {$T_0$} iterations to approximate the sampling error defined in Theorem \ref{them:estimation_error}. In our experiments, we choose {$T_0$ from $\{5, 10\}$}, which can achieve consistently good and stable performance among different benchmarks and PDEs. To analyze the effect of this hyperparameter, we further add experiments with different choices in Figure \ref{fig:hyperparameter_P}.

\begin{figure*}[t]
\begin{center}
\vspace{-5pt}
\centerline{\includegraphics[width=\textwidth]{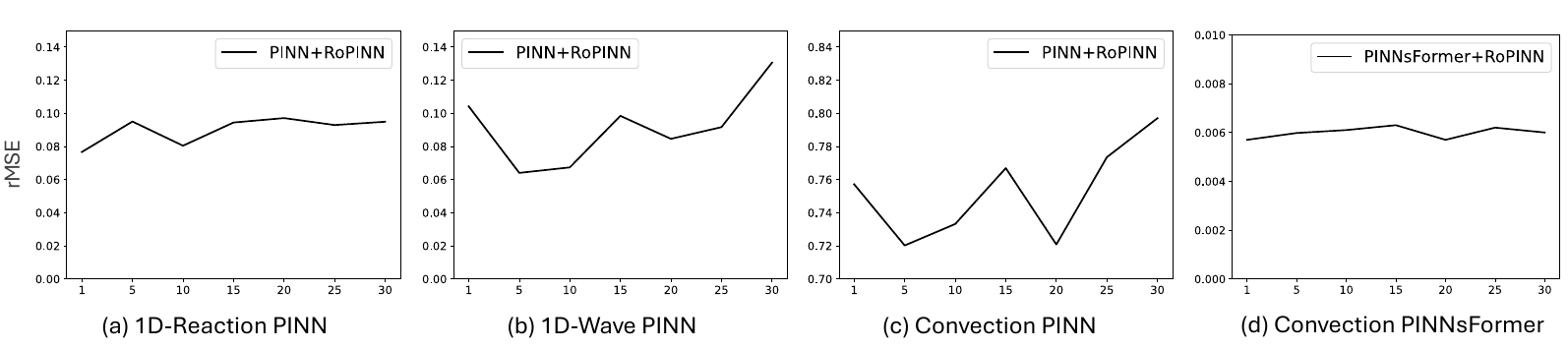}}
  \caption{Hyperparameter analyses for {$T_0$} in RoPINN based on PINN \cite{Raissi2019PhysicsinformedNN} and PINNsFormer \cite{zhao2023pinnsformer} on different benchmarks. We change {$T_0$} in $\{1,5,10,15,20,25, 30\}$ and record the rMSE.}
  \label{fig:hyperparameter_P}
\end{center}
\vspace{-25pt}
\end{figure*}

As shown in Figure \ref{fig:hyperparameter_P}, we can find that under all the choices in $\{1,5,10,15,20,25, 30\}$, RoPINN performs better than the vanilla PINN. Specifically, in both 1D-Reaction and 1D-Wave (Figure~\ref{fig:hyperparameter_P}(a-b)), the model performs quite stable under different choices of {$T_0$}. As for Convection in Figure~\ref{fig:hyperparameter_P}(c-d), the influence of {$T_0$} is relatively significant in PINN. This may caused by the deficiency of PINN in solving Convection, where all the PINN-based experiments fail to generate an accurate solution for Convection (rMSE>0.5, Table \ref{tab:mainres}). If we adopt a more powerful base model, such as PINNsFormer~\cite{zhao2023pinnsformer}, this sensitivity will be alleviated. Also, it is worth noticing that, even though in Convection, RoPINN surpasses the vanilla PINN under all hyperparameter settings of {$T_0$}. 

Besides, we can observe that the model performance slightly decreases when we set {$T_0$} with a relatively large value. This may come from the difference between parameters $\theta_t$ and $\theta_{t+29}$, which will make the gradient variance approximation less reliable (Eq.~\eqref{equ:derivation_for_approximate} in the Theorem \ref{them:error_control} proof).

\subsection{Experiments with Data Loss (Inverse Problem)}\label{appdix:inverse_problem}

\begin{wraptable}{r}{0.4\textwidth}
\vspace{-12pt}
\caption{Experiments on the Possion inverse problem (PInv) of PINNacle.}
\label{tab:data_loss_term_exp}
\vspace{-8pt}
\begin{center}
\begin{small}
\setlength{\tabcolsep}{3pt}
\begin{tabular}{l|cc}
\toprule
Method & rMAE & rMSE   \\ 
\midrule
PINN \cite{Raissi2019PhysicsinformedNN} & 7.3e-2 & 8.2e-2 \\
+gPINN \cite{yu2022gradient} & 7.3e-2\scalebox{0.75}{(-0.2\%)} & 8.0e-2\scalebox{0.75}{(2.1\%)} \\
+vPINN \cite{kharazmi2019variational} & 1.3e+0 & 1.8e+0 \\
+RoPINN & \textbf{6.7e-2}\scalebox{0.75}{(8.8\%)} & \textbf{7.3e-2}\scalebox{0.75}{(11.4\%)} \\
\bottomrule
\end{tabular}
\end{small}
\end{center}
\vspace{-15pt}
\end{wraptable}

As we stated in the implementations (Appendix \ref{appdix:implement}), RoPINN can also be applied to tasks with data loss. Here we also include an inverse problem in PINNacle to testify to the performance of RoPINN in this case, which requires the model to reconstruct the diffusion coefficients of the Poisson equation from observations on 2500 uniform grids with additional Gaussian noise.

As presented in Table \ref{tab:data_loss_term_exp}, in this task, RoPINN can also boost the performance of PINN with over 10\% in the rMSE metric and outperform the other baselines (gPINN and vPINN) that cannot bring improvements. Note that in this experiment, we failed to reproduce the performance of vPINN reported by PINNacle. Thus, we report the results of vPINN by directly running the official code in PINNacle.

\subsection{Standard Deviations}\label{sec:std}

Considering the limited resources, we repeat all the experiments on the first three typical benchmarks and our method on the PINNacle three times and other experiments one time. The official paper of PINNacle has provided the standard deviations for PINN, gPINN and vPINN on all benchmarks.

We summarize the standard deviations of PINN in Table~\ref{tab:Standard_Deviations}. As for other base models, the standard deviations of FLS, QRes and KAN are within 0.005 on 1D-Wave and Convection, and within 0.001 for 1D-Reaction. PINNsFormer's standard deviations are smaller than 0.001 for all three benchmarks.

\begin{table}[h]
\vspace{-5pt}
\caption{Standard deviations for canonical PINN on three typical benchmarks. The confidence for RoPINN achieving the best performance is over 99\% in all three benchmarks.}
\label{tab:Standard_Deviations}
\vspace{2pt}
\begin{center}
\begin{small}
\setlength{\tabcolsep}{8pt}
\begin{tabular}{l|cccc}
\toprule 
rMSE$\pm$Standard Deviations & 1D-Reaction & 1D-Wave & Convection & \\ 
\midrule
PINN \cite{Raissi2019PhysicsinformedNN} & 0.981$\pm$5e-4 & 0.335$\pm$1e-3 & 0.840$\pm$5e-4 \\
+gPINN \cite{yu2022gradient} & 0.978$\pm$3e-4 & 0.399$\pm$3e-3 & 0.935$\pm$3e-3 \\
+vPINN \cite{kharazmi2019variational} & 0.982$\pm$3e-3 & 0.173$\pm$1e-3 & 0.743$\pm$2e-3 \\
\midrule
+RoPINN (Ours) & \textbf{0.095}$\pm$8e-4 & \textbf{0.064}$\pm$1e-3 & \textbf{0.720}$\pm$2e-3 \\
\bottomrule
\end{tabular}
\end{small}
\end{center}
\vspace{-15pt}
\end{table}

\subsection{More Showcases}
As a supplement to the main text, we provide the showcases of RoPINN in Figure \ref{fig:showcase_full}. From these showcases, we can observe that RoPINN can consistently boost the model performance and benefit the solving process of boundaries, discontinuous phases and periodic patterns.

\begin{figure*}[h]
\begin{center}
\centerline{\includegraphics[width=\textwidth]{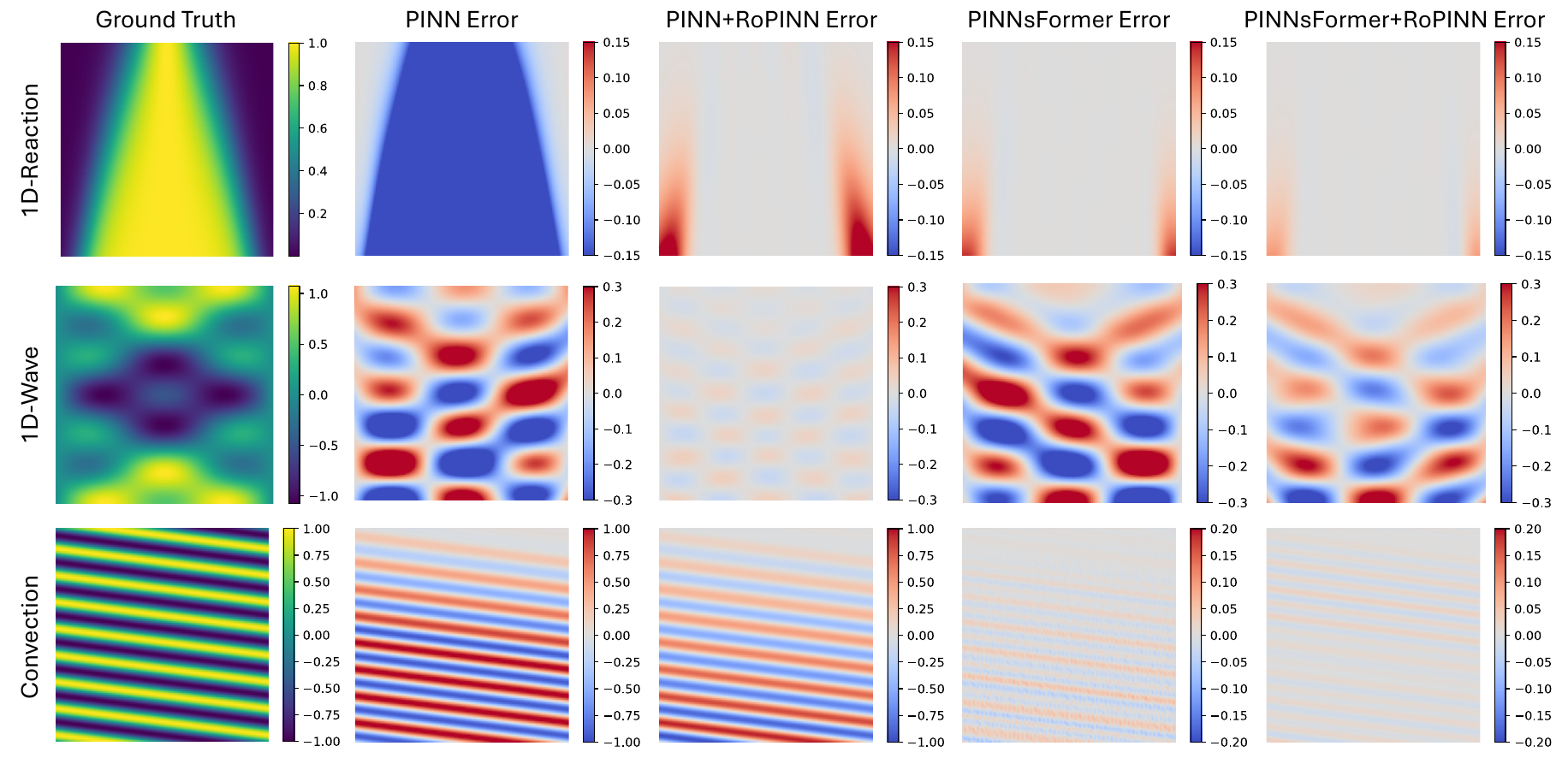}}
  \caption{Showcases of RoPINN on the first three datasets based on PINN and PINNsFormer.}
  \label{fig:showcase_full}
\end{center}
\vspace{-20pt}
\end{figure*}

\subsection{{Experiments with Advanced Quadrature Methods}}

\begin{wraptable}{r}{0.4\textwidth}
\vspace{-12pt}
\caption{Comparison between Monte Carlo approximation (\emph{Monte Carlo}) and Gaussian quadrature (\emph{Gaussian}) on 1D Reaction. rMSE is recorded.}
\label{tab:approximation_comparison}
\vspace{-8pt}
\begin{center}
\begin{small}
\setlength{\tabcolsep}{4pt}
\begin{tabular}{l|cc}
\toprule
PINN+RoPINN & Monte Carlo &  Gaussian  \\ 
\midrule
Sample 1 Point & \textbf{0.095} & 0.109 \\
Sample 4 Points & 0.066 & \textbf{0.059} \\
Sample 9 Points & 0.033 & \textbf{0.030} \\
\bottomrule
\end{tabular}
\end{small}
\end{center}
\vspace{-15pt}
\end{wraptable}

In our implementation, RoPINN employs a simple Monte Carlo sampling to approximate integral. Obviously, we can adopt more advanced quadrature methods, such as Gaussian quadrature \cite{jeffrey2008handbook}. 
Thus, we also experiment with 2D space Gaussian quadrature, which requires square number points and the one-point-sampling situation will degenerate to the center value. As shown in Table \ref{tab:approximation_comparison}, we can find that under our official setting (only sampling one point), Monte Carlo is better, while Gaussian quadrature is better in more points. Note that although Gaussian quadrature has the potential to achieve better performance, sampling more points may contradict our motivation of boosting PINNs without extra backpropagation or gradient calculation. Thus, we choose the Monte Carlo method, which works better under high-efficiency settings.

\section{Full Results on PINNacle}\label{appdix:full_pinnacle}
In Table \ref{tab:mainres} of the main text, due to the context limitation, we only present the proportion of improved tasks over the total tasks. Here we provide the complete results for 5 based models for PINNacle (16 different tasks) in Table \ref{tab:Pinnacle_part1} and Table \ref{tab:Pinnacle_part2}, where we can have the following observations:
\begin{itemize}
    \item \emph{RoPINN presents favorable generality in varied PDEs and base models.} As we described in Table \ref{tab:dataset_detail_pinnacle}, this benchmark contains of extensive physics phenomena. It is impressive that our proposed RoPINN can boost the performance of such extensive base models on a wide scope of PDEs, highlighting the generalizability of our algorithm.
    \item \emph{RoPINN is numerically stable and efficient for computation.} As a training paradigm, RoPINN does not require extra gradient calculation and also does not add sampled points, which makes the algorithm computation efficient. In contrast, other baselines may generate poor results or encounter NaN or OOM problems in some PDEs.
\end{itemize}

\begin{table}[htbp]
  \caption{Full results of gPINN~\cite{yu2022gradient}, vPINN~\cite{kharazmi2019variational} and RoPINN under different base models on PINNacle \cite{hao2023pinnacle} (16 different PDEs). A lower rMAE or rMSE with higher relative promotion indicates better performance. The promotion over vanilla is recorded in parentheses. For clarity, we highlight the value with \colorbox{tablelightblue}{blue} if it surpasses the vanilla PINN, \colorbox{lightgray}{gray} if it fails (over 10 times worse than the vanilla PINN), or is numerically unstable (NaN) or out-of-memory (OOM).}\label{tab:Pinnacle_part1}
  \vspace{2pt}
  \centering
  \begin{threeparttable}
  \begin{small}
  \renewcommand{\multirowsetup}{\centering}
  \setlength{\tabcolsep}{0.3pt}
  \scalebox{0.88}{
  \begin{tabular}{ccc|cc|cc|cc|cc}
    \toprule
    \multicolumn{1}{c}{\multirow{2}{*}{(Part I)}} & 
    \multicolumn{2}{c}{\multirow{2}{*}{PDE}} &
    \multicolumn{2}{c}{Vanilla} &
    \multicolumn{2}{c}{gPINN \cite{yu2022gradient}} &
    \multicolumn{2}{c}{vPINN \cite{kharazmi2019variational}} & \multicolumn{2}{c}{RoPINN (Ours)} \\ \cmidrule(lr){4-5}\cmidrule(lr){6-7}\cmidrule(lr){8-9} \cmidrule(lr){10-11}
     \multicolumn{1}{c}{} & 
    \multicolumn{2}{c}{} & rMAE & rMSE & rMAE & rMSE & rMAE & rMSE & rMAE & rMSE \\
    \toprule
    \multirow{16}{*}{\rotatebox{90}{PINN \cite{Raissi2019PhysicsinformedNN}}}
&  \multirow{2}{*}{Burges} & 1d-C &1.1e-2 &3.3e-2&\cellcolor{lightgray}{3.7e-1}&\cellcolor{lightgray}{5.1e-1}&4.0e-2\scalebox{0.75}{(-272.2\%)}&3.5e-1\scalebox{0.75}{(-952.3\%)}&\cellcolor{tablelightblue}{\textbf{9.1e-3}\scalebox{0.75}{(15.8\%)}}&\cellcolor{tablelightblue}{\textbf{1.4e-2}\scalebox{0.75}{(56.6\%)}} \\ & & 2d-C &4.5e-1 &5.2e-1&4.9e-1\scalebox{0.75}{(-8.7\%)}&5.4e-1\scalebox{0.75}{(-3.8\%)}&6.6e-1\scalebox{0.75}{(-46.4\%)}&6.4e-1\scalebox{0.75}{(-23.0\%)}&\cellcolor{tablelightblue}{\textbf{4.3e-1}\scalebox{0.75}{(4.0\%)}}&\cellcolor{tablelightblue}{\textbf{4.9e-1}\scalebox{0.75}{(5.3\%)}} \\ \cmidrule(lr){2-11}
    & \multirow{4}{*}{Poisson} & 2d-C&7.5e-1 &6.8e-1&7.7e-1\scalebox{0.75}{(-3.0\%)}&7.0e-1\scalebox{0.75}{(-4.0\%)}&\cellcolor{tablelightblue}{4.6e-1\scalebox{0.75}{(38.6\%)}}&\cellcolor{tablelightblue}{\textbf{4.9e-1}\scalebox{0.75}{(27.4\%)}}&\cellcolor{tablelightblue}{\textbf{4.1e-1}\scalebox{0.75}{(44.9\%)}}&\cellcolor{tablelightblue}{6.6e-1\scalebox{0.75}{(2.5\%)}}\\ & & 2d-CG&5.4e-1 &6.6e-1&7.4e-1\scalebox{0.75}{(-37.1\%)}&7.9e-1\scalebox{0.75}{(-21.0\%)}&\cellcolor{tablelightblue}{\textbf{2.4e-1}\scalebox{0.75}{(54.7\%)}}&\cellcolor{tablelightblue}{\textbf{2.9e-1}\scalebox{0.75}{(56.4\%)}}&\cellcolor{tablelightblue}{4.1e-1\scalebox{0.75}{(24.1\%)}}&\cellcolor{tablelightblue}{6.0e-1\scalebox{0.75}{(8.1\%)}}\\ & & 3d-CG &\textbf{4.2e-1} &5.0e-1&4.3e-1\scalebox{0.75}{(-4.2\%)}&5.2e-1\scalebox{0.75}{(-3.5\%)}&8.0e-1\scalebox{0.75}{(-91.4\%)}&7.4e-1\scalebox{0.75}{(-46.7\%)}&4.7e-1\scalebox{0.75}{(-11.9\%)}&\cellcolor{tablelightblue}{\textbf{4.6e-1}\scalebox{0.75}{(8.7\%)}}\\ & & 2d-MS &7.8e-1 &6.4e-1&\cellcolor{tablelightblue}{\textbf{6.7e-1}\scalebox{0.75}{(13.2\%)}}&\cellcolor{tablelightblue}{\textbf{6.2e-1}\scalebox{0.75}{(2.3\%)}}&9.6e-1\scalebox{0.75}{(-23.6\%)}&9.7e-1\scalebox{0.75}{(-52.1\%)}&\cellcolor{tablelightblue}{7.7e-1\scalebox{0.75}{(0.4\%)}}&\cellcolor{tablelightblue}{6.4e-1\scalebox{0.75}{(0.3\%)}}\\ \cmidrule(lr){2-11}
    & \multirow{3}{*}{Heat} &  2d-VC&1.2e+0 &9.8e-1&\cellcolor{lightgray}{1.9e+1}&\cellcolor{lightgray}{1.6e+1}&\cellcolor{tablelightblue}{8.8e-1\scalebox{0.75}{(26.9\%)}}&\cellcolor{tablelightblue}{9.4e-1\scalebox{0.75}{(4.5\%)}}&\cellcolor{tablelightblue}{\textbf{8.7e-1}\scalebox{0.75}{(27.6\%)}}&\cellcolor{tablelightblue}{\textbf{7.9e-1}\scalebox{0.75}{(19.7\%)}}
    \\ & & 2d-MS&4.7e-2 &6.9e-2&\cellcolor{lightgray}{1.0e+0}&\cellcolor{lightgray}{8.5e-1}&\cellcolor{lightgray}{9.3e-1}&\cellcolor{lightgray}{9.3e-1}&\cellcolor{tablelightblue}{\textbf{4.4e-2}\scalebox{0.75}{(6.4\%)}}&\cellcolor{tablelightblue}{\textbf{3.4e-2}\scalebox{0.75}{(51.1\%)}}\\ & & 2d-CG&2.7e-2 &2.3e-2&1.9e-1\scalebox{0.75}{(-588.8\%)}&2.1e-1\scalebox{0.75}{(-787.3\%)}&\cellcolor{lightgray}{3.1e+0}&\cellcolor{lightgray}{9.3e-1}&\cellcolor{tablelightblue}{\textbf{1.5e-2}\scalebox{0.75}{(43.5\%)}}&\cellcolor{tablelightblue}{\textbf{2.0e-2}\scalebox{0.75}{(12.6\%)}} \\ \cmidrule(lr){2-11}
    & \multirow{2}{*}{NS} &  2d-C&6.1e-2 &5.1e-2&6.4e-1\scalebox{0.75}{(-958.6\%)}&4.9e-1\scalebox{0.75}{(-877.5\%)}&2.0e-1\scalebox{0.75}{(-225.2\%)}&2.9e-1\scalebox{0.75}{(-478.1\%)}&\cellcolor{tablelightblue}{\textbf{4.1e-2}\scalebox{0.75}{(32.2\%)}}&\cellcolor{tablelightblue}{\textbf{4.2e-2}\scalebox{0.75}{(16.1\%)}}
    \\ & & 2d-CG&1.8e-1 &1.1e-1&4.2e-1\scalebox{0.75}{(-134.5\%)}&2.9e-1\scalebox{0.75}{(-167.6\%)}&9.9e-1\scalebox{0.75}{(-454.6\%)}&9.9e-1\scalebox{0.75}{(-812.2\%)}&\cellcolor{tablelightblue}{\textbf{1.5e-1}\scalebox{0.75}{(19.0\%)}}&\cellcolor{tablelightblue}{\textbf{9.8e-2}\scalebox{0.75}{(10.2\%)}}\\ \cmidrule(lr){2-11}
    & \multirow{2}{*}{Wave} & 1d-C&5.5e-1 &5.5e-1&7.0e-1\scalebox{0.75}{(-27.0\%)}&7.2e-1\scalebox{0.75}{(-31.9\%)}&1.4e+0\scalebox{0.75}{(-155.3\%)}&8.4e-1\scalebox{0.75}{(-53.5\%)}&\cellcolor{tablelightblue}{\textbf{3.8e-1}\scalebox{0.75}{(31.1\%)}}&\cellcolor{tablelightblue}{\textbf{3.9e-1}\scalebox{0.75}{(28.0\%)}}
    \\ & & 2d-CG&2.3e+0 &1.6e+0&\cellcolor{tablelightblue}{9.9e-1\scalebox{0.75}{(56.6\%)}}&\cellcolor{tablelightblue}{1.0e+0\scalebox{0.75}{(37.4\%)}}&\cellcolor{tablelightblue}{1.1e+0\scalebox{0.75}{(52.8\%)}}&\cellcolor{tablelightblue}{8.0e-1\scalebox{0.75}{(50.5\%)}}&\cellcolor{tablelightblue}{\textbf{7.1e-1}\scalebox{0.75}{(68.8\%)}}&\cellcolor{tablelightblue}{\textbf{7.9e-1}\scalebox{0.75}{(51.3\%)}}\\ \cmidrule(lr){2-11}
    & \multirow{1}{*}{Chaotic} & GS&2.1e-2 &9.4e-2&3.4e-2\scalebox{0.75}{(-61.0\%)}&9.5e-2\scalebox{0.75}{(-1.0\%)}&\cellcolor{lightgray}{8.9e-1}&\cellcolor{lightgray}{1.2e+0}&\cellcolor{tablelightblue}{\textbf{2.1e-2}\scalebox{0.75}{(2.1\%)}}&\cellcolor{tablelightblue}{\textbf{9.3e-2}\scalebox{0.75}{(0.4\%)}}
    \\ \cmidrule(lr){2-11}
    & High- & PNd&1.2e-3 &1.1e-3&2.6e-3\scalebox{0.75}{(-119.5\%)}&2.7e-3\scalebox{0.75}{(-137.7\%)}&\cellcolor{lightgray}{NaN}&\cellcolor{lightgray}{NaN}&\cellcolor{tablelightblue}{\textbf{6.7e-4}\scalebox{0.75}{(42.9\%)}}&\cellcolor{tablelightblue}{\textbf{6.4e-4}\scalebox{0.75}{(43.6\%)}}
    \\ & dim & HNd &1.2e-2 &5.3e-3&\cellcolor{tablelightblue}{3.6e-3\scalebox{0.75}{(71.0\%)}}&\cellcolor{tablelightblue}{4.6e-3\scalebox{0.75}{(13.6\%)}}&\cellcolor{lightgray}{NaN}&\cellcolor{lightgray}{NaN}&\cellcolor{tablelightblue}{\textbf{5.6e-4}\scalebox{0.75}{(95.5\%)}}&\cellcolor{tablelightblue}{\textbf{7.3e-4}\scalebox{0.75}{(86.2\%)}}
    \\
    \cmidrule(lr){2-11}
\multicolumn{5}{c|}{\textbf{Proportion of improved tasks}} & 18.8\% & 18.8\% & 25.0\% & 25.0\% & 93.8\% & 100.0\%\\
\midrule

\multirow{16}{*}{\rotatebox{90}{QRes \cite{bu2021quadratic}}}
&  \multirow{2}{*}{Burges} & 1d-C &5.8e-3 &2.0e-2&\cellcolor{lightgray}{3.6e-1}&\cellcolor{lightgray}{5.1e-1}&2.0e-2\scalebox{0.75}{(-241.0\%)}&8.1e-2\scalebox{0.75}{(-309.3\%)}&\cellcolor{tablelightblue}{\textbf{5.7e-3}\scalebox{0.75}{(1.4\%)}}&\cellcolor{tablelightblue}{\textbf{1.8e-2}\scalebox{0.75}{(6.8\%)}} \\ & & 2d-C &3.2e-1 &4.8e-1&4.9e-1\scalebox{0.75}{(-53.1\%)}&5.4e-1\scalebox{0.75}{(-11.1\%)}&1.1e+0\scalebox{0.75}{(-256.9\%)}&1.3e+0\scalebox{0.75}{(-161.2\%)}&\cellcolor{tablelightblue}{\textbf{3.2e-1}\scalebox{0.75}{(0.3\%)}}&\cellcolor{tablelightblue}{\textbf{4.7e-1}\scalebox{0.75}{(1.8\%)}} \\ \cmidrule(lr){2-11}
    & \multirow{4}{*}{Poisson} & 2d-C&2.9e-1 &6.8e-1&7.6e-1\scalebox{0.75}{(-159.4\%)}&7.2e-1\scalebox{0.75}{(-5.9\%)}&3.2e-1\scalebox{0.75}{(-7.6\%)}&\cellcolor{tablelightblue}{\textbf{3.0e-1}\scalebox{0.75}{(55.6\%)}}&\cellcolor{tablelightblue}{\textbf{2.9e-1}\scalebox{0.75}{(0.6\%)}}&7.0e-1\scalebox{0.75}{(-3.7\%)}\\ & & 2d-CG&3.1e-1 &7.4e-1&7.3e-1\scalebox{0.75}{(-136.9\%)}&7.8e-1\scalebox{0.75}{(-6.0\%)}&6.4e-1\scalebox{0.75}{(-108.9\%)}&\cellcolor{tablelightblue}{7.3e-1\scalebox{0.75}{(0.4\%)}}&\cellcolor{tablelightblue}{\textbf{2.9e-1}\scalebox{0.75}{(4.0\%)}}&\cellcolor{tablelightblue}{\textbf{7.2e-1}\scalebox{0.75}{(2.5\%)}}\\ & & 3d-CG &9.0e-2 &5.8e-1&5.9e-1\scalebox{0.75}{(-558.8\%)}&6.4e-1\scalebox{0.75}{(-10.7\%)}&8.0e-1\scalebox{0.75}{(-790.4\%)}&7.4e-1\scalebox{0.75}{(-28.4\%)}&\cellcolor{tablelightblue}{\textbf{8.9e-2}\scalebox{0.75}{(1.2\%)}}&\cellcolor{tablelightblue}{\textbf{5.6e-1}\scalebox{0.75}{(3.5\%)}}\\ & & 2d-MS &1.7e+0 &7.9e-1&\cellcolor{tablelightblue}{\textbf{5.5e-1}\scalebox{0.75}{(67.2\%)}}&\cellcolor{tablelightblue}{\textbf{5.0e-1}\scalebox{0.75}{(36.5\%)}}&\cellcolor{tablelightblue}{9.7e-1\scalebox{0.75}{(41.8\%)}}&9.8e-1\scalebox{0.75}{(-24.4\%)}&1.9e+0\scalebox{0.75}{(-13.1\%)}&9.1e-1\scalebox{0.75}{(-15.0\%)}\\ \cmidrule(lr){2-11}
    & \multirow{3}{*}{Heat} &  2d-VC&2.0e-1 &1.3e+0&\cellcolor{lightgray}{7.2e+0}&5.9e+0\scalebox{0.75}{(-363.0\%)}&3.3e-1\scalebox{0.75}{(-63.9\%)}&\cellcolor{tablelightblue}{\textbf{3.2e-1}\scalebox{0.75}{(74.9\%)}}&\cellcolor{tablelightblue}{\textbf{1.6e-1}\scalebox{0.75}{(19.6\%)}}&\cellcolor{tablelightblue}{1.1e+0\scalebox{0.75}{(16.8\%)}}
    \\ & & 2d-MS&1.7e-2 &1.4e-1&\cellcolor{lightgray}{4.4e-1}&3.5e-1\scalebox{0.75}{(-158.6\%)}&\cellcolor{lightgray}{4.0e-1}&3.8e-1\scalebox{0.75}{(-174.6\%)}&\cellcolor{tablelightblue}{\textbf{8.7e-3}\scalebox{0.75}{(48.5\%)}}&\cellcolor{tablelightblue}{\textbf{7.2e-2}\scalebox{0.75}{(47.2\%)}}\\ & & 2d-CG&\textbf{1.9e-2} &2.4e-2&1.1e-1\scalebox{0.75}{(-476.7\%)}&1.3e-1\scalebox{0.75}{(-440.1\%)}&\cellcolor{lightgray}{6.1e-1}&\cellcolor{lightgray}{7.0e-1}&2.0e-2\scalebox{0.75}{(-5.2\%)}&\cellcolor{tablelightblue}{\textbf{2.1e-2}\scalebox{0.75}{(9.9\%)}} \\ \cmidrule(lr){2-11}
    & \multirow{2}{*}{NS} &  2d-C&3.9e-3 &4.5e-2&\cellcolor{lightgray}{3.9e-1}&3.0e-1\scalebox{0.75}{(-565.9\%)}&\cellcolor{lightgray}{2.8e-1}&2.4e-1\scalebox{0.75}{(-427.7\%)}&\cellcolor{tablelightblue}{\textbf{2.1e-3}\scalebox{0.75}{(47.6\%)}}&\cellcolor{tablelightblue}{\textbf{3.4e-2}\scalebox{0.75}{(23.6\%)}}
    \\ & & 2d-CG&1.2e-2 &7.7e-2&\cellcolor{lightgray}{2.5e-1}&1.6e-1\scalebox{0.75}{(-110.9\%)}&\cellcolor{lightgray}{1.0e+0}&\cellcolor{lightgray}{1.0e+0}&\cellcolor{tablelightblue}{\textbf{1.2e-2}\scalebox{0.75}{(0.8\%)}}&\cellcolor{tablelightblue}{\textbf{7.6e-2}\scalebox{0.75}{(1.5\%)}}\\ \cmidrule(lr){2-11}
    & \multirow{2}{*}{Wave} & 1d-C&2.2e-1 &4.8e-1&7.0e-1\scalebox{0.75}{(-216.0\%)}&7.1e-1\scalebox{0.75}{(-47.7\%)}&\cellcolor{tablelightblue}{\textbf{4.0e-2}\scalebox{0.75}{(81.8\%)}}&\cellcolor{tablelightblue}{4.7e-1\scalebox{0.75}{(2.5\%)}}&\cellcolor{tablelightblue}{2.1e-1\scalebox{0.75}{(3.6\%)}}&\cellcolor{tablelightblue}{\textbf{4.4e-1}\scalebox{0.75}{(8.6\%)}}
    \\ & & 2d-CG&\textbf{1.5e-1} &\textbf{9.2e-1}&9.8e-1\scalebox{0.75}{(-572.5\%)}&1.0e+0\scalebox{0.75}{(-8.4\%)}&1.3e+0\scalebox{0.75}{(-773.5\%)}&1.2e+0\scalebox{0.75}{(-29.6\%)}&2.2e-1\scalebox{0.75}{(-53.6\%)}&1.3e+0\scalebox{0.75}{(-38.3\%)}\\ \cmidrule(lr){2-11}
    & \multirow{1}{*}{Chaotic} & GS&1.1e-2 &9.3e-2&2.0e-2\scalebox{0.75}{(-74.5\%)}&9.4e-2\scalebox{0.75}{(-0.3\%)}&\cellcolor{lightgray}{8.6e-1}&9.7e-1\scalebox{0.75}{(-937.8\%)}&\cellcolor{tablelightblue}{\textbf{9.9e-3}\scalebox{0.75}{(13.2\%)}}&\cellcolor{tablelightblue}{\textbf{9.2e-2}\scalebox{0.75}{(1.8\%)}}
    \\ \cmidrule(lr){2-11}
    & High- & PNd&1.5e-2 &5.3e-3&3.4e-2\scalebox{0.75}{(-132.2\%)}&3.3e-2\scalebox{0.75}{(-532.7\%)}&\cellcolor{lightgray}{NaN}&\cellcolor{lightgray}{NaN}&\cellcolor{tablelightblue}{\textbf{5.7e-3}\scalebox{0.75}{(60.5\%)}}&\cellcolor{tablelightblue}{\textbf{1.8e-3}\scalebox{0.75}{(66.5\%)}}
    \\ & dim & HNd &2.9e-2 &1.1e-2&\cellcolor{tablelightblue}{\textbf{2.1e-3}\scalebox{0.75}{(92.7\%)}}&\cellcolor{tablelightblue}{\textbf{2.1e-3}\scalebox{0.75}{(81.4\%)}}&\cellcolor{lightgray}{NaN}&\cellcolor{lightgray}{NaN}&\cellcolor{tablelightblue}{2.1e-2\scalebox{0.75}{(25.6\%)}}&\cellcolor{tablelightblue}{8.3e-3\scalebox{0.75}{(25.8\%)}}\\
    \cmidrule(lr){2-11}
\multicolumn{5}{c|}{\textbf{Proportion of improved tasks}} & 12.5\% & 12.5\% & 12.5\% & 25.0\% & 81.3\% & 81.3\%\\
\midrule
    \multirow{16}{*}{\rotatebox{90}{FLS \cite{wong2022learning}}}
&  \multirow{2}{*}{Burges} & 1d-C &9.0e-3 &1.4e-2&\cellcolor{lightgray}{3.3e-1}&\cellcolor{lightgray}{4.8e-1}&6.5e-2\scalebox{0.75}{(-620.9\%)}&\cellcolor{lightgray}{3.0e-1}&\cellcolor{tablelightblue}{\textbf{9.0e-3}\scalebox{0.75}{(0.2\%)}}&\cellcolor{tablelightblue}{\textbf{1.3e-2}\scalebox{0.75}{(8.6\%)}} \\ & & 2d-C &4.4e-1 &4.9e-1&4.9e-1\scalebox{0.75}{(-13.0\%)}&5.4e-1\scalebox{0.75}{(-10.1\%)}&1.3e+0\scalebox{0.75}{(-195.0\%)}&1.3e+0\scalebox{0.75}{(-171.4\%)}&\cellcolor{tablelightblue}{\textbf{4.3e-1}\scalebox{0.75}{(0.4\%)}}&\cellcolor{tablelightblue}{\textbf{4.9e-1}\scalebox{0.75}{(0.2\%)}} \\ \cmidrule(lr){2-11}
    & \multirow{4}{*}{Poisson} & 2d-C&\textbf{6.8e-1} &6.7e-1&7.3e-1\scalebox{0.75}{(-6.6\%)}&6.7e-1\scalebox{0.75}{(-1.2\%)}&9.0e-1\scalebox{0.75}{(-30.9\%)}&9.4e-1\scalebox{0.75}{(-40.5\%)}&7.2e-1\scalebox{0.75}{(-5.4\%)}&\cellcolor{tablelightblue}{\textbf{6.3e-1}\scalebox{0.75}{(5.2\%)}}\\ & & 2d-CG&6.3e-1 &7.0e-1&7.8e-1\scalebox{0.75}{(-22.7\%)}&8.1e-1\scalebox{0.75}{(-16.2\%)}&\cellcolor{tablelightblue}{6.0e-1\scalebox{0.75}{(5.6\%)}}&7.0e-1\scalebox{0.75}{(-0.6\%)}&\cellcolor{tablelightblue}{\textbf{5.9e-1}\scalebox{0.75}{(6.8\%)}}&\cellcolor{tablelightblue}{\textbf{6.9e-1}\scalebox{0.75}{(1.2\%)}}\\ & & 3d-CG &4.2e-1 &5.0e-1&4.5e-1\scalebox{0.75}{(-7.8\%)}&\cellcolor{tablelightblue}{\textbf{4.6e-1}\scalebox{0.75}{(9.5\%)}}&8.1e-1\scalebox{0.75}{(-95.5\%)}&7.5e-1\scalebox{0.75}{(-48.8\%)}&\cellcolor{tablelightblue}{\textbf{4.1e-1}\scalebox{0.75}{(1.3\%)}}&5.1e-1\scalebox{0.75}{(-0.5\%)}\\ & & 2d-MS &8.9e-1 &7.5e-1&\cellcolor{tablelightblue}{\textbf{5.1e-1}\scalebox{0.75}{(43.5\%)}}&\cellcolor{tablelightblue}{\textbf{4.7e-1}\scalebox{0.75}{(36.4\%)}}&9.0e-1\scalebox{0.75}{(-0.0\%)}&9.4e-1\scalebox{0.75}{(-25.7\%)}&\cellcolor{tablelightblue}{8.1e-1\scalebox{0.75}{(9.5\%)}}&\cellcolor{tablelightblue}{6.7e-1\scalebox{0.75}{(10.0\%)}}\\ \cmidrule(lr){2-11}
    & \multirow{3}{*}{Heat} &  2d-VC&1.5e+0 &1.3e+0&\cellcolor{lightgray}{3.4e+1}&\cellcolor{lightgray}{2.7e+1}&\cellcolor{tablelightblue}{1.2e+0\scalebox{0.75}{(16.9\%)}}&\cellcolor{tablelightblue}{1.1e+0\scalebox{0.75}{(11.5\%)}}&\cellcolor{tablelightblue}{\textbf{1.2e+0}\scalebox{0.75}{(21.1\%)}}&\cellcolor{tablelightblue}{\textbf{1.1e+0}\scalebox{0.75}{(17.2\%)}}
    \\ & & 2d-MS&\textbf{6.2e-2} &\textbf{4.8e-2}&\cellcolor{lightgray}{1.2e+0}&\cellcolor{lightgray}{9.0e-1}&\cellcolor{lightgray}{1.0e+0}&\cellcolor{lightgray}{9.9e-1}&1.4e-1\scalebox{0.75}{(-132.3\%)}&8.4e-2\scalebox{0.75}{(-74.2\%)}\\ & & 2d-CG&1.5e-2 &2.5e-2&9.9e-2\scalebox{0.75}{(-553.1\%)}&1.2e-1\scalebox{0.75}{(-394.9\%)}&\cellcolor{lightgray}{3.4e+0}&\cellcolor{lightgray}{4.4e+0}&\cellcolor{tablelightblue}{\textbf{1.2e-2}\scalebox{0.75}{(20.4\%)}}&\cellcolor{tablelightblue}{\textbf{2.5e-2}\scalebox{0.75}{(0.6\%)}} \\ \cmidrule(lr){2-11}
    & \multirow{2}{*}{NS} &  2d-C&8.3e-2 &6.9e-2&4.9e-1\scalebox{0.75}{(-483.0\%)}&3.7e-1\scalebox{0.75}{(-430.2\%)}&2.8e-1\scalebox{0.75}{(-236.9\%)}&2.5e-1\scalebox{0.75}{(-262.9\%)}&\cellcolor{tablelightblue}{\textbf{4.8e-2}\scalebox{0.75}{(42.4\%)}}&\cellcolor{tablelightblue}{\textbf{4.9e-2}\scalebox{0.75}{(29.3\%)}}
    \\ & & 2d-CG&1.8e-1 &1.2e-1&3.9e-1\scalebox{0.75}{(-119.7\%)}&2.7e-1\scalebox{0.75}{(-124.6\%)}&9.9e-1\scalebox{0.75}{(-458.2\%)}&1.0e+0\scalebox{0.75}{(-727.3\%)}&\cellcolor{tablelightblue}{\textbf{1.4e-1}\scalebox{0.75}{(19.8\%)}}&\cellcolor{tablelightblue}{\textbf{9.9e-2}\scalebox{0.75}{(18.0\%)}}\\ \cmidrule(lr){2-11}
    & \multirow{2}{*}{Wave} & 1d-C&4.0e-1 &4.1e-1&6.3e-1\scalebox{0.75}{(-57.7\%)}&6.4e-1\scalebox{0.75}{(-55.7\%)}&\cellcolor{tablelightblue}{\textbf{1.1e-2}\scalebox{0.75}{(97.2\%)}}&\cellcolor{tablelightblue}{\textbf{1.1e-2}\scalebox{0.75}{(97.3\%)}}&\cellcolor{tablelightblue}{3.8e-1\scalebox{0.75}{(5.3\%)}}&\cellcolor{tablelightblue}{3.9e-1\scalebox{0.75}{(4.9\%)}}
    \\ & & 2d-CG&2.5e+0 &2.4e+0&\cellcolor{tablelightblue}{\textbf{1.1e+0}\scalebox{0.75}{(58.7\%)}}&\cellcolor{tablelightblue}{1.9e+0\scalebox{0.75}{(22.4\%)}}&\cellcolor{tablelightblue}{2.1e+0\scalebox{0.75}{(19.1\%)}}&\cellcolor{tablelightblue}{2.0e+0\scalebox{0.75}{(15.8\%)}}&\cellcolor{tablelightblue}{1.7e+0\scalebox{0.75}{(31.6\%)}}&\cellcolor{tablelightblue}{\textbf{1.7e+0}\scalebox{0.75}{(30.4\%)}}\\ \cmidrule(lr){2-11}
    & \multirow{1}{*}{Chaotic} & GS&\textbf{2.1e-2} &9.4e-2&\cellcolor{lightgray}{2.6e-1}&2.4e-1\scalebox{0.75}{(-155.5\%)}&\cellcolor{lightgray}{9.7e-1}&\cellcolor{lightgray}{1.0e+0}&2.2e-2\scalebox{0.75}{(-5.8\%)}&\cellcolor{tablelightblue}{\textbf{9.0e-2}\scalebox{0.75}{(3.9\%)}}
    \\ \cmidrule(lr){2-11}
    & High- & PNd&8.9e-4 &1.0e-3&2.0e-3\scalebox{0.75}{(-123.0\%)}&2.3e-3\scalebox{0.75}{(-127.2\%)}&\cellcolor{lightgray}{NaN}&\cellcolor{lightgray}{NaN}&\cellcolor{tablelightblue}{\textbf{5.4e-4}\scalebox{0.75}{(39.6\%)}}&\cellcolor{tablelightblue}{\textbf{6.4e-4}\scalebox{0.75}{(37.5\%)}}
    \\ & dim & HNd &3.7e-3 &4.0e-3&9.6e-3\scalebox{0.75}{(-160.8\%)}&9.9e-3\scalebox{0.75}{(-144.2\%)}&\cellcolor{lightgray}{NaN}&\cellcolor{lightgray}{NaN}&\cellcolor{tablelightblue}{\textbf{1.3e-3}\scalebox{0.75}{(63.8\%)}}&\cellcolor{tablelightblue}{\textbf{1.4e-3}\scalebox{0.75}{(66.1\%)}}\\
    \cmidrule(lr){2-11}
\multicolumn{5}{c|}{\textbf{Proportion of improved tasks}} & 12.5\% & 18.8\% & 25.0\% & 18.8\% & 81.3\% & 87.5\%\\
    \bottomrule
  \end{tabular}}
    \end{small}
  \end{threeparttable}
\end{table}

\begin{table}[t]
\vspace{-5pt}
  \caption{Full results of gPINN~\cite{yu2022gradient}, vPINN~\cite{kharazmi2019variational} and RoPINN under different base models on PINNacle \cite{hao2023pinnacle} (16 different PDEs). A lower rMAE or rMSE with higher relative promotion indicates better performance. The promotion over vanilla is recorded in parentheses. For clarity, we highlight the value with \colorbox{tablelightblue}{blue} if it surpasses the vanilla PINN, \colorbox{lightgray}{gray} if it fails (over 10 times worse than the vanilla PINN), or is numerically unstable (NaN) or out-of-memory (OOM). For PINNsFormer, it fails in most of the tasks due to the OOM problem. We omit these tasks in calculating proportion.}\label{tab:Pinnacle_part2}
  \vspace{2pt}
  \centering
  \begin{threeparttable}
  \begin{small}
  \renewcommand{\multirowsetup}{\centering}
  \setlength{\tabcolsep}{0.8pt}
  \scalebox{0.88}{
  \begin{tabular}{ccc|cc|cc|cc|cc}
    \toprule
    \multicolumn{1}{c}{\multirow{2}{*}{(Part II)}} & 
    \multicolumn{2}{c}{\multirow{2}{*}{PDE}} &
    \multicolumn{2}{c}{Vanilla} &
    \multicolumn{2}{c}{gPINN \cite{yu2022gradient}} &
    \multicolumn{2}{c}{vPINN \cite{kharazmi2019variational}} & \multicolumn{2}{c}{RoPINN (Ours)} \\ \cmidrule(lr){4-5}\cmidrule(lr){6-7}\cmidrule(lr){8-9} \cmidrule(lr){10-11}
     \multicolumn{1}{c}{} & 
    \multicolumn{2}{c}{} & rMAE & rMSE & rMAE & rMSE & rMAE & rMSE & rMAE & rMSE \\
    \toprule
    \multirow{16}{*}{\rotatebox{90}{PINNsformer \cite{zhao2023pinnsformer}}}&  \multirow{2}{*}{Burges} & 1d-C &9.3e-3 &1.4e-2&\cellcolor{lightgray}{6.5e-1}&\cellcolor{lightgray}{6.7e-1}&\cellcolor{lightgray}{5.5e-1}&\cellcolor{lightgray}{5.7e-1}&\cellcolor{tablelightblue}{\textbf{8.0e-3}\scalebox{0.75}{(13.3\%)}}&\cellcolor{tablelightblue}{\textbf{1.0e-2}\scalebox{0.75}{(26.5\%)}} \\ & & \cellcolor{lightgray}{2d-C} &\cellcolor{lightgray}OOM &\cellcolor{lightgray}OOM&\cellcolor{lightgray}OOM\scalebox{0.75}{(-\%)}&\cellcolor{lightgray}OOM\scalebox{0.75}{(-\%)}&\cellcolor{lightgray}OOM\scalebox{0.75}{(-\%)}&\cellcolor{lightgray}OOM\scalebox{0.75}{(-\%)}&\cellcolor{lightgray}OOM\scalebox{0.75}{(-\%)}&\cellcolor{lightgray}OOM\scalebox{0.75}{(-\%)} \\ \cmidrule(lr){2-11}
    & \multirow{4}{*}{Poisson} & 2d-C&7.2e-1 &6.6e-1&1.0e+0\scalebox{0.75}{(-39.4\%)}&1.0e+0\scalebox{0.75}{(-52.7\%)}&1.0e+0\scalebox{0.75}{(-39.4\%)}&1.0e+0\scalebox{0.75}{(-52.7\%)}&\cellcolor{tablelightblue}{\textbf{6.9e-1}\scalebox{0.75}{(4.1\%)}}&\cellcolor{tablelightblue}{\textbf{6.2e-1}\scalebox{0.75}{(5.9\%)}}\\ & & 2d-CG&5.4e-1 &6.3e-1&1.0e+0\scalebox{0.75}{(-86.6\%)}&1.0e+0\scalebox{0.75}{(-59.4\%)}&1.0e+0\scalebox{0.75}{(-86.6\%)}&1.0e+0\scalebox{0.75}{(-59.4\%)}&\cellcolor{tablelightblue}{\textbf{4.7e-1}\scalebox{0.75}{(13.0\%)}}&\cellcolor{tablelightblue}{\textbf{5.5e-1}\scalebox{0.75}{(12.1\%)}}\\ & & \cellcolor{lightgray}{3d-CG} &\cellcolor{lightgray}OOM &\cellcolor{lightgray}OOM&\cellcolor{lightgray}OOM\scalebox{0.75}{(-\%)}&\cellcolor{lightgray}OOM\scalebox{0.75}{(-\%)}&\cellcolor{lightgray}OOM\scalebox{0.75}{(-\%)}&\cellcolor{lightgray}OOM\scalebox{0.75}{(-\%)}&\cellcolor{lightgray}OOM\scalebox{0.75}{(-\%)}&\cellcolor{lightgray}OOM\scalebox{0.75}{(-\%)}\\ & & 2d-MS &1.3e+0 &1.1e+0&\cellcolor{lightgray}{OOM}&\cellcolor{lightgray}{OOM}&\cellcolor{lightgray}{OOM}&\cellcolor{lightgray}{OOM}&\cellcolor{tablelightblue}{\textbf{7.2e-1}\scalebox{0.75}{(42.4\%)}}&\cellcolor{tablelightblue}{\textbf{6.0e-1}\scalebox{0.75}{(45.0\%)}}\\ \cmidrule(lr){2-11}
    & \multirow{3}{*}{Heat} &  \cellcolor{lightgray}{2d-VC}&\cellcolor{lightgray}OOM &\cellcolor{lightgray}OOM&\cellcolor{lightgray}OOM\scalebox{0.75}{(-\%)}&\cellcolor{lightgray}OOM\scalebox{0.75}{(-\%)}&\cellcolor{lightgray}OOM\scalebox{0.75}{(-\%)}&\cellcolor{lightgray}OOM\scalebox{0.75}{(-\%)}&\cellcolor{lightgray}OOM\scalebox{0.75}{(-\%)}&\cellcolor{lightgray}OOM\scalebox{0.75}{(-\%)}
    \\ & & \cellcolor{lightgray}{2d-MS}&\cellcolor{lightgray}OOM &\cellcolor{lightgray}OOM&\cellcolor{lightgray}OOM\scalebox{0.75}{(-\%)}&\cellcolor{lightgray}OOM\scalebox{0.75}{(-\%)}&\cellcolor{lightgray}OOM\scalebox{0.75}{(-\%)}&\cellcolor{lightgray}OOM\scalebox{0.75}{(-\%)}&\cellcolor{lightgray}OOM\scalebox{0.75}{(-\%)}&\cellcolor{lightgray}OOM\scalebox{0.75}{(-\%)}\\ & & \cellcolor{lightgray}{2d-CG}&\cellcolor{lightgray}OOM &\cellcolor{lightgray}OOM&\cellcolor{lightgray}OOM\scalebox{0.75}{(-\%)}&\cellcolor{lightgray}OOM\scalebox{0.75}{(-\%)}&\cellcolor{lightgray}OOM\scalebox{0.75}{(-\%)}&\cellcolor{lightgray}OOM\scalebox{0.75}{(-\%)}&\cellcolor{lightgray}OOM\scalebox{0.75}{(-\%)}&\cellcolor{lightgray}OOM\scalebox{0.75}{(-\%)} \\ \cmidrule(lr){2-11}
    & \multirow{2}{*}{NS} &  \cellcolor{lightgray}{2d-C}&\cellcolor{lightgray}OOM &\cellcolor{lightgray}OOM&\cellcolor{lightgray}OOM\scalebox{0.75}{(-\%)}&\cellcolor{lightgray}OOM\scalebox{0.75}{(-\%)}&\cellcolor{lightgray}OOM\scalebox{0.75}{(-\%)}&\cellcolor{lightgray}OOM\scalebox{0.75}{(-\%)}&\cellcolor{lightgray}OOM\scalebox{0.75}{(-\%)}&\cellcolor{lightgray}OOM\scalebox{0.75}{(-\%)}
    \\ & & 2d-CG&1.0e-1 &7.0e-2&7.4e-1\scalebox{0.75}{(-626.0\%)}&7.2e-1\scalebox{0.75}{(-939.9\%)}&1.0e+0\scalebox{0.75}{(-870.9\%)}&\cellcolor{lightgray}{1.0e+0}&\cellcolor{tablelightblue}{\textbf{9.8e-2}\scalebox{0.75}{(4.7\%)}}&\cellcolor{tablelightblue}{\textbf{6.3e-2}\scalebox{0.75}{(9.0\%)}}\\ \cmidrule(lr){2-11}
    & \multirow{2}{*}{Wave} & 1d-C&5.0e-1 &5.1e-1&8.3e-1\scalebox{0.75}{(-63.5\%)}&8.5e-1\scalebox{0.75}{(-65.5\%)}&5.2e-1\scalebox{0.75}{(-3.5\%)}&5.3e-1\scalebox{0.75}{(-3.4\%)}&\cellcolor{tablelightblue}{\textbf{4.7e-1}\scalebox{0.75}{(7.4\%)}}&\cellcolor{tablelightblue}{\textbf{4.8e-1}\scalebox{0.75}{(6.8\%)}}
    \\ & & \cellcolor{lightgray}{2d-CG}&\cellcolor{lightgray}OOM &\cellcolor{lightgray}OOM&\cellcolor{lightgray}OOM\scalebox{0.75}{(-\%)}&\cellcolor{lightgray}OOM\scalebox{0.75}{(-\%)}&\cellcolor{lightgray}OOM\scalebox{0.75}{(-\%)}&\cellcolor{lightgray}OOM\scalebox{0.75}{(-\%)}&\cellcolor{lightgray}OOM\scalebox{0.75}{(-\%)}&\cellcolor{lightgray}OOM\scalebox{0.75}{(-\%)}\\ \cmidrule(lr){2-11}
    & \multirow{1}{*}{Chaotic} & \cellcolor{lightgray}{GS}&\cellcolor{lightgray}OOM &\cellcolor{lightgray}OOM&\cellcolor{lightgray}OOM\scalebox{0.75}{(-\%)}&\cellcolor{lightgray}OOM\scalebox{0.75}{(-\%)}&\cellcolor{lightgray}OOM\scalebox{0.75}{(-\%)}&\cellcolor{lightgray}OOM\scalebox{0.75}{(-\%)}&\cellcolor{lightgray}OOM\scalebox{0.75}{(-\%)}&\cellcolor{lightgray}OOM\scalebox{0.75}{(-\%)}
    \\ \cmidrule(lr){2-11}
    & High & \cellcolor{lightgray}{PNd}&\cellcolor{lightgray}OOM &\cellcolor{lightgray}OOM&\cellcolor{lightgray}OOM\scalebox{0.75}{(-\%)}&\cellcolor{lightgray}OOM\scalebox{0.75}{(-\%)}&\cellcolor{lightgray}OOM\scalebox{0.75}{(-\%)}&\cellcolor{lightgray}OOM\scalebox{0.75}{(-\%)}&\cellcolor{lightgray}OOM\scalebox{0.75}{(-\%)}&\cellcolor{lightgray}OOM\scalebox{0.75}{(-\%)}
    \\ & dim & \cellcolor{lightgray}{HNd} &\cellcolor{lightgray}OOM &\cellcolor{lightgray}OOM&\cellcolor{lightgray}OOM\scalebox{0.75}{(-\%)}&\cellcolor{lightgray}OOM\scalebox{0.75}{(-\%)}&\cellcolor{lightgray}OOM\scalebox{0.75}{(-\%)}&\cellcolor{lightgray}OOM\scalebox{0.75}{(-\%)}&\cellcolor{lightgray}OOM\scalebox{0.75}{(-\%)}&\cellcolor{lightgray}OOM\scalebox{0.75}{(-\%)}\\
    \cmidrule(lr){2-11}
    \multicolumn{5}{c|}{\textbf{Proportion of improved tasks}} & 0.0\% & 0.0\% & 0.0\% & 0.0\% & 100.0\% & 100.0\% \\
    \midrule
    \multirow{16}{*}{\rotatebox{90}{KAN \cite{liu2024kan}}}
&  \multirow{2}{*}{Burges} & 1d-C &2.7e-1 &5.4e-1&5.6e-1\scalebox{0.75}{(-111.8\%)}&6.7e-1\scalebox{0.75}{(-24.0\%)}&3.9e-1\scalebox{0.75}{(-47.5\%)}&5.5e-1\scalebox{0.75}{(-1.3\%)}&\cellcolor{tablelightblue}{\textbf{2.4e-1}\scalebox{0.75}{(8.2\%)}}&\cellcolor{tablelightblue}{\textbf{4.8e-1}\scalebox{0.75}{(12.5\%)}} \\ & & 2d-C &8.7e-1 &9.4e-1&1.5e+0\scalebox{0.75}{(-71.8\%)}&1.7e+0\scalebox{0.75}{(-83.5\%)}&\cellcolor{lightgray}{NaN}&\cellcolor{lightgray}{NaN}&\cellcolor{tablelightblue}{\textbf{8.3e-1}\scalebox{0.75}{(4.6\%)}}&\cellcolor{tablelightblue}{\textbf{9.2e-1}\scalebox{0.75}{(2.4\%)}} \\ \cmidrule(lr){2-11}
    & \multirow{4}{*}{Poisson} & 2d-C&7.3e-1 &6.8e-1&7.4e-1\scalebox{0.75}{(-1.4\%)}&6.8e-1\scalebox{0.75}{(-0.1\%)}&\cellcolor{tablelightblue}{\textbf{1.6e-1}\scalebox{0.75}{(78.2\%)}}&\cellcolor{tablelightblue}{\textbf{1.5e-1}\scalebox{0.75}{(78.4\%)}}&\cellcolor{tablelightblue}{6.3e-1\scalebox{0.75}{(13.4\%)}}&6.8e-1\scalebox{0.75}{(-0.1\%)}\\ & & 2d-CG&7.7e-1 &8.1e-1&8.1e-1\scalebox{0.75}{(-5.5\%)}&8.6e-1\scalebox{0.75}{(-6.9\%)}&\cellcolor{tablelightblue}{6.5e-1\scalebox{0.75}{(14.9\%)}}&\cellcolor{tablelightblue}{7.3e-1\scalebox{0.75}{(9.8\%)}}&\cellcolor{tablelightblue}{\textbf{6.5e-1}\scalebox{0.75}{(15.8\%)}}&\cellcolor{tablelightblue}{\textbf{6.9e-1}\scalebox{0.75}{(14.0\%)}}\\ & & 3d-CG &7.5e-1 &1.4e+0&2.0e+0\scalebox{0.75}{(-163.6\%)}&1.8e+0\scalebox{0.75}{(-25.5\%)}&\cellcolor{lightgray}{NaN}&\cellcolor{lightgray}{NaN}&\cellcolor{tablelightblue}{\textbf{7.2e-1}\scalebox{0.75}{(4.1\%)}}&\cellcolor{tablelightblue}{\textbf{6.8e-1}\scalebox{0.75}{(51.8\%)}}\\ & & 2d-MS &9.5e-1 &9.8e-1&1.0e+0\scalebox{0.75}{(-5.0\%)}&1.0e+0\scalebox{0.75}{(-2.7\%)}&9.9e-1\scalebox{0.75}{(-4.6\%)}&1.0e+0\scalebox{0.75}{(-2.3\%)}&\cellcolor{tablelightblue}{\textbf{9.2e-1}\scalebox{0.75}{(3.4\%)}}&\cellcolor{tablelightblue}{\textbf{9.6e-1}\scalebox{0.75}{(2.1\%)}}\\ \cmidrule(lr){2-11}
    & \multirow{3}{*}{Heat} &  2d-VC&1.9e+1 &1.5e+1&\cellcolor{tablelightblue}{3.3e+0\scalebox{0.75}{(82.5\%)}}&\cellcolor{tablelightblue}{2.7e+0\scalebox{0.75}{(81.8\%)}}&\cellcolor{tablelightblue}{\textbf{8.9e-1}\scalebox{0.75}{(95.3\%)}}&\cellcolor{tablelightblue}{\textbf{9.1e-1}\scalebox{0.75}{(94.0\%)}}&\cellcolor{tablelightblue}{6.0e+0\scalebox{0.75}{(68.7\%)}}&\cellcolor{tablelightblue}{4.7e+0\scalebox{0.75}{(68.4\%)}}
    \\ & & 2d-MS&1.4e+0 &1.1e+0&2.0e+0\scalebox{0.75}{(-39.8\%)}&1.4e+0\scalebox{0.75}{(-28.4\%)}&\cellcolor{tablelightblue}{8.3e-1\scalebox{0.75}{(42.8\%)}}&\cellcolor{tablelightblue}{7.5e-1\scalebox{0.75}{(31.4\%)}}&\cellcolor{tablelightblue}{\textbf{7.4e-1}\scalebox{0.75}{(49.1\%)}}&\cellcolor{tablelightblue}{\textbf{6.9e-1}\scalebox{0.75}{(36.8\%)}}\\ & & 2d-CG&5.0e-1 &5.3e-1&8.2e-1\scalebox{0.75}{(-63.1\%)}&7.3e-1\scalebox{0.75}{(-37.2\%)}&8.1e-1\scalebox{0.75}{(-60.8\%)}&9.0e-1\scalebox{0.75}{(-69.7\%)}&\cellcolor{tablelightblue}{\textbf{4.9e-1}\scalebox{0.75}{(1.7\%)}}&\cellcolor{tablelightblue}{\textbf{5.0e-1}\scalebox{0.75}{(4.9\%)}} \\ \cmidrule(lr){2-11}
    & \multirow{2}{*}{NS} &  2d-C&5.0e-1 &4.1e-1&8.1e-1\scalebox{0.75}{(-60.6\%)}&6.7e-1\scalebox{0.75}{(-65.8\%)}&\cellcolor{tablelightblue}{\textbf{2.1e-1}\scalebox{0.75}{(57.7\%)}}&\cellcolor{tablelightblue}{\textbf{1.7e-1}\scalebox{0.75}{(58.6\%)}}&\cellcolor{tablelightblue}{4.1e-1\scalebox{0.75}{(18.2\%)}}&\cellcolor{tablelightblue}{3.9e-1\scalebox{0.75}{(2.9\%)}}
    \\ & & 2d-CG&9.9e-1 &6.4e-1&\cellcolor{tablelightblue}{5.4e-1\scalebox{0.75}{(45.6\%)}}&\cellcolor{tablelightblue}{4.1e-1\scalebox{0.75}{(36.7\%)}}&1.0e+0\scalebox{0.75}{(-1.1\%)}&1.0e+0\scalebox{0.75}{(-56.1\%)}&\cellcolor{tablelightblue}{\textbf{4.6e-1}\scalebox{0.75}{(53.7\%)}}&\cellcolor{tablelightblue}{\textbf{3.5e-1}\scalebox{0.75}{(44.8\%)}}\\ \cmidrule(lr){2-11}
    & \multirow{2}{*}{Wave} & 1d-C&4.7e-1 &4.7e-1&8.6e-1\scalebox{0.75}{(-85.2\%)}&8.8e-1\scalebox{0.75}{(-89.2\%)}&\cellcolor{tablelightblue}{\textbf{1.9e-1}\scalebox{0.75}{(58.4\%)}}&\cellcolor{tablelightblue}{\textbf{2.1e-1}\scalebox{0.75}{(55.8\%)}}&\cellcolor{tablelightblue}{4.6e-1\scalebox{0.75}{(1.7\%)}}&\cellcolor{tablelightblue}{4.6e-1\scalebox{0.75}{(1.9\%)}}
    \\ & & 2d-CG&1.7e+0 &1.6e+0&\cellcolor{tablelightblue}{1.1e+0\scalebox{0.75}{(37.4\%)}}&\cellcolor{tablelightblue}{1.1e+0\scalebox{0.75}{(33.4\%)}}&\cellcolor{lightgray}{NaN}&\cellcolor{lightgray}{NaN}&\cellcolor{tablelightblue}{\textbf{9.8e-1}\scalebox{0.75}{(43.2\%)}}&\cellcolor{tablelightblue}{\textbf{9.6e-1}\scalebox{0.75}{(41.1\%)}}\\ \cmidrule(lr){2-11}
    & \multirow{1}{*}{Chaotic} & GS&\cellcolor{lightgray}NaN &\cellcolor{lightgray}NaN&\cellcolor{tablelightblue}1.1e+0\scalebox{0.75}{(-\%)}&\cellcolor{tablelightblue}9.4e-1\scalebox{0.75}{(-\%)}&\cellcolor{tablelightblue}8.5e-1\scalebox{0.75}{(-\%)}&\cellcolor{tablelightblue}9.6e-1\scalebox{0.75}{(-\%)}&\cellcolor{tablelightblue}\textbf{7.5e-1}\scalebox{0.75}{(-\%)}&\cellcolor{tablelightblue}\textbf{7.0e-1}\scalebox{0.75}{(-\%)}
    \\ \cmidrule(lr){2-11}
    & High- & PNd&4.6e-4 &5.6e-4&2.8e-3\scalebox{0.75}{(-515.3\%)}&3.5e-3\scalebox{0.75}{(-518.4\%)}&\cellcolor{lightgray}{NaN}&\cellcolor{lightgray}{NaN}&\cellcolor{tablelightblue}{\textbf{3.7e-4}\scalebox{0.75}{(19.9\%)}}&\cellcolor{tablelightblue}{\textbf{5.3e-4}\scalebox{0.75}{(5.5\%)}}
    \\ & dim & HNd &1.9e-3 &5.4e-3&\cellcolor{tablelightblue}{\textbf{5.9e-4}\scalebox{0.75}{(68.9\%)}}&\cellcolor{tablelightblue}{8.0e-4\scalebox{0.75}{(85.3\%)}}&\cellcolor{lightgray}{NaN}&\cellcolor{lightgray}{NaN}&\cellcolor{tablelightblue}{1.5e-3\scalebox{0.75}{(20.8\%)}}&\cellcolor{tablelightblue}{\textbf{6.5e-4}\scalebox{0.75}{(88.0\%)}}\\
    \cmidrule(lr){2-11}
\multicolumn{5}{c|}{\textbf{Proportion of improved tasks}} & 31.3\% & 31.3\% & 43.8\% & 43.8\% & 100.0\% & 93.8\% \\
    \bottomrule
  \end{tabular}}
    \end{small}
  \end{threeparttable}
\end{table}

\section{Related Work}\label{appdix:relative_work}
This section will discuss some related works as a supplement to Section \ref{sec:Preliminaries}. We will first discuss some PINN research and then we will also clarify some looking similar but completely distinct topics.

\paragraph{PINN optimizers} As we mentioned in the second paragraph of the introduction, many previous works focus on developing efficient and effective deep-model optimizers for PINNs \cite{yu2022gradient,rathore2024challenges}, which may help the optimization process tackle the ill-conditioned Hessian matrix or naturally balance multiple loss terms \cite{yao2023multiadam}. As we formalized in Algorithm \ref{alg:RoPINN}, RoPINN is not restricted to a certain optimizer. The researchers can easily replace the Adam \cite{DBLP:journals/corr/KingmaB14} or L-BFGS \cite{liu1989limited} with other advanced optimizers. Since we mainly focus on the objective function, these works are orthogonal to us.

\paragraph{Numerical differentiation for objective functions} In addition to the regularization or variational-based methods, some researchers attempt to replace the automatic differentiation with numerical approximations~\cite{sirignano2018dgm,gao2021phygeonet}, which can tackle the expensive computation cost caused by calculating high-order derivatives. However, this paradigm does not attempt to change the objection function definition, just focuses on the calculation of point optimization PINN loss, which is distinct from our proposed region optimization paradigm.

\paragraph{{Sampling-based methods}} Strategies for sampling collocation points perform an important role in training PINN models \cite{wang2023expert}. Previous sampling-based methods mainly focus on accumulating collocation points to high-residual areas \cite{wu2023comprehensive,r3_wang_2023} or considering the temporal causality \cite{wang2024respecting}, whose theoretical analyses are usually based on the quadrature theorem \cite{mishra2023estimates}. Distinct from these methods, RoPINN is motivated by the optimization deficiency of PINN models and can be seamlessly integrated with sampling-based methods with significant promotion (Table \ref{tab:combine}), indicating that RoPINN works orthogonally to sampling methods. Besides, the theoretical analysis of RoPINN also starts from the optimization perspective, which reveals that one key advancement of RoPINN is a better balance between optimization error and generalization error (Theorem \ref{theorem:region_opt_error}).

In addition, RoPINN is also distinct from data augmentation or adversarial training techniques in the following aspects: (1) Theorem difference: although our proposed practical algorithm is based on Monte Carlo sampling in a region, the underlying theoretical support and insights are a region-based objective function (Theorem \ref{theorem:region_opt}). (2) Implementation difference: In our algorithm, not only is the input changed, but the objective is also correspondingly changed. Thus, this paper is foundationally different from augmentation and adversarial training in that the ground truth label is fixed. Our design is tailored to the physics-informed loss function of PINNs, where we can accurately calculate the equation residual at any point within the input domain.

\section{Limitations}\label{appdix:limitation}

This paper presents region optimization as a new PINN training paradigm and provides both theoretical analysis and practical algorithms, supported by extensive experiments. However, there are still several limitations. In the theoretical analysis, we assume that the canonical loss function is $L$-Lipschitz-$\beta$-smooth, which may not be guaranteed in practice. Besides, RoPINN involves several hyperparameters, such as initial region size $r$, and number of past iterations {$T_0$}. Although we have studied the sensitivity w.r.t.~them in Figures \ref{fig:region} and Appendix~\ref{appdix:hyperparameter_P} and demonstrate that they are easy to tune in most cases, we still need to adjust them for better performance in practice. 

According to our experiments and theorems, we provide some recipes for hyperparameter tuning in the following, which may be helpful to the usage of RoPINN:
\begin{itemize}
    \item As shown in Figure \ref{fig:region}, region size $r$ will be progressively adjusted by RoPINN. Setting $r$ in $[10^{-6},10^{-4}]$ can work well. According to Theorem \ref{theorem:region_opt_error}, the choice of $r$ should balance optimization and generalization, which may be inherently decided by the PDE smoothness.
    \item As analyzed in Figures \ref{fig:sample} and \ref{fig:efficiency}, sampling 1-30 points can gain consistent promotion but will linearly increase the computation costs. Following our default setting (sampling one point) can already achieve a competitive performance in a wide range of PDEs.
    \item As presented in Figure \ref{fig:hyperparameter_P}, number of past iterations $T_0$ is easy to tune in $[1,20]$. Setting $T_0\in\{5,10\}$ can be a good choice, which has been widely verified in our paper.
    \item Some hyperparameter tuning tools, such as Weights and Bias (Wandb\footnote{\href{https://github.com/wandb/wandb}{https://github.com/wandb/wandb}}), may mitigate this limitation to some extent, which has already been used in previous related work \cite{rathore2024challenges}.
\end{itemize}

\vspace{-5pt}
\section{Broader Impacts}\label{appdix:broader}

In this paper, we develop a new region optimization training paradigm for PINNs and provide both theorem analyses and practical algorithms. This new perspective may inspire the subsequent research of PINNs, especially rethinking the canonical objective function. In addition, our proposed RoPINN shows favorable efficiency and generalizes well in different base models and PDEs, which can be used to boost the precision of PINNs and generally benefit the downstream tasks, such as physics phenomenon simulation, biological property analysis, etc. Since we purely focus on the training algorithm of PINNs, there are no potential negative social impacts or ethical risks.

\end{document}